%%%%%%%% ICML 2023 EXAMPLE LATEX SUBMISSION FILE %%%%%%%%%%%%%%%%%

\documentclass[10pt]{article} % For LaTeX2e
%\usepackage{tmlr}
% If accepted, instead use the following line for the camera-ready submission:
\usepackage[accepted]{tmlr}
% To de-anonymize and remove mentions to TMLR (for example for posting to preprint servers), instead use the following:
%\usepackage[preprint]{tmlr}

% Recommended, but optional, packages for figures and better typesetting:
\usepackage{microtype}
\usepackage{graphicx}
\usepackage{subfigure}
\usepackage{booktabs} % for professional tables

% hyperref makes hyperlinks in the resulting PDF.
% If your build breaks (sometimes temporarily if a hyperlink spans a page)
% please comment out the following usepackage line and replace
% \usepackage{icml2023} with \usepackage[nohyperref]{icml2023} above.
\usepackage{hyperref}
\usepackage{url}

% Attempt to make hyperref and algorithmic work together better:

% For theorems and such
\usepackage{amsmath}
\usepackage{amssymb}
\usepackage{mathtools}
\usepackage{amsthm}

% if you use cleveref..
\usepackage[capitalize,noabbrev]{cleveref}

%%%%%%%%%%%%%%%%%%%%%%%%%%%%%%%%
% THEOREMS
%%%%%%%%%%%%%%%%%%%%%%%%%%%%%%%%
\theoremstyle{plain}
\newtheorem{theorem}{Theorem}
\newtheorem{proposition}{Proposition}
\newtheorem{lemma}{Lemma}
\newtheorem{corollary}{Corollary}
\newtheorem{example}{Example}
\newtheorem{definition}{Definition}
\newtheorem{assumption}{Assumption}
\newtheorem{remark}{Remark}

% Todonotes is useful during development; simply uncomment the next line
%    and comment out the line below the next line to turn off comments
% \usepackage[disable,textsize=tiny]{todonotes}
% \usepackage[textsize=tiny]{todonotes}
\usepackage{commands}

\title{Fast Kernel Methods for Generic Lipschitz Losses \\ via $p$-Sparsified Sketches}

% Authors must not appear in the submitted version. They should be hidden
% as long as the tmlr package is used without the [accepted] or [preprint] options.
% Non-anonymous submissions will be rejected without review.

\author{\name Tamim El Ahmad \email tamim.elahmad@telecom-paris.fr \\
      \addr LTCI, Télécom Paris,\\
      IP Paris, France
      \AND
      \name Pierre Laforgue \email pierre.laforgue@unimi.it \\
      \addr Department of Computer Science,\\
      University of Milan, Italy
      \AND
      \name Florence d'Alché-Buc \email florence.dalche@telecom-paris.fr \\
      \addr LTCI, Télécom Paris,\\
      IP Paris, France}

% The \author macro works with any number of authors. Use \AND 
% to separate the names and addresses of multiple authors.

  % Insert correct month for camera-ready version
 % Insert correct year for camera-ready version
 % Insert correct link to OpenReview for camera-ready version

\begin{document}

\maketitle

\begin{abstract}
Kernel methods are learning algorithms that enjoy solid theoretical foundations while suffering from important computational limitations.
Sketching, which consists in looking for solutions among a subspace of reduced dimension, is a well-studied approach to alleviate these computational burdens.
However, statistically-accurate sketches, such as the Gaussian one, usually contain few null entries, such that their application to kernel methods and their non-sparse Gram matrices remains slow in practice.
In this paper, we show that sparsified Gaussian (and Rademacher) sketches still produce theoretically-valid approximations while allowing for important time and space savings thanks to an efficient \emph{decomposition trick}.
To support our method, we derive excess risk bounds for both single and multiple output kernel problems, with generic Lipschitz losses, hereby providing new guarantees for a wide range of applications, from robust regression to multiple quantile regression.
Our theoretical results are complemented with experiments showing the empirical superiority of our approach over state-of-the-art sketching methods.
\end{abstract}

\section{Introduction}

Kernel methods hold a privileged position in machine learning, as they allow to tackle a large variety of learning tasks in a unique and generic framework, that of Reproducing Kernel Hilbert Spaces (RKHSs), while enjoying solid theoretical foundations \citep{steinwart2008support,scholkopf2018book}.
From scalar-valued to multiple output regression \citep{micchelli2005learning, carmeli2006vector, carmeli2010vector}, these approaches play a central role in nonparametric learning, showing a great flexibility.
However, when implemented naively, kernel methods raise major issues in terms of time and memory complexity, and are often thought of as limited to ``fat data'', i.e., datasets of reduced size but with a large number of input features.
One way to scale up kernel methods are the Random Fourier Features \citep{rahimi2007, rudi2017generalization, sriperumbudur2015optimal, li2021unified}, but they mainly apply to shift-invariant kernels.
Another popular approach is to use sketching methods, first exemplified with Nyström approximations \citep{WilliamsNystromNIPS2000, drineas2005,bach2013sharp, rudi2015less}.
Indeed, sketching has recently gained a lot of interest in the kernel community due to its wide applicability \citep{Yang2017, lacotte2019highdimensional, Kpotufe2020, lacotte2020adaptive, gazagnadou2021textttridgesketch} and its spectacular successes when combined to preconditioners and GPUs \citep{meanti2020kernel}.

Sketching as a random projection method \citep{mahoney2011randomized, woodruff14} is rooted in the Johnson-Lindenstrauss lemma \citep{johnson1984extensions}, and consists in working in reduced dimension subspaces while benefiting from theoretical guarantees.
Learning with sketched kernels has mostly been studied in the case of scalar-valued regression, in particular in the emblematic case of Kernel Ridge Regression \citep{elalaoui_NIPS2015, avron2017faster, Yang2017, chen2021accumulations}.
For several identified sketching types (e.g., Gaussian, Randomized Orthogonal Systems, adaptive sub-sampling), the resulting estimators come with theoretical guarantees under the form of the minimax optimality of the empirical approximation error.
However, an important blind spot of the above works is their limitation to the square loss.
Few papers go beyond Ridge Regression, and usually exclusively with sub-sampling schemes \citep{zhang2012,li2016fast,della2021regularized}.
In this work, we derive excess risk bounds for sketched kernel machines with generic Lipschitz-continuous losses, under standard assumption on the sketch matrix, solving an open problem from \citet{Yang2017}.
Doing so, we provide theoretical guarantees for a wide range of applications, from robust regression, based either on the Huber loss \citep{huber1964robust} or $\epsilon$-insensitive losses \citep{steinwart-epsilon}, to quantile regression, tackled through the minimization of the pinball loss \citep{koenker2005quantile}.
Further, we address this question in the general context of single and multiple output regression.
Learning vector-valued functions using matrix-valued kernels \citep{micchelli2005learning} has been primarily motivated by multi-task learning.
Although equivalent in functional terms to scalar-valued kernels on pairs of input and tasks \citep[Proposition~5]{hein-bousquet}, matrix-valued kernels \citep{Alvarez2012} provide a way to define a larger variety of statistical learning problems by distinguishing the role of the inputs from that of the tasks.
The computational and memory burden is naturally heavier in multi-task/multi-output regression, as the dimension of the output space plays an inevitable role, making approximation methods for matrix-valued kernel machines a crucial issue.
To our knowledge, this work is the first to address this problem under the angle of sketching.
It is however worth mentioning \citet{Baldassare2012}, who explored spectral filtering approaches for multiple output regression, and the generalization of Random Fourier Features to operator-valued kernels by \citet{brault2016random}.

An important challenge when sketching kernel machines is that the sketched items, e.g., the Gram matrix, are usually dense.
Plain sketching matrices, such as the Gaussian one, then induce significantly more calculations than sub-sampling methods, which can be computed by applying a mask over the Gram matrix.
Sparse sketching matrices \citep{Clarkson2017, Nelson2013, cohen2016, Derezinski2021SparseSW} constitute an important line of research to reduce complexity while keeping good statistical properties when applied to sparse matrices (e.g., matrices induced by graphs), which is not the case of a Gram matrix.
Motivated by these considerations, we analyze a family of sketches, unified under the name of $p$-sparsified sketches, that achieve interesting trade-offs between statistical accuracy (Gaussian sketches can be recovered as a particular case of $p$-sparsified sketches) and computational efficiency.
The $p$-sparsified sketches are also memory-efficient, as they do not require computing and storing the full Gram matrix upfront.
Besides theoretical analysis, we provide extensive experiments showing the superiority of $p$-sparsified sketches over SOTA approaches such as accumulation sketches \citep{chen2021accumulations}.

\par{\bf Contributions.}
Our goal is to provide a framework to speed-up both scalar and matrix-valued kernel methods which is as general as possible while maintaining good theoretical guarantees.
For that purpose, we present three contributions, which may be of independent interest.
\begin{itemize}
\item We derive excess risk bounds for sketched kernel machines with generic Lipschitz-continuous losses, both in the scalar and multiple output cases. We hereby solve an open problem from \citet{Yang2017}, and provide a first analysis to the sketching of vector-valued kernel methods.
\item We show that sparsified Gaussian and Rademacher sketches provide valid approximations when applied to kernel methods. They maintain theoretical guarantees while inducing important space and computation savings, as opposed to plain sketches.
\item We discuss how to learn these new sketched kernel machines, by means of an approximated feature map. We finally present experiments using Lipschitz-continuous losses, such as robust and quantile regression, on both synthetic and real-world datasets, supporting the relevance of our approach.
\end{itemize}

\par{\bf Notation.}
For any matrix $A \in \reals^{m \times p}$, $A^\dagger$ is its pseudo-inverse, $\|A\|_{\op}$ its operator norm, $A_{i:} \in \reals^p$ its $i$-th row, and $A_{:j} \in \reals^m$ its $j$-th column.
The identity matrix of dimension $d$ is $I_d$.
For a couple of random variables $(X,Y) \in \bmX \times \bmY$ with distribution $P$, $P_X$ is the marginal distribution of $X$.
For $f : \bmX \longrightarrow \bmY$, we use $\E\left[f\right] = \E_{P_X}\left[f(X)\right]$, $\E\left[\ell_f\right] = \E_P\left[\ell\left(f(X),Y\right)\right]$ and $\E_n[\ell_f] = \frac{1}{n} \sum_{i=1}^n \ell(f(x_i), y_i)$ for any function $\ell : \bmY \times \bmY \longrightarrow \reals$.

\section{Sketching Kernels Machines with Lipschitz-Continuous Losses}

In this section, we derive excess risk bounds for sketched kernel machines with generic Lipschitz losses, for both scalar and multiple output regression.

%%%%%%%%%%%%%%%%%%
%                %
%     SCALAR     %
%                %
%%%%%%%%%%%%%%%%%%

\subsection{Scalar Kernel Machines}

We consider a general regression framework, from an input space $\bmX$ to some scalar output space $\bmY \subseteq \reals$.
Given a loss function $\ell : \bmY \times \bmY \rightarrow \reals$ such that $z \mapsto \ell(z, y)$ is proper, lower semi-continuous and convex for every $y$, our goal is to estimate $f^* = \arginf_{f \in \mathcal{H}}  \E_{(X, Y) \sim P} \left[\ell\left(f(X),Y\right)\right]$, where $\mathcal{H} \subset \mathcal{Y}^\mathcal{X}$ is a hypothesis set, and $P$ is a joint distribution over $\bmX \times \bmY$.
Since $P$ is usually unknown, we assume that we have access to a training dataset $\{(x_i, y_i)\}_{i=1}^n$ composed of i.i.d. realisations drawn from $P$.
We recall the definitions of a scalar-valued kernel and its RKHS \citep{Aronszajn1950}.
\smallskip

\begin{definition}[Scalar-valued kernel]
A scalar-valued kernel is a symmetric function $k : \bmX \times \bmX \rightarrow \reals$ such that for all $n \in \mathbb{N}$, and any $\left(x_i\right)_{i=1}^n \in \bmX^n$, $\left(\alpha_i\right)_{i=1}^n \in \reals^n$, we have $\sum_{i,j=1}^n \alpha_i\, k\left(x_i, x_j\right)\alpha_j \geq 0$.
\end{definition}
\smallskip

\begin{theorem}[RKHS]
    Let $k$ be a kernel on $\bmX$. Then, there exists a unique Hilbert space of functions $\bmH_k \subset \reals^{\bmX}$ such that $k\left(\cdot, x\right) \in \bmH_k$ for all $x \in \bmX$, and such that we have $h\left(x\right) = \langle h, k\left(\cdot, x\right)\rangle_{\bmH_k}$ for any $\left(h, x\right) \in \bmH_k \times \bmX$.
\end{theorem}

A kernel machine computes a proxy for $f^*$ by solving
\begin{equation}\label{eq:scalar_primal_pb}
    \underset{f \in \bmH_k}{\operatorname{min}} ~ \frac{1}{n} \sum_{i=1}^n \ell(f(x_i),y_i) + \frac{\lambda_n}{2} \| f \|_{\bmH_k}^2\,,
\end{equation}
where $\lambda_n > 0$ is a regularization parameter.
By the representer theorem \citep{kimeldorf1971some, scholkopf2001generalized}, the solution to Problem \eqref{eq:scalar_primal_pb} is given by $\hat{f}_n = \sum_{i=1}^n \hat{\alpha}_i\, k(\cdot,x_i)$, with $\hat{\alpha} \in \reals^{n}$ the solution to
\begin{equation}\label{eq:scalar_primal_pb_rep}
    \min _{\alpha \in \reals^{n}} \frac{1}{n} \sum_{i=1}^n \ell([K \alpha]_i,y_i) + \frac{\lambda_n}{2} \alpha^\top K \alpha\,,
\end{equation}
where $K \in \reals^{n \times n}$ is the kernel Gram matrix such that $K_{ij} = k(x_i, x_j)$.
\begin{definition}[Regularized Kernel-based Sketched Estimator]\label{def:sketched-est}
    Given a matrix $S \in \reals^{s \times n}$, with $s \ll n$, sketching consists in imposing the substitution $\alpha = S^\top \gamma$ in the empirical risk minimization problem stated in \cref{eq:scalar_primal_pb_rep}.
    We then obtain an optimisation problem of reduced size on $\gamma$, that yields the sketched estimator $\tilde{f}_s = \sum_{i=1}^n [S^\top \tilde{\gamma}]_i\,k(\cdot,x_i)$, where $\tilde{\gamma} \in \mathbb{R}^s$ is a solution to
    \begin{equation}\label{eq:sketched_scalar_primal_pb}
        \underset{\gamma \in \reals^s}{\operatorname{min}} ~ \frac{1}{n} \sum_{i=1}^n \ell([K S^\top \gamma]_i,y_i) + \frac{\lambda_n}{2} \gamma^\top S K S^\top \gamma\,.
    \end{equation}
\end{definition}
In practice, one usually obtains the matrix $S$ by sampling it from a random distribution. The literature is rich in examples of distributions that can be used to generate the sketching matrix $S$.
For instance, the sub-sampling matrices, where each line of $S$ is sampled from $I_n$, have been widely studied in the context of kernel methods.
They are computationally efficient from both time and space perspectives, and yield the so-called Nystr\"om approach \citep{WilliamsNystromNIPS2000, rudi2015less}.
More complex distributions, such as Randomized Orthogonal System (ROS) sketching or Gaussian sketch matrices, have also been considered \citep{Yang2017}.
In this work, we first give a general theoretical analysis of regularized kernel-based sketched estimators for any $K$-satisfiable sketch matrix (\cref{def:Ksatis}).
Then, we introduce the $p$-sparsified sketches and prove their $K$-satisfiablity, as well as their relevance for kernel methods in terms of statistical and computational trade-off.

Works about sketched kernel machines usually assess the performance of $\tilde{f}_s$ by upper bounding its squared $L^2(\mathbb{P}_N)$ error, i.e., $(1/n)\sum_{i=1}^n (\tilde{f}_s(x_i) - f_{\bmH_k}(x_i))^2$, where $f_{\bmH_k}$ is the minimizer of the true risk over $\bmH_k$, supposed to be attained \citep[Equation~2]{Yang2017}, or through its (relative) recovery error $\|\tilde{f}_s - \hat{f}_n\|_{\bmH_k} / \|\hat{f}_n\|_{\bmH_k}$, see \mbox{\citet[Theorem~3]{lacotte2020adaptive}}.
In contrast, we focus on the excess risk of $\tilde{f}_s$, the original quantity of interest.
As revealed by the proof of \Cref{theorem:excess_bound_worst}, the approximation error of the excess risk can be controlled in terms of the $L^2(\mathbb{P}_N)$ error, and we actually recover the results from \citet{Yang2017} when we particularize to the square loss with bounded outputs (second bound in \Cref{theorem:excess_bound_worst}).
Furthermore, studying the excess risk allows to better position the performances of $\tilde{f}_s$ among the known off-the-shelf kernel-based estimators available for the targeted problem.
To achieve this study, we rely on the key notion of $K$-satisfiability for a sketch matrix \citep{Yang2017,pmlr-v99-liu19a, chen2021accumulations}.

Let $K/n = U D U^\top$ be the eigendecomposition of the Gram matrix, where $D=\operatorname{diag}\left(\mu_{1}, \ldots, \mu_{n}\right)$ stores the eigenvalues of $K/n$ in decreasing order.
Let $\delta_n^2$ be the critical radius of $K/n$, i.e., the lowest value such that $\psi(\delta_n) =
(\frac{1}{n} \sum_{i=1}^n \min(\delta_n^2, \mu_i))^{1/2} \leq \delta_n^2$.
The existence and uniqueness of $\delta_n^2$ is guaranteed for any RKHS associated with a positive definite kernel \citep{bartlett2006convexity,Yang2017}.
%The existence and uniqueness of $\delta_n^2$ is guaranteed for any unit ball in the RKHS associated with a positive definite kernel kernel class \citep{bartlett2006convexity,Yang2017}.
%
Note that $\delta_n^2$ is similar to the parameter $\tilde{\varepsilon}^2$ used in \citet{Yang_NIPS2012_621bf66d} to analyze Nyström approximation for kernel methods.
We define the statistical dimension of $K$ as $d_{n} = \min ~ \big\{ j \in \{1, \ldots, n\} \colon \mu_j \leq \delta_n^2 \big\}$, with $d_{n} = n$ if no such index $j$ exists.
\smallskip

\begin{definition}[$K$-satisfiability, \citealt{Yang2017}]\label{def:Ksatis}
Let $c>0$ be independent of $n$, $U_{1} \in \reals^{n \times d_{n}}$ and $U_{2} \in \reals^{n \times\left(n-d_{n}\right)}$ be the left and right blocks of the matrix $U$ previously defined, and $D_{2}=\diag\left(\mu_{d_{n}+1}, \ldots, \mu_{n}\right)$.
A matrix $S$ is said to be $K$-satisfiable for $c$ if we have
\begin{equation}\label{eq:Ksatis}
    \left\|\left(S U_{1}\right)^\top S U_{1} - I_{d_{n}}\right\|_{\op} \leq 1 / 2\,, \quad \text{and} \quad \left\|S U_{2} D_{2}^{1 / 2}\right\|_{\op} \leq c \delta_n\,.
\end{equation}
\end{definition}

Roughly speaking, a matrix is $K$-satisfiable if it defines an isometry on the largest eigenvectors of $K$, and has a small operator norm on the smallest eigenvectors.
For random sketching matrices, it is common to show $K$-satisfiability with high probability under some condition on the sketch size $s$, see e.g., \citet[Lemma~5]{Yang2017} for Gaussian sketches, \citet[Theorem~8]{chen2021accumulations} for Accumulation sketches.
In \Cref{sec:p-S}, we show similar results for $p$-sparsified sketches.

To derive our excess risk bounds, we place ourselves in the framework of \citet{li2021unified}, see Sections 2.1 and 3 therein.
Namely, we assume that the true risk is minimized over $\bmH_k$ at $f_{\bmH_k} \coloneqq \argmin_{f \in \bmH_k} ~ \E\left[ \ell\left(f\left(X\right), Y\right) \right]$.
The existence of $f_{\bmH_k}$ is standard in the literature \citep{caponnetto2007optimal, rudi2017generalization, Yang2017}, and implies that $f_{\bmH_k}$ has bounded norm, see e.g., \citet[Remark~2]{rudi2017generalization}.
%
% Since $\bmH_k$ is possibly infinite-dimensional, this existence is not ensured, however if we restrict the space to a ball of a radius $R$ priorly fixed, it becomes true.
%
Similarly to \citet{li2021unified}, we also assume that estimators returned by Empirical Risk Minimization have bounded norms.
Hence, all estimators considered in the present paper belong to some ball of finite radius $R$.
However, we highlight that our results do not require prior knowledge on $R$, and hold uniformly for all finite $R$.
%
% Besides, we assume that all estimators obtained by empirical risk minimisation and approaching $f_{\bmH_k}$ have a bounded RKHS norm.
%
As a consequence, we consider without loss of generality as hypothesis set the unit ball $\bmB\left(\bmH_k\right)$ in $\bmH_k$, up to an \textit{a posteriori} rescaling of the bounds by $R$ to recover the general case.
\begin{assumption}\label{hyp:existence_solution}
    The true risk is minimized at $f_{\bmH_k}$.
\end{assumption}
\begin{assumption}\label{hyp:unit_ball}
    The hypothesis set considered is $\bmB\left(\bmH_k\right)$.
\end{assumption}
\begin{assumption}\label{hyp:Lipschitz_loss}
    For all $y \in \bmY$, $z \mapsto \ell(z, y)$ is $L$-Lipschitz.
     % over $\bmH_k(\bmX) = \left\{f\left(x\right) \colon f \in \bmH_k,\, x \in \bmX\right\}$.
\end{assumption}
\begin{assumption}\label{hyp:bounded_kernel}
    For all $x, x' \in \bmX$, we have $k(x, x') \leq \kappa$.
\end{assumption}
\begin{assumption}\label{hyp:Ksatis}
    The sketch $S$ is $K$-satisfiable with constant $c > 0$.
\end{assumption}
%
%Note that if \Cref{hyp:existence_solution} holds, we can reduce the hypothesis set without loss of generality from $\bmH_k$ to $\{f \in \bmH_k \colon \|f\|_{\bmH_k} \le \|f_{\bmH_k}\|_{\bmH_k}\}$.
%
%Up to a rescaling, we may even focus exclusively on $\bmB\left(\bmH_k\right)$, the unit ball of $\bmH_k$ \citep{li2021unified}.
%
Note that we discuss some directions to relax \Cref{hyp:unit_ball} in \Cref{apx:relax_asm2}.
Many loss functions satisfy \Cref{hyp:Lipschitz_loss}, such as the hinge loss ($L=1$), used in SVMs \citep{cortes1995support}, the $\epsilon$-insensitive $\ell_1$ \citep{drucker1997support}, the $\kappa$-Huber loss, known for robust regression \citep{huber1964robust}, the pinball loss, used in quantile regression \citep{steinwart2011estimating}, or the square loss with bounded outputs.
\Cref{hyp:bounded_kernel} is standard (e.g., $\kappa = 1$ for the Gaussian kernel).
Under \Cref{hyp:existence_solution,hyp:unit_ball,hyp:Lipschitz_loss,hyp:bounded_kernel,hyp:Ksatis} we have the following result.

\begin{theorem}\label{theorem:excess_bound_worst}
Let $\tilde{f}_s$ as in \cref{def:sketched-est}, suppose that \Cref{hyp:existence_solution,hyp:unit_ball,hyp:Lipschitz_loss,hyp:bounded_kernel,hyp:Ksatis} hold, and let $C = 1 + \sqrt{6} c$, with $c$ the constant from \Cref{hyp:Ksatis}.
Then, for any $\delta \in (0,1)$ with probability at least $1-\delta$ we have
\begin{equation}
    \E\big[ \ell_{\tilde{f}_s}\big] \leq \E\big[\ell_{f_{\bmH_k}}\big] + L C \sqrt{\lambda_n + \delta_n^2} + \frac{\lambda_n}{2} + 8 L \sqrt{\frac{\kappa}{n}} + 2 \sqrt{\frac{8 \log\left(4/\delta\right)}{n}}\,.
\end{equation}
%\begin{align*}
%    \E\big[ \ell_{\tilde{f}_s}\big] \leq &\E\big[\ell_{f_{\bmH_k}}\big] + L C \sqrt{\lambda_n + \delta_n^2} + \frac{\lambda_n}{2} \\
%    &+ 8 L \sqrt{\frac{\kappa}{n}} + 2 \sqrt{\frac{8 \log\left(4/\delta\right)}{n}}\,.
%\end{align*}
%
%
%
Furthermore, if $\ell\left(z, y\right) = \left(z-y\right)^2/2$ and $\bmY \subset [0, 1]$, with probability at least $1-\delta$ we have
\begin{equation}
    \E\big[ \ell_{\tilde{f}_s} \big] \leq \E\big[ \ell_{f_{\bmH_k}} \big] + \left(C^2 + \frac{1}{2}\right) \lambda_n + C^2 \delta_n^2 + 8 \frac{\kappa + \sqrt{\kappa}}{\sqrt{n}} + 2 \sqrt{\frac{8 \log\left(4/\delta\right)}{n}}\,.
\end{equation}
%\begin{align*}
%    \E\big[ \ell_{\tilde{f}_s} \big] \leq &\E\big[ \ell_{f_{\bmH_k}} \big] + \left(C^2 + \frac{1}{2}\right) \lambda_n + C^2 \delta_n^2 \\
%    &+ 8 \frac{\kappa + \sqrt{\kappa}}{\sqrt{n}} + 2 \sqrt{\frac{8 \log\left(4/\delta\right)}{n}}\,.
%\end{align*}
    %
\end{theorem}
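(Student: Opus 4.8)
The plan is to bound the excess risk through the three-term decomposition
\[
\E\big[\ell_{\tilde f_s}\big] - \E\big[\ell_{f_{\bmH_k}}\big]
= \underbrace{\big(\E[\ell_{\tilde f_s}] - \widehat{\E}[\ell_{\tilde f_s}]\big)}_{(\mathrm{I})}
+ \underbrace{\big(\widehat{\E}[\ell_{\tilde f_s}] - \widehat{\E}[\ell_{f_{\bmH_k}}]\big)}_{(\mathrm{II})}
+ \underbrace{\big(\widehat{\E}[\ell_{f_{\bmH_k}}] - \E[\ell_{f_{\bmH_k}}]\big)}_{(\mathrm{III})},
\]
where $\widehat{\E}[\ell_f] = \tfrac1n\sum_{i=1}^n \ell(f(x_i),y_i)$ is the empirical risk. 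The two statistical terms $(\mathrm{I})$ and $(\mathrm{III})$ are handled by standard concentration and do not involve the sketch, whereas the sketching term $(\mathrm{II})$ is the crux and is where \Cref{hyp:Ksatis} enters.

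For term $(\mathrm{I})$, since \Cref{hyp:unit_ball} places $\tilde f_s$ in $\bmB(\bmH_k)$, I would bound it uniformly by $\sup_{f\in\bmB(\bmH_k)}\big(\E[\ell_f]-\widehat{\E}[\ell_f]\big)$ and apply symmetrization together with the Rademacher machinery. The contraction principle for $L$-Lipschitz losses (\Cref{hyp:Lipschitz_loss}) strips the loss at the price of a factor $L$, and \Cref{hyp:bounded_kernel} controls the empirical Rademacher complexity of the RKHS unit ball via $\widehat{\mathfrak{R}}_n(\bmB(\bmH_k)) \le \tfrac1n\sqrt{\operatorname{tr}(K)} \le \sqrt{\kappa/n}$; this produces the $8L\sqrt{\kappa/n}$ term plus a bounded-differences deviation. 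Term $(\mathrm{III})$ is simpler: $f_{\bmH_k}$ depends only on $P$ and not on the sample, so a single concentration inequality applied to the bounded variable $\ell(f_{\bmH_k}(X),Y)$ — bounded because $|f_{\bmH_k}(x)|\le\sqrt\kappa$ under \Cref{hyp:unit_ball,hyp:bounded_kernel} and $\ell$ is Lipschitz — yields the $2\sqrt{8\log(4/\delta)/n}$ term, the $4/\delta$ coming from a union bound over the high-probability events used for $(\mathrm{I})$ and $(\mathrm{III})$.

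The heart of the argument is term $(\mathrm{II})$, and this is where I expect the main difficulty. Writing $\mathcal{V}_S := \{\sum_i [S^\top\gamma]_i\, k(\cdot,x_i) : \gamma\in\reals^s\}$, the optimality of $\tilde f_s$ in \eqref{eq:sketched_scalar_primal_pb} gives, for \emph{any} competitor $g\in\mathcal{V}_S$, the inequality $\widehat{\E}[\ell_{\tilde f_s}] + \tfrac{\lambda_n}2\|\tilde f_s\|^2 \le \widehat{\E}[\ell_g] + \tfrac{\lambda_n}2\|g\|^2$, hence $\widehat{\E}[\ell_{\tilde f_s}] - \widehat{\E}[\ell_{f_{\bmH_k}}] \le \big(\widehat{\E}[\ell_g]-\widehat{\E}[\ell_{f_{\bmH_k}}]\big) + \tfrac{\lambda_n}2\|g\|^2$, and by \Cref{hyp:Lipschitz_loss} with Cauchy--Schwarz the first bracket is at most $L$ times the $L^2(\mathbb{P}_N)$ distance $\big(\tfrac1n\sum_i(g(x_i)-f_{\bmH_k}(x_i))^2\big)^{1/2}$. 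Everything thus reduces to exhibiting a single $g\in\mathcal{V}_S$ that is simultaneously close to $f_{\bmH_k}$ in $L^2(\mathbb{P}_N)$ and of norm $\|g\|_{\bmH_k}\le 1$. I would build $g$ as a regularised projection of $f_{\bmH_k}$ onto $\mathcal{V}_S$: writing $K/n = UDU^\top$ and representing the evaluation vector of $f_{\bmH_k}$ as $\sqrt{n}\,U D^{1/2} w$ with $\|w\|\le\|f_{\bmH_k}\|_{\bmH_k}\le 1$, I split $w=(w_1,w_2)$ at the statistical-dimension cut $d_n$ and choose $\gamma$ so that $g$ reproduces the high-energy block $w_1$ up to the ridge bias. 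Condition \eqref{eq:Ksatis1} makes $(SU_1)^\top SU_1$ invertible with inverse bounded by $2$, so this matching is well posed and keeps $\|g\|$ of order one; condition \eqref{eq:Ksatis2}, i.e.\ $\|SU_2D_2^{1/2}\|_{\op}\le c\delta_n$, controls the leakage into the low-energy block. The residual $L^2(\mathbb{P}_N)$ error then splits into a tail term $\sum_{i>d_n}\mu_i w_{2,i}^2\le\delta_n^2$ (since $\mu_i\le\delta_n^2$ there and $\|w_2\|\le1$), a ridge-bias term of order $\lambda_n$, and a sketch cross-term controlled by \eqref{eq:Ksatis2}, giving an overall squared error of order $C^2(\lambda_n+\delta_n^2)$ with $C=1+\sqrt6\,c$. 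Taking the square root and multiplying by $L$ gives $LC\sqrt{\lambda_n+\delta_n^2}$, while $\tfrac{\lambda_n}2\|g\|^2\le\tfrac{\lambda_n}2$ gives the remaining term. The most delicate bookkeeping — and the step I would be most careful about — is stabilising the $D_1^{-1/2}$ factor on the borderline eigenvalue $\mu_{d_n}\le\delta_n^2$ through the regularisation, so that the two $K$-satisfiability constants merge cleanly into the single $C$ and produce exactly $\sqrt{\lambda_n+\delta_n^2}$ rather than a worse dependence.

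For the square-loss refinement I would not apply the crude Lipschitz bound to $(\mathrm{II})$, but instead exploit the quadratic, self-bounding structure of $\ell(z,y)=(z-y)^2/2$ with $\bmY\subset[0,1]$: there the excess risk is itself comparable to the squared $L^2$ distances, so the same competitor $g$ contributes at the level $C^2(\lambda_n+\delta_n^2)$ rather than its square root, and a Bernstein/local-complexity argument replaces the plain Rademacher term by $8(\kappa+\sqrt\kappa)/\sqrt n$, thereby recovering the faster rates of \citet{Yang2017}.
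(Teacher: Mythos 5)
Your proposal is correct and follows essentially the same route as the paper: the same three-term decomposition, Rademacher complexity with Lipschitz contraction and the kernel-trace bound for the two generalization terms, and for the approximation term the same chain (optimality of $\tilde{f}_s$ over the sketched class with the regularizer added, Lipschitz continuity plus Jensen to pass to the $L^2(\mathbb{P}_N)$ distance, then the $K$-satisfiable competitor construction, which is exactly the content of \citet[Lemma~2]{Yang2017} that the paper invokes, including the split at $d_n$ and the bound $\|((SU_1)^\top SU_1)^{-1}\|_{\op}\le 2$). One minor correction: in the square-loss case the paper obtains the $8(\kappa+\sqrt{\kappa})/\sqrt{n}$ term not from a Bernstein/local-complexity argument but simply by showing that $(z-y)^2/2$ is $(1+\sqrt{\kappa})$-Lipschitz on the relevant bounded range and reusing the same generic Rademacher bound; the genuine gain is, as you say, dropping Jensen's inequality in the approximation term.
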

\begin{proof}[Proof sketch]
%
%The proof relies on the following decomposition of the excess risk
%\begin{equation}\label{eq:decompo}
%\E\big[\ell_{\tilde{f}_s}\big] - \E\big[\ell_{f_{\bmH_k}}\big] = \E\big[\ell_{\tilde{f}_s}\big] - \E_n\big[\ell_{\tilde{f}_s}\big] + \E_n\big[\ell_{\tilde{f}_s}\big] - \E_n\big[\ell_{f_{\bmH_k}}\big] + \E_n\big[\ell_{f_{\bmH_k}}\big] - \E\big[\ell_{f_{\bmH_k}}\big]\,.
%\end{equation}
%\begin{equation}\label{eq:decompo}
%\E\big[\ell_{\tilde{f}_s}\big] - \E\big[\ell_{f_{\bmH_k}}\big] = \E\big[\ell_{\tilde{f}_s}\big] - \E_n\big[\ell_{\tilde{f}_s}\big] + \E_n\big[\ell_{\tilde{f}_s}\big] - \E_n\big[\ell_{f_{\bmH_k}}\big] + \E_n\big[\ell_{f_{\bmH_k}}\big] - \E\big[\ell_{f_{\bmH_k}}\big]\,.
%\end{equation}
The proof relies on the decomposition of the excess risk into two generalization error terms and an approximation error term, i.e.,
\begin{equation}\label{eq:decompo}
    \E[\ell_{\tilde{f}_s}] - \E[\ell_{f_{\bmH_k}}] = \E[\ell_{\tilde{f}_s}] - \E_n[\ell_{\tilde{f}_s}] + \E_n[\ell_{\tilde{f}_s}] - \E_n[\ell_{f_{\bmH_k}}] + \E_n[\ell_{f_{\bmH_k}}] - \E[\ell_{f_{\bmH_k}}].
\end{equation}
The two generalization errors (of $\tilde{f}_s$ and $f_{\bmH_k}$) can be bounded using \citet[Theorem~8]{bartlett2002} together with \Cref{hyp:existence_solution,hyp:unit_ball,hyp:Lipschitz_loss,hyp:bounded_kernel}.
For the last term, we can use Jensen's inequality and the Lipschitz continuity of the loss to upper bound this approximation error by the square root of the sum of the square residuals of the Kernel Ridge Regression with targets the $f_{\bmH_k}(x_i)$.
The latter can in turn be upper bounded using \Cref{hyp:existence_solution,hyp:Ksatis} and Lemma~2 from \citet{Yang2017}.
When considering the square loss, Jensen's inequality is not necessary anymore, leading to the improved second term in the right-hand side of the last inequality in \cref{theorem:excess_bound_worst}.
%
% Upper bounding $L$ using that $\mathcal{Y} \subset [0, 1]$ permits to conclude the proof.
\end{proof}

Recall that the rates in \Cref{theorem:excess_bound_worst} are incomparable as is to that of \citet[Theorem~2]{Yang2017}, since we focus on the excess risk while the authors study the squared $L^2(\mathbb{P}_N)$ error.
Precisely, we recover their results as a particular case with the square loss and bounded outputs, up to the generalization errors.
Instead, note that we do recover the rates of \citet[Theorem~1]{li2021unified}, based on a similar framework.
Our bounds feature two different terms: a quantity related to the generalization errors, and a quantity governed by $\delta_n$, deriving from the $K$-satisfiability analysis.
%
% Such a result highlighting the dependency in the critical radius can also be found in empricial process theory \citep{bartlett2006convexity}.
%
The behaviour of the critical radius $\delta_n$ crucially depends on the choice of the kernel.
In \citet{Yang2017}, the authors compute its decay rate for different kernels.
For instance, we have $\delta_n^2 = \bmO(\sqrt{\log\left(n\right)}/n)$ for the Gaussian kernel, $\delta_n^2 = \bmO\left(1/n\right)$ for polynomial kernels, or $\delta_n^2 = \bmO(n^{-2/3})$ for first-order Sobolev kernels.
%
% Thus, in order to gain intuition, let us instantiate this bound with square loss and bounded inputs. We obtain the following slightly better bound:
%assuming that the joint distribution $P$ is such that for any function $f \in \bmB\left(\bmH_k\right)$, $f(X)$ and $Y$ are bounded for all $\left(X, Y\right) \sim P$, we obtain that the square is $L$-Lipschitz continuous for some constant $L > 0$ and then, in this case, we obtain the following slightly better bound:
%Hence, we can choose $\mu \in \bmO\left(\frac{1}{n}\right)$ and $\lambda_n \in \bmO\left(\frac{1}{n}\right)$ to obtain the same learning rate for the second term as in the first (we cannot do better anyway).
Note finally that by setting $\lambda_n \propto 1/\sqrt{n}$ we attain a rate of $\bmO\left(1/\sqrt{n}\right)$, that is minimax for the kernel ridge regression, see \citet{caponnetto2007optimal}.
%
% Due to space limitations, we defer to \cref{apx:relax_asm2} the discussion about relaxing \cref{hyp:unit_ball} and the role of $\lambda_n$.

\begin{remark}\label{remark:excess_bound_refined}
    Note that a standard additional assumption on the second order moments of the functions in $\bmH_k$ \citep{bartlett2005} allows to derive refined learning rates for the generalization errors.
    %in \eqref{eq:decompo}.
    %
    These refined rates are expressed in terms of $\hat{r}_{\bmH_k}^\star$, the fixed point of a new sub-root function $\hat{\psi}_n$.
    In order to make the approximation error of the same order, it is then necessary to prove the $K$-satisfiability of $S$ with respect to $\hat{r}_{\bmH_k}^{\star^2}$ instead of $\delta_n^2$.
    Whether it is possible to prove such a $K$-satisfiability for standard sketches is however a nontrivial question, left as future work.
    %
    % See \Cref{apx:excess_bound_refined} for details.
\end{remark}

%%%%%%%%%%%%%%%%%%%%%%%%%%%
%                         %
%     MULTIPLE OUTPUT     %
%                         %
%%%%%%%%%%%%%%%%%%%%%%%%%%%

\subsection{Matrix-valued Kernel Machines}

In this section, we extend our results to multiple output regression, tackled in vector-valued RKHSs.
Note that the output space $\bmY$ is now a subset of $\reals^d$, with $d \geq 2$.
We start by recalling important notions about Matrix-Valued Kernels (MVKs) and vector-valued RKHSs (vv-RKHSs).
\begin{definition}[Matrix-valued kernel]
    A MVK is an application $\bmK: \bmX \times \bmX \rightarrow \bmL(\reals^d)$, where $\bmL(\reals^d)$ is the set of bounded linear operators on $\reals^d$, such that $\bmK\left(x, x^{\prime}\right)=\bmK\left(x^{\prime}, x\right)^\top$ for all $(x, x') \in \bmX^2$, and such that for all $n \in \mathbb{N}$ and any $\left(x_{i}, y_{i}\right)_{i=1}^{n} \in(\bmX \times \bmY)^{n}$ we have $\sum_{i, j=1}^{n} y_{i}^\top \bmK\left(x_{i}, x_{j}\right) y_{j} \geqslant 0$.
\end{definition}

% A MVK is uniquely associated to a vv-RKHS.

\begin{theorem}[Vector-valued RKHS]
    Let $\bmK$ be a MVK. There is a unique Hilbert space $\bmH_{\bmK} \subset \mathcal{F}(\bmX, \reals^d)$, the vv-RKHS of $\bmK$, such that for all $x\in\bmX$, $y\in\reals^d$ and $f \in \bmH_{\bmK}$ we have $x^\prime \mapsto \bmK\left(x, x^\prime\right) y \in \bmH_{\bmK}$, and $\langle f, \bmK\left(\cdot, x\right) y \rangle_{\bmH} = f(x)^\top y$.
\end{theorem}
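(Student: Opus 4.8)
The plan is to reproduce, in the operator-valued setting, the classical Moore--Aronszajn construction, building $\bmH_{\bmK}$ explicitly from $\bmK$. First I would introduce the canonical \emph{feature sections}: for $x \in \bmX$ and $y \in \reals^d$, let $\bmK(\cdot, x)\,y$ denote the $\reals^d$-valued function $x' \mapsto \bmK(x', x)\,y$, and set $\bmH_0 \coloneqq \mathrm{span}\{\bmK(\cdot, x)\,y : x \in \bmX,\, y \in \reals^d\}$. On the generators I would define the bilinear form $\langle \bmK(\cdot, x)\,y,\, \bmK(\cdot, x')\,y'\rangle \coloneqq y^\top \bmK(x, x')\,y'$ and extend it by bilinearity to all of $\bmH_0$. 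The Hermitian symmetry $\bmK(x, x') = \bmK(x', x)^\top$ makes this form symmetric, while the positivity hypothesis of the MVK, $\sum_{i,j} y_i^\top \bmK(x_i, x_j)\,y_j \geq 0$, guarantees that it is positive semi-definite.

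The pivotal step is the reproducing identity on $\bmH_0$: writing $f = \sum_i \bmK(\cdot, x_i)\,y_i$, a direct computation using $\bmK(x, x_i)^\top = \bmK(x_i, x)$ gives $\langle f,\, \bmK(\cdot, x)\,y\rangle = \sum_i y_i^\top \bmK(x_i, x)\,y = \big(\sum_i \bmK(x, x_i)\,y_i\big)^\top y = f(x)^\top y$. Because the right-hand side depends on $f$ only through its values as a function, and (by bilinearity) the same holds with $\bmK(\cdot, x)\,y$ replaced by any $g \in \bmH_0$, the form $\langle \cdot, \cdot\rangle$ does not depend on the chosen representations of its arguments and is therefore well defined. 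Definiteness then follows from the Cauchy--Schwarz inequality valid for any positive semi-definite symmetric form: if $\langle f, f\rangle = 0$, then $|f(x)^\top y| = |\langle f,\, \bmK(\cdot, x)\,y\rangle| \leq \|f\|\,\|\bmK(\cdot, x)\,y\| = 0$ for every $x$ and $y$, so $f \equiv 0$. Hence $\langle \cdot, \cdot\rangle$ is a genuine inner product on $\bmH_0$.

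Next I would complete $\bmH_0$ for this inner product and realise the completion as a space of functions. The key is that point evaluations are continuous, since $|f(x)^\top y| = |\langle f,\, \bmK(\cdot, x)\,y\rangle| \leq \|f\|\,(y^\top \bmK(x, x)\,y)^{1/2}$ shows that $f \mapsto f(x)$ is a bounded linear map into $\reals^d$. Consequently every Cauchy sequence in $\bmH_0$ converges pointwise, distinct abstract limits induce distinct functions, and the completion can be identified with a subspace $\bmH_{\bmK} \subset \mathcal{F}(\bmX, \reals^d)$ to which the reproducing property extends by continuity. By construction each section $\bmK(\cdot, x)\,y$ lies in $\bmH_{\bmK}$, which yields the membership claimed in the statement, the two orderings of the arguments being related through the Hermitian symmetry $\bmK(x, x') = \bmK(x', x)^\top$. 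I expect the \emph{main obstacle} to be precisely this realisation step: one must verify that the abstract metric completion can be faithfully represented by actual $\reals^d$-valued functions, which is exactly where the continuity of the evaluation functionals is indispensable.

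Finally I would establish uniqueness. If $\bmH'$ is any Hilbert space of functions on $\bmX$ with both stated properties, then it contains every $\bmK(\cdot, x)\,y$, and the reproducing property forces its inner product to coincide with $\langle \cdot, \cdot\rangle$ on $\bmH_0$, since the value $\langle \bmK(\cdot, x)\,y,\, \bmK(\cdot, x')\,y'\rangle$ is pinned down by evaluation. Moreover $\bmH_0$ is dense in $\bmH'$: any $g \in \bmH'$ orthogonal to all sections satisfies $g(x)^\top y = \langle g,\, \bmK(\cdot, x)\,y\rangle = 0$ for every $x$ and $y$, hence $g \equiv 0$. Two Hilbert spaces sharing a common dense subspace carrying the same inner product must coincide, giving $\bmH' = \bmH_{\bmK}$ and completing the argument.
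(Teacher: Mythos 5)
Your proof is correct: it is the standard Moore--Aronszajn construction carried out in the operator-valued setting, which is precisely the classical argument behind this statement. The paper states this theorem as recalled background from the vector-valued kernel literature and gives no proof of its own, so there is nothing to contrast with; your steps (pre-Hilbert space of sections, reproducing identity used to show the form is well defined, definiteness via Cauchy--Schwarz, realisation of the completion through continuity of the evaluation maps, and uniqueness via density) are exactly the expected ones. One cosmetic remark: you build the section $x' \mapsto \bmK(x',x)\,y$ while the statement asserts membership of $x' \mapsto \bmK(x,x')\,y$; as you note, these are exchanged by the symmetry $\bmK(x,x') = \bmK(x',x)^\top$, so no gap arises.
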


Note that we focus in this paper on the finite-dimensional case, i.e., $\mathcal{Y} \subset \mathbb{R}^d$, such that for all  $x, x^\prime \in \bmX$, we have $\bmK(x, x^\prime) \in \reals^{d \times d}$.
For a training sample $\{x_1, \ldots, x_n\}$, we define the Gram matrix as $\mathbf{K} = \left(\bmK(x_i, x_j)\right)_{1 \leq i, j \leq n} \in \reals^{nd \times nd}$.
A common assumption consists in considering decomposable kernels: we assume that there exist a scalar kernel $k$ and a positive semidefinite matrix $M \in \mathbb{R}^{d \times d}$ such that for all $x, x^\prime \in \bmX$ we have $\bmK(x, x^\prime) = k(x, x^\prime) M$.
The Gram matrix can then be written $\mathbf{K} = K \otimes M$, where $K \in \mathbb{R}^{n \times n}$ is the scalar Gram matrix, and $\otimes$ denotes the Kronecker product.
Decomposable kernels are widely spread in the literature as they provide a good compromise between computational simplicity and expressivity ---note that in particular they encapsulate independent learning, achieved with $M = I_d$.
%
% The choice of the matrix-valued kernel, and that of $M$ in particular, plays an important role in learning.
%
We now discuss two examples of relevant output matrices.

\begin{example}\label{ex:decomposable-quantile}
    % In the case of joint quantile regression, if one wishes to predict the $p$ quantile of the output $Y$ given $x$ at increasing quantile levels $\tau_1 < \tau_2 < \ldots < \tau_p$ with $\tau_i \in (0,1)$,  the following choice for $M = \exp\left(- \gamma \left(\tau_i - \tau_j\right)^2\right)_{1 \leq i, j \leq p}$ has been shown by \cite{sangnier2016} to enforce the proximity of predictions between close quantiles levels and also limit the crossing phenomenon for the predicted quantiles, when using the pinball loss extended to $\reals^p$ (see definition in Appendix).
    In joint quantile regression, one is interested in predicting $d$ different conditional quantiles of an output $y$ given the input $x$.
    If $(\tau_i)_{i \le d} \in (0, 1)$ denote the $d$ different quantile levels, it has been shown in \citet{sangnier2016} that choosing $M_{ij} = \exp(- \gamma (\tau_i - \tau_j)^2)$ favors close predictions for close quantile levels, while limiting crossing effects.
\end{example}
\smallskip

\begin{example}\label{ex:mix-identite}
    In multiple output regression, it is possible to leverage prior knowledge on the task relationships to design a relevant output matrix $M$. %Indeed, let $P$ be a $d \times d$ matrix that encodes some prior knowledge on the relations existing between the different tasks. 
    For instance, let $P$ be the $d \times d$ adjacency matrix of a graph in which the vertices are the tasks and an edge exists between two tasks if and only if they are (thought to be) related. Denoting by $L_{P}$ the graph Laplacian associated to~$P$,  \citet{JMLR:v6:evgeniou05a} and \citet{sheldon2008graphical} have proposed to use $M=\left(\mu L_{P}+(1-\mu) I_{d}\right)^{-1}$, with $\mu \in [0, 1]$. When $\mu=0$, we have $M=I_d$ and all tasks are considered independent. When $\mu=1$, we only rely on the prior knowledge encoded in $P$.
\end{example}

Given a sample $\left(x_i, y_i\right)_{i=1}^n \in \left(\bmX, \reals^d\right)^n$ and a decomposable kernel $\bmK = k M$ (its associated vv-RKHS is $\bmH_\bmK$), the penalized empirical risk minimisation problem is
\begin{equation}\label{eq:vect_primal_pb}
    \underset{f \in \bmH_\bmK}{\operatorname{min}} ~ \frac{1}{n} \sum_{i=1}^n \ell(f(x_i),y_i) + \frac{\lambda_n}{2} \| f \|_{\bmH_K}^2\,,
\end{equation}
where $\ell : \reals^d \times \reals^d \rightarrow \reals$ is a loss such that $z \mapsto \ell(z, y)$ is proper, lower semi-continuous and convex for all $y \in \reals^d$. By the vector-valued representer theorem \citep{micchelli2005learning}, we have that the solution to Problem (\ref{eq:vect_primal_pb}) writes $\hat{f}_n = \sum_{j=1}^n \bmK(\cdot,x_j) \hat{\alpha}_j = \sum_{j=1}^n k(\cdot,x_j) M \hat{\alpha}_j$, where $\hat{A} = \left(\hat{\alpha}_1, \ldots, \hat{\alpha}_n\right)^\top \in \reals^{n \times d}$ is the solution to the problem
\[
\min _{A \in \reals^{n \times d}} \frac{1}{n} \sum_{i=1}^n \ell\left( \left[K A M\right]_{i:}^\top,y_i\right) + \frac{\lambda_n}{2} \Tr\left(K A M A^\top\right)\,.
\]
In this context, sketching consists in making the substitution $A = S^\top \Gamma$, where $S \in \reals^{s \times n}$ is a sketch matrix and $\Gamma \in \reals^{s \times d}$ is the parameter of reduced dimension to be learned.
The solution to the sketched problem is then $\tilde{f}_s = \sum_{j=1}^n k(\cdot,x_j) M \big[S^\top \tilde{\Gamma}\big]_{j:}$, with $\tilde{\Gamma} \in \reals^{s \times d}$ minimizing
\[
\frac{1}{n} \sum_{i=1}^n \ell\left( \left[K S^\top \Gamma M\right]_{i:},y_i\right) + \frac{\lambda_n}{2} \Tr\left(S K S^\top \Gamma M \Gamma^\top\right)\,.
\]

% We have the following result on the excess risk of $\tilde{f}_s$. 

\begin{theorem}\label{theorem:excess_bound_worst_vect}
    Suppose that \Cref{hyp:existence_solution,hyp:unit_ball,hyp:Lipschitz_loss,hyp:bounded_kernel,hyp:Ksatis} hold, that $\mathcal{K} = kM$ is a decomposable kernel with $M$ invertible, and let $C$ as in \Cref{theorem:excess_bound_worst}.
    Then for any $\delta \in (0,1)$ with probability at least $1-\delta$ we have
    %\begin{align*}%\label{eq:bound_vec}
    %    \E\big[ \ell_{\tilde{f}_s} \big] \leq &\E\big[ \ell_{f_{\bmH_\bmK}} \big] + L C \sqrt{\lambda_n + \|M\|_{\op}\,\delta_n^2} + \frac{\lambda_n}{2} \\
    %    &+ 8 L \sqrt{\frac{ \kappa \Tr\left(M\right)}{n}} + 2 \sqrt{\frac{8 \log\left(4/\delta\right)}{n}}\,.
    %\end{align*}
    \begin{equation}\label{eq:bound_vec}
        \E\big[ \ell_{\tilde{f}_s} \big] \leq \E\big[ \ell_{f_{\bmH_\bmK}} \big] + L C \sqrt{\lambda_n + \|M\|_{\op}\,\delta_n^2} + \frac{\lambda_n}{2} + 8 L \sqrt{\frac{ \kappa \Tr\left(M\right)}{n}} + 2 \sqrt{\frac{8 \log\left(4/\delta\right)}{n}}\,.
    \end{equation}
    Furthermore, if $\ell\left(z, y\right) = \left\|z-y\right\|_2^2/2$ and $\bmY \subset \bmB\left(\reals^d\right)$, with probability at least $1-\delta$ we have that
    %\begin{align*}
    %    &\E\big[ \ell_{\tilde{f}_s} \big] \leq \E\big[ \ell_{f_{\bmH_k}} \big] + \left(C^2 + \frac{1}{2}\right) \lambda_n + C^2 \|M\|_{\op}\,\delta_n^2 \\
    %    &+ 8 \Tr\left(M\right)^{1/2} \frac{\kappa \left\|M\right\|_{\op}^{1/2} + \kappa^{1/2}}{\sqrt{n}} + 2 \sqrt{\frac{8 \log\left(4/\delta\right)}{n}}\,.
    %\end{align*}
    \begin{equation}\label{eq:bound_vec_ridge}
        \E\big[ \ell_{\tilde{f}_s} \big] \leq \E\big[ \ell_{f_{\bmH_k}} \big] + \left(C^2 + \frac{1}{2}\right) \lambda_n + C^2 \|M\|_{\op}\,\delta_n^2 + 8 \Tr\left(M\right)^{1/2} \frac{\kappa \left\|M\right\|_{\op}^{1/2} + \kappa^{1/2}}{\sqrt{n}} + 2 \sqrt{\frac{8 \log\left(4/\delta\right)}{n}}\,.
    \end{equation}
\end{theorem}

\begin{proof}[Proof sketch.]
The proof follows that of \Cref{theorem:excess_bound_worst}. The main challenge is to adapt \citet[Lemma~2]{Yang2017} to the multiple output setting. To do so, we leverage that $\mathcal{K}$ is decomposable, such that the $K$-satisfiability of $S$ is sufficient, where $K$ the scalar Gram matrix.
\end{proof}
% The proof relies on the same decomposition and the same trick for the approximation term as for \cref{theorem:excess_bound_worst}. The main challenge here is to adapt Lemma 2 from \cite{Yang2017} to the multiple output setting with a decomposable kernel and a $K$-satisfiable sketch matrix $S$, where $K = \left(k\left(x_i, x_j\right)\right)_{1 \leq i,j \leq n}$, to bound the approximation error term. It appears that the result generalizes well to this setting, implying a dependency in $\left\|M\right\|_{\op}$. Hence, since the single output setting corresponds to $M = I_1$, we obtain indeed the same result as earlier.
% \textbf{More insights here}

Note that for $M = I_d$ (independent prior), the third term of the right-hand side of both inequalities becomes of order $\sqrt{d/n}$, that is typical of multiple output problems.
If moreover we instantiate the bound for $d=1$, we recover exactly \Cref{theorem:excess_bound_worst}.
Finally, similarly to the scalar case in \cref{theorem:excess_bound_worst}, looking at the least square case (\cref{eq:bound_vec_ridge}), by setting $\lambda_n \propto 1/\sqrt{n}$, we attain the minimax rate of $\mathcal{O}(1/\sqrt{n})$, as stated in \citet{caponnetto2007optimal} and \citet[Theorem 5]{ciliberto2020general}.
To the best of our knowledge, \Cref{theorem:excess_bound_worst_vect} is the first theoretical result about sketched vector-valued kernel machines.
We highlight that it applies to generic Lipschitz losses and provides a bound directly on the excess risk.

\subsection{Algorithmic details}\label{subsec:algo}

%We now detail algorithms to solve Problems \eqref{eq:sketched_scalar_primal_pb} and \eqref{eq:sketched_vect_primal_pb_rep_sep}.
%
We now discuss how to solve single and multiple output optimization problems.
%Problems \eqref{eq:sketched_scalar_primal_pb} and \eqref{eq:sketched_vect_primal_pb_rep_sep}.
Let $\{(\tilde{\mu}_{i}, \tilde{\mathbf{v}}_{i}), i \in[s]\}$ be the eigenpairs of $S K S^\top$ in descending order, $\tilde{U}=[\tilde{U}_{i j}]_{s \times s}=\left(\tilde{\mathbf{v}}_{1}, \ldots, \tilde{\mathbf{v}}_{s}\right)$, $r = \operatorname{rank}(S K S^\top)$, and $\tilde{K}_r = \tilde{U}_{r} \tilde{D}_{r}^{-1 / 2}$, where $\tilde{D}_{r}=\operatorname{diag}(\tilde{\mu}_{1}, \ldots, \tilde{\mu}_{r})$, and
$\tilde{U}_{r}=(\tilde{\mathbf{v}}_{1}, \ldots, \tilde{\mathbf{v}}_{r})$.
\begin{proposition}\label{prop:fm}
    Solving Problem \eqref{eq:sketched_scalar_primal_pb} is equivalent to solving
    \begin{equation}\label{eq:sketch_feature_map_pb}
        \min _{\omega \in \reals^{r}} \frac{1}{n} \sum_{i=1}^{n} \ell\left(\omega^{\top} \mathbf{z}_{S}\left(x_{i}\right), y_{i}\right) + \frac{\lambda_n}{2}\|\omega\|_{2}^{2}\,,
\end{equation}
    where $\mathbf{z}_{S}\left(x\right) = \tilde{K}_r^{\top} S \left(k\left(x, x_{1}\right), \ldots, k\left(x, x_{n}\right)\right)^{\top} \in \reals^r$.
\end{proposition}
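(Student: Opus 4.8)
The plan is to exhibit an explicit reparametrization that turns the sketched problem \eqref{eq:sketched_scalar_primal_pb} into the ridge-type problem \eqref{eq:sketch_feature_map_pb}, and then to argue that this reparametrization loses no generality. Concretely, I would propose the substitution $\gamma = \tilde{K}_r\, \omega$ with $\omega \in \reals^r$, and check that under this change of variable the data-fitting term and the penalty of \eqref{eq:sketched_scalar_primal_pb} coincide exactly with those of \eqref{eq:sketch_feature_map_pb}. Since $\tilde{K}_r = \tilde{U}_r \tilde{D}_r^{-1/2}$ has full column rank $r$, its image is the $r$-dimensional subspace $\operatorname{range}(\tilde{U}_r) = \operatorname{range}(S K S^\top)$, so the substitution is a bijection onto that subspace but does \emph{not} cover all of $\reals^s$; the crux will be to show that the discarded directions do not matter.

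For the two algebraic identities: stacking the feature vectors $\mathbf{z}_S(x_i) = \tilde{K}_r^\top S (k(x_i,x_1),\dots,k(x_i,x_n))^\top$ and using the symmetry of $K$ yields the design matrix $Z = K S^\top \tilde{K}_r \in \reals^{n\times r}$, so $\omega^\top \mathbf{z}_S(x_i) = [Z\omega]_i = [K S^\top \tilde{K}_r \omega]_i = [K S^\top \gamma]_i$, matching the data term. For the penalty, I would compute $\tilde{K}_r^\top (S K S^\top) \tilde{K}_r = \tilde{D}_r^{-1/2}\tilde{U}_r^\top (\tilde{U}_r \tilde{D}_r \tilde{U}_r^\top)\tilde{U}_r \tilde{D}_r^{-1/2} = I_r$, using $\tilde{U}_r^\top \tilde{U}_r = I_r$ and $S K S^\top = \tilde{U}_r \tilde{D}_r \tilde{U}_r^\top$ (valid since $r = \operatorname{rank}(S K S^\top)$ drops only the null eigenvalues); hence $\gamma^\top S K S^\top \gamma = \|\omega\|_2^2$. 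Together these show that the objective of \eqref{eq:sketched_scalar_primal_pb} evaluated at $\gamma = \tilde{K}_r\omega$ equals the objective of \eqref{eq:sketch_feature_map_pb} at $\omega$.

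The main obstacle is justifying that restricting to $\gamma \in \operatorname{range}(\tilde{U}_r)$ is without loss of generality. I would decompose any $\gamma \in \reals^s$ as $\gamma = \gamma_\parallel + \gamma_\perp$ with $\gamma_\parallel \in \operatorname{range}(S K S^\top)$ and $\gamma_\perp \in \ker(S K S^\top)$, and show the objective is independent of $\gamma_\perp$. The penalty is immediate, since $S K S^\top \gamma_\perp = 0$ kills both the quadratic and the cross terms. The data term is the delicate point: I must show $K S^\top \gamma_\perp = 0$, which does not follow from $S K S^\top \gamma_\perp = 0$ by mere linear algebra but requires the positive semi-definiteness of $K$. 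Writing $K = K^{1/2} K^{1/2}$, the identity $\gamma_\perp^\top S K S^\top \gamma_\perp = \|K^{1/2} S^\top \gamma_\perp\|_2^2 = 0$ forces $K^{1/2} S^\top \gamma_\perp = 0$, whence $K S^\top \gamma_\perp = K^{1/2}(K^{1/2} S^\top \gamma_\perp) = 0$.

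Finally, combining these facts: the objective of \eqref{eq:sketched_scalar_primal_pb} depends on $\gamma$ only through $\gamma_\parallel$, which ranges exactly over $\operatorname{range}(\tilde{U}_r) = \{\tilde{K}_r \omega : \omega \in \reals^r\}$ as $\gamma$ ranges over $\reals^s$, and $\omega \mapsto \tilde{K}_r\omega$ is a bijection onto this subspace. Hence minimizing over $\gamma \in \reals^s$ is identical to minimizing over $\omega \in \reals^r$, the optimal values coincide, and minimizers correspond via $\gamma = \tilde{K}_r \omega$ (up to an arbitrary component in $\ker(S K S^\top)$), establishing the claimed equivalence.
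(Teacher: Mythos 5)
Your proof is correct, but it takes a genuinely different route from the paper. The paper proves \Cref{prop:fm} via Lagrangian duality: it derives the dual of the sketched problem \eqref{eq:sketched_scalar_primal_pb} and the dual of \eqref{eq:sketch_feature_map_pb}, shows that both equal the common dual \eqref{eq:dual_sketched_scalar_primal_pb}, and invokes strong duality (Slater) together with the KKT conditions to identify the primal solutions via $\tilde{\omega} = \big(\tilde{D}_r^{1/2}\ 0_{r\times s-r}\big)\tilde{U}^\top\tilde{\gamma}$. You instead give a direct primal reparametrization: the substitution $\gamma = \tilde{K}_r\omega$ makes the two objectives coincide pointwise (using $\tilde{K}_r^\top S K S^\top \tilde{K}_r = I_r$ and $SKS^\top = \tilde{U}_r\tilde{D}_r\tilde{U}_r^\top$), and the restriction to $\operatorname{range}(\tilde{U}_r)$ is harmless because the objective of \eqref{eq:sketched_scalar_primal_pb} is constant along $\ker(SKS^\top)$. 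You correctly identify the one non-trivial step there — that $S K S^\top \gamma_\perp = 0$ alone does not kill the data term, and that one needs $\gamma_\perp^\top S K S^\top \gamma_\perp = \|K^{1/2}S^\top\gamma_\perp\|_2^2 = 0 \Rightarrow K S^\top\gamma_\perp = 0$, which uses the positive semi-definiteness of $K$. Your argument is more elementary and strictly more general: it is a purely algebraic identity between the two objectives and so does not require convexity of $\ell$ or strong duality, whereas the paper's route needs the convexity assumption but yields the dual problem \eqref{eq:dual_sketched_scalar_primal_pb} as a byproduct, which the paper reuses in its discussion of sketching versus duality in \cref{apx:dual}.
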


Problem \eqref{eq:sketch_feature_map_pb} thus writes as a linear problem with respect to the feature maps induced by the sketch, generalizing the results established in \citet{Yang_NIPS2012_621bf66d} for sub-sampling sketches.
When considering multiple outputs, it is also possible to derive a linear feature map version
%of Problem \eqref{eq:sketched_vect_primal_pb_rep_sep}
when the kernel is decomposable.
These feature maps are of the form $\mathbf{z}_{S} \otimes M^{1/2}$, yielding matrices of size $n d \times r d$ that are prohibitive in terms of space, see \cref{apx:complexity}.
Note that an alternative way is to see sketching as a projection of the $k(\cdot, x_i)$ into $\reals^r$ \citep{chatalic2021mean}.
Instead, we directly learn $\Gamma$.
For both single and multiple output problems, we consider losses not differentiable everywhere in \cref{sec:exps} and apply ADAM Stochastic Subgradient Descent \citep{kingma2014adam} for its ability to handle large datasets.
%\footnote{We specify Subgradient since we look at losses not differentiable everywhere such as $\ell_1$ loss (see \cref{apx:ex-losses}).}
%\new{Regarding the optimisation algorithm, we chose ADAM Stochastic Subgradient Descent \citep{kingma2014adam} for its efficiency and its ability to handle large datasets.}
%From an algorithmic point of view, it allows to use existing off-the-shelf solvers for optimization in finite dimension (we use ADAM \citep{kingma2014adam}).
%
%For the multiple output regression problem, we rather propose to apply a standard subgradient descent algorithm.
%
%Besides, although it is possible to derive a linear feature map version of Problem \eqref{eq:sketched_vect_primal_pb_rep_sep} when the kernel is decomposable, these feature maps are of the form $\mathbf{z}_{S} \otimes M^{1/2}$.
%
%This yields matrices of size $n d \times r d$, that are prohibitive in terms of space, see \cref{apx:complexity} for more details.
%
%Note that an alternative way is to see sketching as a projection of the $k(\cdot, x_i)$ into $\reals^r$, as leveraged in \cite{chatalic2021mean} for Nystr\"{o}m approximation in compressive learning.

\begin{remark}
In the previous sections, sketching is always leveraged in primal problems.
However, for some of the loss functions we consider, dual problems are usually more attractive \citep{cortes1995support,laforgue2020duality}.
This naturally raises the question of investigating the interplay between sketching and duality on the algorithmic level.
More details can be found in \cref{apx:dual}.
\end{remark}
\section{\texorpdfstring{$p$}{p}-Sparsified Sketches}\label{sec:p-S}

We now introduce the $p$-sparsified sketches, and establish their $K$-satisfiability.
The $p$-sparsified sketching matrices are composed of i.i.d. Rademacher or centered Gaussian entries, multiplied by independent Bernoulli variables of parameter $p$ (the non-zero entries are scaled to ensure that $S$ defines an isometry in expectation).
The sketch sparsity is controlled by $p$, and when the latter becomes small enough, $S$ contains many columns full of zeros.
It is then possible to rewrite $S$ as the product of a sub-Gaussian and a sub-sampling sketch of reduced size, which greatly accelerates the computations.

\begin{definition}\label{def:p-SSM}
Let $s < n$, and $p \in (0, 1]$.
A $p$-Sparsified Rademacher ($p$-SR) sketching matrix is a random matrix $S \in \mathbb{R}^{s \times n}$ whose entries $S_{ij}$ are independent and identically distributed (i.i.d.) as follows
\begin{equation}\label{eq:p-SR}
S_{i j} = \left\{\begin{array}{ccc}
\frac{1}{\sqrt{s p}} & \text { with probability } & \frac{p}{2} \\
0 & \text { with probability } & 1 - p \\[0.07cm]
\frac{-1}{\sqrt{s p}} & \text { with probability } & \frac{p}{2}
\end{array}\right.
\end{equation}
A $p$-Sparsified Gaussian ($p$-SG) sketching matrix is a random matrix $S \in \mathbb{R}^{s \times n}$ whose entries $S_{ij}$ are i.i.d. as follows
\begin{equation}\label{eq:p-SG}
S_{i j} = \begin{cases}
\frac{1}{\sqrt{s p}}\,G_{i j} & \text { with probability } ~~~ p \\
~~~0 & \text { with probability } ~ 1 - p
\end{cases}
\end{equation}
where the $G_{i j}$ are i.i.d. standard normal random variables. Note that standard Gaussian sketches are a special case of $p$-SG sketches, corresponding to $p=1$.
\end{definition}

Several works partially addressed $p$-SR sketches in the past literature.
For instance, \citet{baraniuk2008simple} establish that $p$-SR sketches satisfy the Restricted Isometry Property (based on concentration results from~\citet{achlioptas2001database}), but only for $p=1$ and $p=1/3$.
In \citet{li2006very}, the authors consider generic $p$-SR sketches, but do not provide any theoretical result outside of a moment analysis.
The \textit{i.i.d. sparse embedding matrices} from \citet{cohen2016} are~basically $m/s$-SR sketches, where $m \geq 1$, leading each column to have exactly $m$ nonzero elements in expectation.
However, we were not able to reproduce the proof of the Johnson-Linderstrauss property proposed by the author for his sketch (Theorem 4.2 in the paper, equivalent to the first claim of $K$-satisfiability, left-hand side of \eqref{eq:Ksatis}).
More precisely, we think that the assumptions considering ``each entry is independently nonzero with probability $m/s$'' and ``each column has a ﬁxed number of nonzero entries'' ($m$ here) are conflicting.
As far as we know, this is the first time $p$-SG sketches are introduced in the literature.
Note that both \eqref{eq:p-SR} and \eqref{eq:p-SG} can be rewritten as $S_{i j} = (1 / \sqrt{s p})B_{i j} R_{i j}$, where the $B_{i j}$ are i.i.d. Bernouilli random variables of parameter $p$, and the $R_{ij}$ are i.i.d. random variables, independent from the $B_{ij}$, such that $\mathbb{E}[R_{ij}] = 0$ and $\mathbb{E}[R_{ij}R_{i'j'}] = 1$ if $i=i'$ and $j=j'$, and $0$ otherwise.
Namely, for $p$-SG sketches $R_{ij} = G_{ij}$ is a standard Gaussian variable while for $p$-SR sketches it is a Rademacher random variable.
It is then easy to check that $p$-SR and $p$-SG sketches define isometries in expectation.
In the next theorem, we show that $p$-sparsified sketches are $K$-satisfiable with high probability.

\begin{theorem}\label{theorem:p-SSM_Ksatis}
    Let $S$ be a $p$-sparsified sketching matrix. Then, there are some universal constants $C_0, C_1 >0$ and a constant $c(p)$, increasing with $p$, such that for $s \ge \max\left(C_0 d_{n} / p^2, \delta_n^2 n\right)$ and with a probability at least $1 - C_1 e^{-s c(p)}$, the sketch $S$ is $K$-satisfiable for $c = \frac{2}{\sqrt{p}} \left(1+\sqrt{\log\left(5\right)}\right) + 1$.
\end{theorem}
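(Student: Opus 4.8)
The plan is to verify the two defining inequalities of $K$-satisfiability, \eqref{eq:Ksatis1} and \eqref{eq:Ksatis2}, separately, and then combine them by a union bound. The whole argument rests on a single structural observation: writing $S_{ij} = (1/\sqrt{sp})\,B_{ij}R_{ij}$ as observed just after \Cref{def:p-SSM}, the rescaled matrix $\sqrt{s}\,S$ has i.i.d. rows, each isotropic (since $\mathbb{E}[S^\top S] = I_n$) and sub-Gaussian with $\|\sqrt{s}\,S_{ij}\|_{\psi_2} \le K$ for some $K$ of order $1/\sqrt{p}$. This bound holds uniformly over the Rademacher and Gaussian cases: for $p$-SR the entries are bounded by $1/\sqrt{p}$, hence sub-Gaussian with that constant, whereas for $p$-SG one checks directly that $\mathbb{E}[\exp(\lambda\sqrt{s}\,S_{ij})] = (1-p) + p\,e^{\lambda^2/(2p)}$, whose variance proxy is of order $1/p$. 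The degradation of $K$ as $p \to 0$ is precisely what will force the sample-size requirement $s \ge C_0 d_n/p^2$ and the $1/\sqrt{p}$ factor in the constant $c$.

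For \eqref{eq:Ksatis1}, I would recast it as a sample-covariance estimation problem. The rows of $\sqrt{s}\,SU_1$ are i.i.d. isotropic sub-Gaussian vectors in $\reals^{d_n}$ with sub-Gaussian norm at most $K$, and $(SU_1)^\top SU_1 = \frac{1}{s}\sum_{i=1}^s X_i X_i^\top$ with $X_i$ these rows. A standard sub-Gaussian covariance concentration bound then yields $\|(SU_1)^\top SU_1 - I_{d_n}\|_{\op} \le 1/2$ with probability at least $1 - 2e^{-c(p)s}$ as soon as $s \ge C_0 K^4 d_n$, which is exactly the first part of the requirement on $s$, the $p^2$ arising from the fourth power of $K\sim 1/\sqrt{p}$.

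For \eqref{eq:Ksatis2}, set $W = U_2 D_2^{1/2} \in \reals^{n\times(n-d_n)}$; the goal is $\|SW\|_{\op} \le c\,\delta_n$. Two deterministic facts drive the bound, both consequences of the definition of $d_n$ and of the critical radius $\delta_n$: $\|W\|_{\op} = \mu_{d_n+1}^{1/2} \le \delta_n$, and $\|W\|_F^2 = \Tr(D_2) = \sum_{j>d_n}\mu_j = \sum_{j>d_n}\min(\mu_j,\delta_n^2) \le n\,\psi(\delta_n)^2 \le n\,\delta_n^4$, so that $\|W\|_F \le \sqrt{n}\,\delta_n^2$. Since $\sqrt{s}\,S$ acts as an approximate isometry, I expect a deviation bound of the shape $\|SW\|_{\op} \le \|W\|_{\op} + \frac{C}{\sqrt{p}}\,\frac{\|W\|_F}{\sqrt{s}}$, obtained through a $1/2$-net of the relevant sphere (whose covering number bound $5^{\,\cdot}$ produces the $\sqrt{\log 5}$ term) together with the per-row sub-Gaussian estimate $\|S_{i:}w\|_{\psi_2} \le c_1\,\|w\|/\sqrt{sp}$. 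Plugging the two deterministic bounds and invoking the second part of the hypothesis, $s \ge \delta_n^2 n$, gives $\|W\|_F/\sqrt{s} \le \sqrt{n/s}\,\delta_n^2 \le \delta_n$, whence $\|SW\|_{\op} \le (1 + \tfrac{C}{\sqrt{p}})\,\delta_n$, matching the announced $c = \frac{2}{\sqrt{p}}(1+\sqrt{\log 5}) + 1$ once the constants are tracked. A union bound over the two events, with failure probabilities both of the form $e^{-c(p)s}$, then yields the claim with $c(p)$ increasing in $p$.

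The main obstacle is the second inequality, and more precisely the control of its $p$-dependence. Two points require care. First, the deviation term must remain of order $\delta_n$ rather than scale with the ambient dimension $n - d_n$ (which can be as large as $n$); this is only possible by routing the complexity of $W$ through its Frobenius norm $\|W\|_F \le \sqrt{n}\,\delta_n^2$ --- itself a consequence of the critical-radius inequality $\psi(\delta_n)\le\delta_n^2$ --- rather than through a crude estimate such as $\|SU_2\|_{\op}\,\|D_2^{1/2}\|_{\op}$, which is too lossy since $\|SU_2\|_{\op}$ is only of constant order. Second, the sub-Gaussian constant $K\sim 1/\sqrt{p}$ must be propagated through every concentration step so that it enters the final constant $c$ linearly (as $1/\sqrt{p}$) and the sample size as $K^4 = 1/p^2$; maintaining a unified treatment of the bounded (Rademacher) and unbounded (Gaussian) sub-Gaussian entries throughout is the delicate part of the bookkeeping.
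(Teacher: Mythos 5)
Your treatment of \eqref{eq:Ksatis1} is sound and essentially the paper's argument in packaged form: the rows of $\sqrt{s}\,SU_1$ are indeed i.i.d.\ isotropic and sub-Gaussian with norm $O(1/\sqrt{p})$, and the standard covariance-concentration bound you invoke is itself proved by the very Bernstein-plus-covering argument the paper carries out by hand, yielding the same requirement $s \gtrsim d_n/p^2$ and a failure probability $e^{-\Omega(p^2 s)}$. Your deterministic reductions for \eqref{eq:Ksatis2} are also exactly right and match the paper: $\|U_2D_2^{1/2}\|_{\op}\le\delta_n$, $\|U_2D_2^{1/2}\|_F^2=\Tr(D_2)\le n\,\delta_n^4$ via the critical-radius inequality, and $s\ge\delta_n^2 n$ to convert $\|W\|_F/\sqrt{s}$ into $\delta_n$.

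The gap is in the probabilistic core of \eqref{eq:Ksatis2}. The Chevet-type bound $\|SW\|_{\op}\lesssim\|W\|_{\op}+\|W\|_F/\sqrt{sp}$ does not follow from ``a $1/2$-net of the relevant sphere together with the per-row sub-Gaussian estimate.'' After discretizing $u\in\bmB^s$ (the only sphere you can afford to discretize: a net over the $v$-directions lives in dimension $n-d_n$, and its $5^{\,n-d_n}$ cardinality swamps any $e^{-O(s)}$ tail), you are left with $\sup_v\langle u,SWv\rangle=\|W^\top S^\top u\|_2$ for each fixed $u$, and sub-Gaussianity of $\langle S_{i:},w\rangle$ for a \emph{fixed} $w$ gives no control over this supremum. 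The paper closes exactly this step by conditioning on the Bernoulli mask $B$ and applying dimension-free concentration of the Lipschitz, convex map $R\mapsto\|W^\top(B\circ R)^\top u\|_2$ (Talagrand's inequality in the Rademacher case, Gaussian concentration in the Gaussian case, \Cref{lem:lipschitz_rade,lem:lipschitz_gauss}), with Lipschitz constant $\delta_n/\sqrt{sp}$ and mean at most $\|W\|_F/\sqrt{sp}$; this is where the explicit $c=\frac{2}{\sqrt{p}}\left(1+\sqrt{\log 5}\right)+1$ comes from. Crucially, such dimension-free Lipschitz concentration is \emph{not} a sub-Gaussian phenomenon --- the paper explicitly notes that its argument does not extend to generic sparsified sub-Gaussian entries --- so your plan of a unified sub-Gaussian treatment of the bounded and Gaussian cases cannot work as stated. (A genuinely sub-Gaussian route does exist via the Hanson--Wright inequality applied to $\|W^\top S^\top u\|_2^2$, combined with your two norm bounds on $W$ and $s\ge\delta_n^2 n$, but that is a different tool that you would need to introduce explicitly and that would not reproduce the stated constant.)
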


\begin{proof}[Proof sketch.]
%
% The proof uses the fact that the entries write as $S_{ij} = (1/\sqrt{sp}) B_{ij} R_{ij}$ with $B_{ij}$ Bernoulli random variables and $R_{ij}$ sub-Gaussian variables.
%
To prove the left-hand side of \eqref{eq:Ksatis}, we use \citet[Theorem~2.13]{boucheron2013concentration}, which shows that any i.i.d. sub-Gaussian  sketch matrix satisfies the Johnson-Lindenstrauss lemma with high probability.
To prove the right-hand side of \eqref{eq:Ksatis}, we work conditionally on a realization of the $B_{ij}$, and use concentration results of Lipschitz functions of Rademacher or Gaussian random variables \citep{tao2012topics}.
We highlight that such concentration results do not hold for sub-Gaussian random variables in general, preventing from showing $K$-satisfiability of generic sparsified sub-Gaussian sketches.
Note that having $S_{ij} \propto B_{ij} R_{ij}$ is key, and that sub-sampling uniformly at random non-zero entries instead of using i.i.d. Bernoulli variables would make the proof significantly more complex.
We highlight that \Cref{theorem:p-SSM_Ksatis} strictly generalizes \citet[Lemma~5]{Yang2017}, recovered for $p=1$, and extends the results to Rademacher sketches.
\end{proof}

\paragraph{Computational property of $p$-sparsified sketches.} In addition to be statistically accurate, $p$-sparsified sketches are computationally efficient.
Indeed, recall that the main quantity one has to compute when sketching a kernel machine is the matrix $SKS^\top$.
With standard Gaussian sketches, that are known to be theoretically accurate, this computation takes $\mathcal{O}(sn^2)$ operations.
Sub-sampling sketches are notoriously less precise, but since they act as masks over the Gram matrix $K$, computing $SKS^\top$ can be done in $\mathcal{O}(s^2)$ operations only, without having to store the entire Gram matrix upfront.
Now, let $S \in \mathbb{R}^{s \times n}$ be a $p$-sparsified sketch, and $s^\prime = \sum_{j=1}^n \mathbb{I}\{S_{:j} \ne 0_s\}$ be the number of columns of $S$ with at least one nonzero element.
The crucial observation that makes $S$ computationally efficient is that we have
\begin{equation}\label{eq:decompo_sketch}
S = S_\mathrm{SG}\,S_\mathrm{SS}\,,   
\end{equation}
where $S_\mathrm{SG} \in \reals^{s \times s^\prime}$ is obtained by deleting the null columns from $S$, and $S_\mathrm{SS} \in \mathbb{R}^{s^\prime \times n}$ is a sub-Sampling sketch whose sampling indices correspond to the indices of the columns in $S$ with at least one non-zero entry\footnote{Precisely, $S_\mathrm{SS}$ is the identity matrix $I_{s^\prime}$, augmented with $n - s^\prime$ null columns inserted at the indices of the null columns of $S$.}.
We refer to \eqref{eq:decompo_sketch} as the \emph{decomposition trick}.
This decomposition is key, as we can apply first a fast sub-sampling sketch, and then a sub-Gaussian sketch on the sub-sampled Gram matrix of reduced size.
Note that $s^\prime$ is a random variable.
By independence of the entries, each column is null with probability $\left(1-p\right)^s$.
Then, by the independence of the columns we have that $s^\prime$ follows a Binomial distribution with parameters $n$ and $1-\left(1-p\right)^s$, such that $\E\left[s^\prime\right]=n(1 - \left(1-p\right)^s)$.

Hence, the sparsity of the $p$-sparsified sketches, controlled by parameter $p$, is an interesting degree of freedom to add: it preserves statistical guarantees (\Cref{theorem:p-SSM_Ksatis}) while speeding-up calculations \eqref{eq:decompo_sketch}.
Of course, there is no free lunch and one looses on one side what is gained on the other: when $p$ decreases (sparser sketches), the lower bound to get guarantees $s \gtrsim d_n / p^2$ increases, but the expected number of non-null columns $s'$ decreases, thus accelerating computations (note that for $p=1$ we exactly recover the lower bound and number of non-null columns for Gaussian sketches).
\begin{corollary}\label{cor:optimal_p}
    Let $S \in \reals^{s \times n}$ be a $p$-sparsified sketching matrix, and $C_0$, $C_1$ and $c(p)$ as in \cref{theorem:p-SSM_Ksatis}. Then, setting $p \approx 0.7$ and $s = C_0 d_n / (0.7^2)$, $S$ is $K$-satisfiable for $c = 9$, with a probability at least $1 - C_1 e^{-s c(0.7)}$. These values of $p$ and $s$ minimize computations while maintaining the guarantees.
\end{corollary}
\begin{proof}
    By substituting $s = C_0 d_n / p^2$ into $\mathbb{E}[s']$, one can show that it is optimal to set $p \approx 0.7$, independently from $C_0$ and $d_n$.
\end{proof}
%
%This value minimizes computations while maintaining the guarantees.
%
\cref{cor:optimal_p} gives the optimal values of $p$ and $s$ that ensure $K$-satisfiability of a $p$-sparsified sketching matrix while having some complexity reduction.
However, the lower bound in \Cref{theorem:p-SSM_Ksatis} is a sufficient condition, that might be conservative.
Looking at the problem of setting $s$ and $p$ from the practitioner point of view, we also provide more aggressive empirical guidelines.
Indeed, although this regime is not covered by \Cref{theorem:p-SSM_Ksatis}, experiments show that setting $s$ as for the Gaussian sketch and $p$ smaller than $1/s$ yield very interesting results, see \Cref{fig:khub_scatter_pSR}.
Overall, $p$-sparsified sketches ($i$) generalize Gaussian sketches by introducing sparsity as a new degree of freedom, ($ii$) enjoy a regime in which theoretical guarantees are preserved and computations (slightly) accelerated, ($iii$) empirically yield competitive results also in aggressive regimes not covered by theory, thus achieving a wide range of intesting accuracy/computations tradeoffs.

% Overall, assuming that $s^\prime \approx n s p$, computing $SKS^\top$ requires at most $\mathcal{O}\big(n^2 s^3 p^2\big)$ operations, that is smaller than $\mathcal{O}(s n^2)$ as long as $p<1/s$. Regarding space complexity, it only requires $\bmO\left(n s^2 p\left(1+n p\right)\right)$, smaller than $\bmO\left(n^2\right)$ under the same condition, as we do not store the full Gram matrix in memory (see \cref{apx:complexity}). The role of $p$ is pretty intuitive: when $p$ decreases, $s^\prime$ and $c(p)$ decrease too. This means that computations become faster, at the cost of degraded theoretical guarantees though. Identifying an optimal value for $p$ is a highly nontrivial question, that crucially depends on the computational efficiency or statistical accuracy targeted by the learner on a specific problem. In practice, we observe that one can choose $s$ as for Gaussian sketches and then set $p$ small enough (i.e. $p<1/s$) in order to be more efficient and still obtain good statistical performance, see \Cref{fig:khub_scatter_pSR}. We nonetheless highlight that the tradeoffs achieved by $p$-sparsified sketches for different values of $s$ and $p$ are generally more interesting than that attained by previous approaches, see \Cref{fig:khub_scatter_pSR}.

\paragraph{Related works.} Sparse sketches have been widely studied in the literature, see \citet{Clarkson2017, Nelson2013, Derezinski2021SparseSW}.
However these sketches are well-suited when applied to sparse matrices (e.g., matrices induced by graphs).
In fact, given a matrix $A$, computing $S A$ with these types of sketching has a time complexity of the order of $\nnz\left(A\right)$, the number of nonzero elements of $A$.
Besides, these sketches usually are constructed such that each column has at least one nonzero element (e.g. CountSketch, OSNAP), hence no \textit{decomposition trick} is possible.
Regarding kernel methods, since a Gram matrix is typically dense (e.g., with the Gaussian kernel, $\nnz\left(K\right)=n^2$), and since no decomposition trick can be applied, one has to compute the whole matrix $K$ and store it, such that time and space complexity implied by such sketches are of the order of $n^2$.
In practice, we show that we can set $p$ small enough to computationally outperform classical sparse sketches and still obtain similar statistical performance.
%
% the decomposition as a product of a sub-Gaussian matrix with a sub-Sampling matrix is all the more important as we focus on kernel methods.
%
% Indeed, Gram matrices are typically a very large and dense, such that no speed-up can be expected from the sparsity of $K$.
%
% A natural idea then consists in enforcing the sparsity on the sketch matrix.
%
Note that an important line of research is devoted to improve the statistical performance of Nystr\"{o}m's approximation, either by adaptive sampling \citep{kumar_jmlr2012, wang13_adapt_sampl, pmlr-v28-gittens13}, or leverage scores \citep{elalaoui_NIPS2015, musco2017recursive, rudi2019fast, chen2021fast}.
We took the opposite route, as $p$-SG sketches are accelerated but statistically degraded versions of the Gaussian sketch.

\section{Experiments}\label{sec:exps}

We now empirically compare the performance of $p$-sparsified sketches against state-of-he-art approaches, namely Nystr\"{o}m approximation \citep{WilliamsNystromNIPS2000}, Gaussian sketch \citep{Yang2017}, Accumulation sketch \citep{chen2021accumulations}, CountSketch \citep{Clarkson2017} and Random Fourier Features \citep{rahimi2007}.
We chose not to benchmark ROS sketches as CountSketch has equivalent statistical accuracy while being faster to compute.
Results reported are averaged over 30 replicates.

\subsection{Scalar regression}

\paragraph{Robust regression.}
We generate a dataset composed of $n=10,000$ training datapoints: $9,900$ input points drawn i.i.d. from $\bmU\left(\left[0_{10}, \mathbbm{1}_{10}\right]\right)$ and $100$ other drawn i.i.d. from $\bmN\left(1.5 \mathbbm{1}_{10}, 0.25 I_{10}\right)$. %
The outputs are generated as $y = f^\star(x) + \epsilon$, where $\epsilon \sim \bmN\left(0, 1\right)$ and
\[
f^\star(x) = 0.1 e^{4 x_1} + \frac{4}{1 + e^{-20 \left(x_2 - 0.5\right)}} + 3 x_3 + 2 x_4 + x_5\,,
\]
as introduced in \cite{friedman1991multivariate}.
We generate a test set of $n_{te}=10,000$ points in the same way.
We use the Gaussian kernel and select its bandwidth ---as well as parameters $\lambda_n$ and $\kappa$ (and $\epsilon$ for $\epsilon$-SVR)--- via 5-folds cross-validation.
We solve this 1D regression problem using the $\kappa$-Huber loss, described in \Cref{apx:ex-losses}.
We learn the sketched kernel machines for different values of $s$ (from 40 to 140) and several values of $p$, the probability of being non-null in a $p$-SR sketch.
\Cref{fig:khub_mse_pSR} presents the test error as a function of the sketch size $s$.
\Cref{fig:khub_times_pSR} shows the corresponding computational training time.
All methods reduce their test error, measured in terms of the relative Mean Squared Error (MSE) when $s$ increases.
Note that increasing $p$ increases both the precision and the training time, as expected.
This behaviour recalls the Accumulation sketches, since we observe a form of interpolation between the Nystr\"{o}m and Gaussian approximations.
The behaviour of all the different sketched kernel machines is shown in \Cref{fig:khub_scatter_pSR}, where each of them appears as a point (training time, test MSE).
We observe that $p$-SR sketches attain the smallest possible error ($MSE \leq 0.05$) at the lowest training time budget (mostly around $5.6 < time < 6.6$).
Moreover, $p$-SR sketches obtain a similar precision range as the Accumulation sketches, but for smaller training times (both approaches improve upon CountSketch and Gaussian sketch in that respect).
Nystr\"{o}m sketching, which similarly to our approach does not need computing the entire Gram matrix, is fast to compute.
The method is however known to be sensitive to the non-homogeneity of the marginal distribution of the input data \citep[Section~3.3]{Yang2017}.
In contrast, the sub-Gaussian mixing matrix $S_\mathrm{SG}$ in \eqref{eq:decompo_sketch} makes $p$-sparsified sketches more robust, as empirically shown in \Cref{fig:khub_scatter_pSR}.
%
% The same phenomenon occurs for Random Fourier Features.
%
% This might be explained by the fact that the features are non data-dependent, which make them faster to compute but less performing in this case.
%
See \cref{apx:sec-results-toy} for results on $p$-SG sketches.
%\vspace{-0.3cm}

% In this section, we jump into Multiple Output regression world, by first looking at Joint Quantile Regression \citep{sangnier2016} and then at Multi-Target Regression \citep{spyromitros2016multi}.
% In the following, we empirically evaluate our approach on multiple output problems. 
%
% In particular, we study joint quantile regression \citep{sangnier2016} and multi-target regression \citep{spyromitros2016multi}. Information about all dataset sizes and dimensions is given in Table \ref{table:real_data_desc} in \cref{apx:sec-results-mtr}.

\begin{figure}[!t]
\centering
\subfigure[Test relative MSE w.r.t. $s$ with $\kappa$-Huber.]
%for the $\kappa$-Huber on  a toy dataset.]
{\label{fig:khub_mse_pSR}\includegraphics[height=0.32\columnwidth]{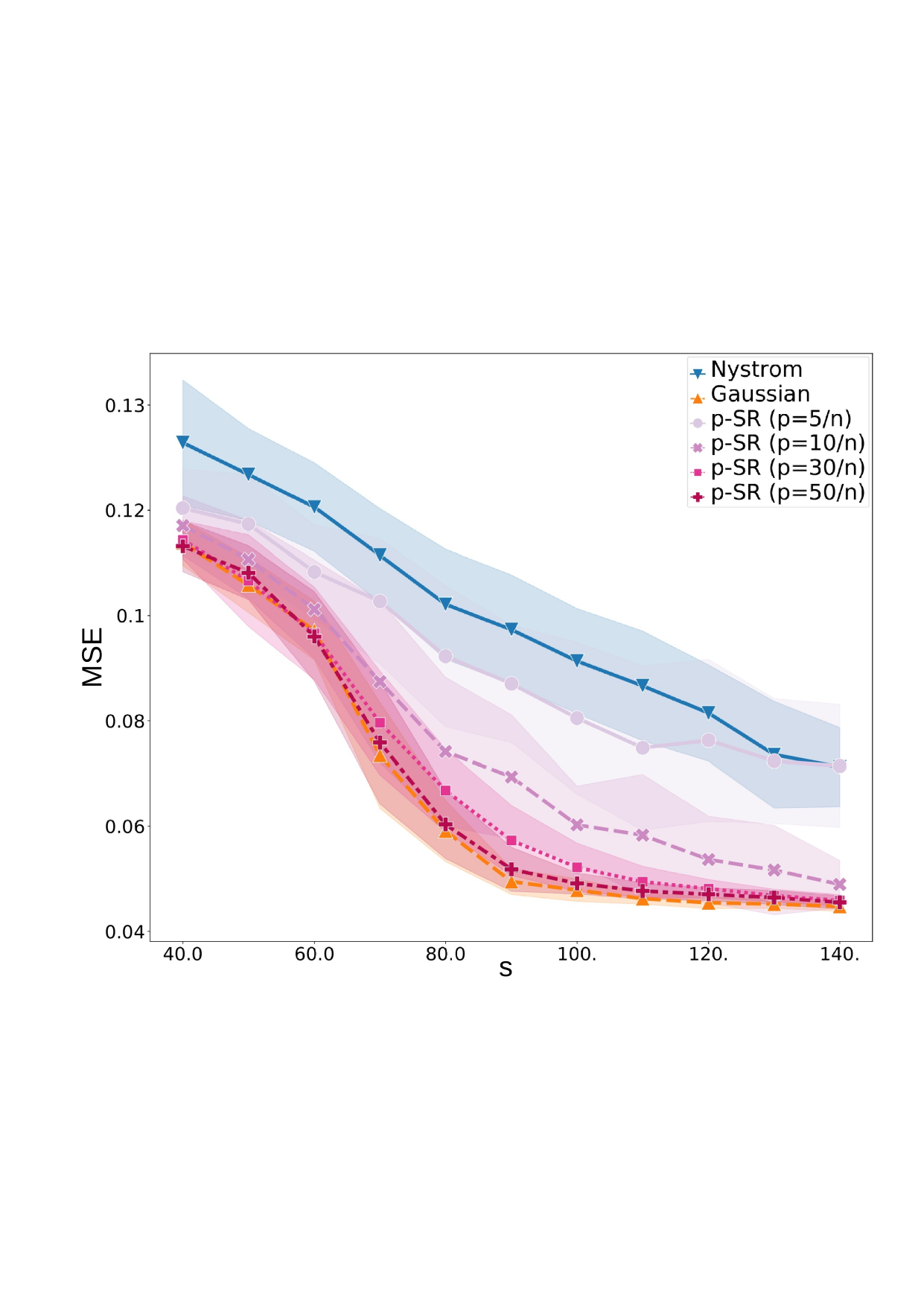}}
\qquad
\subfigure[Training time (seconds) w.r.t. $s$ with $\kappa$-Huber.]
% for the $\kappa$-Huber on toy dataset.]
{\label{fig:khub_times_pSR}\includegraphics[height=0.32\columnwidth]{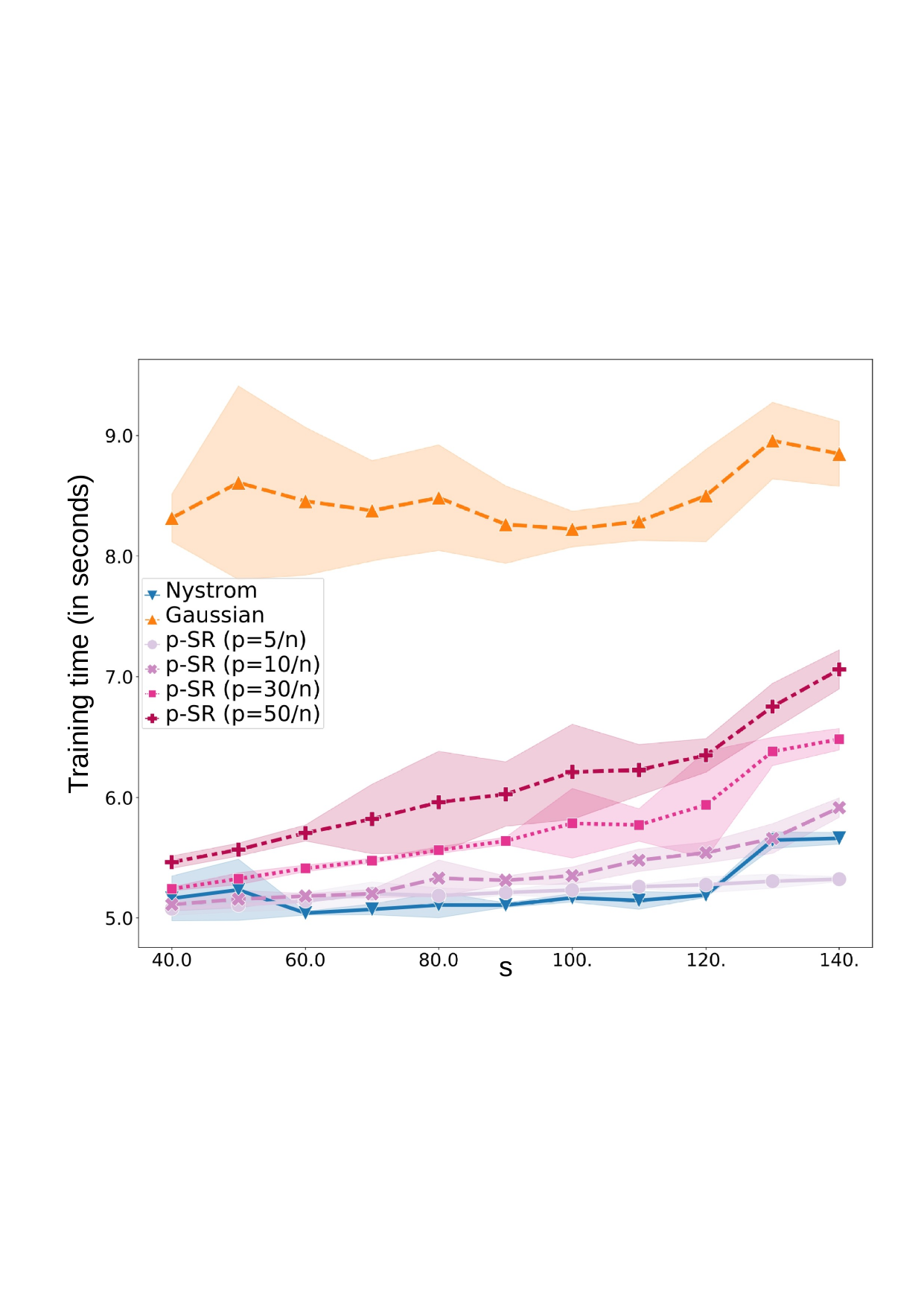}}\\
\subfigure[Test relative MSE w.r.t. training times with $\kappa$-Huber.]
% for the $\kappa$-Huber on toy data set.]
{\label{fig:khub_scatter_pSR}\includegraphics[scale=0.55]{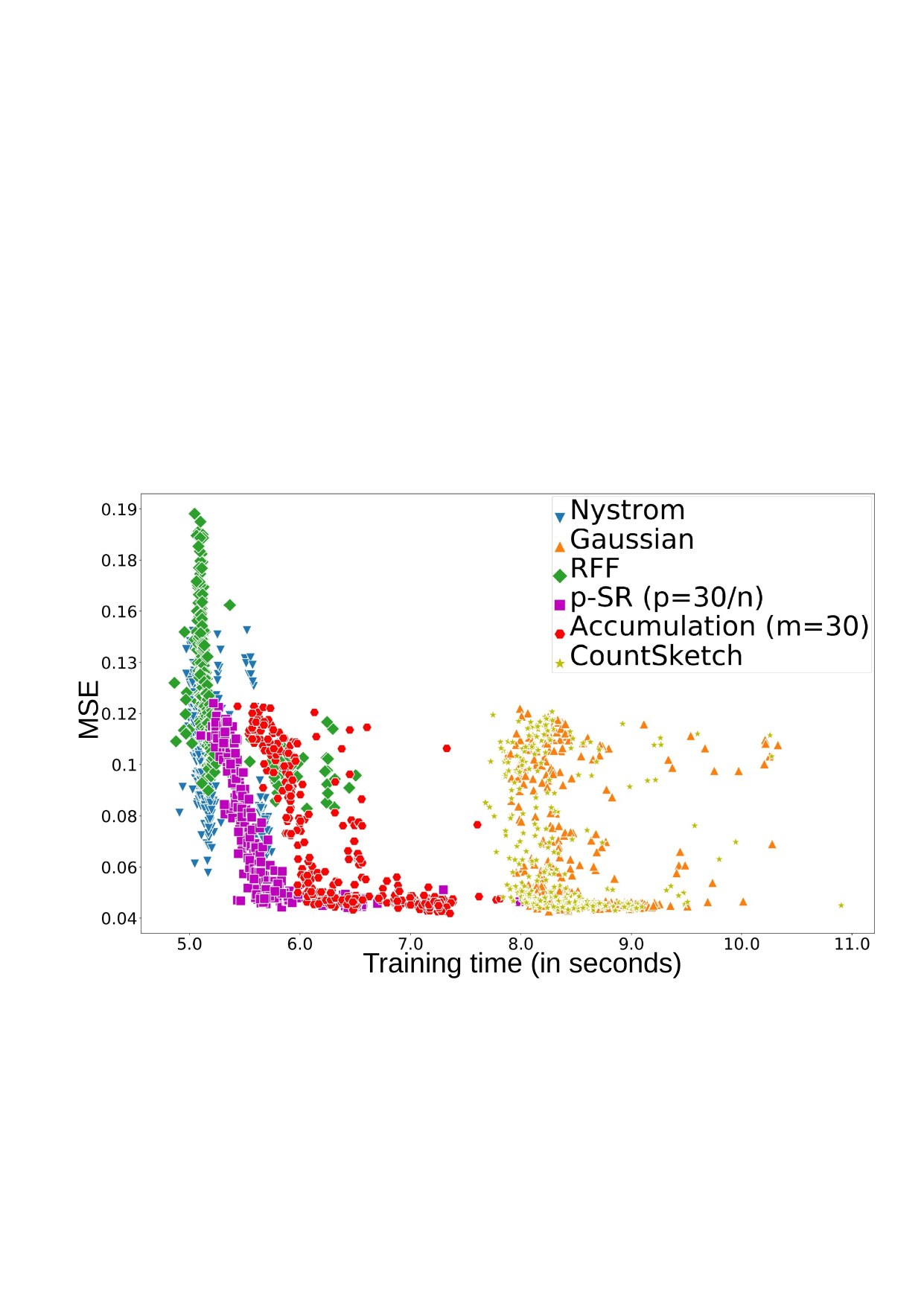}}
\caption{Trade-off between Accuracy and Efficiency for $p$-SR sketches with $\kappa$-Huber loss on synthetic dataset.}
\centering
\label{fig:khub_toy_pSR}
\end{figure}

\subsection{Vector-valued regression}

\begin{table*}[!ht]
\caption{Test pinball and crossing loss and training times (in seconds) with and without sketching ($s=50$).}\vspace{0.1cm}
\begin{adjustbox}{center}
\begin{small}
\begin{tabular}{ c c c c c c c c }
    \hline
    Dataset & Metrics & w/o Sketch & $20/n_{tr}$-SR & $20/n_{tr}$-SG & Acc. $m=20$ & CountSketch \\ 
    \hline
    \multirow{3}{*}{Boston} & Pinball loss & $\textbf{51.28} \pm \textbf{0.67}$ & $54.75 \pm 0.74$ & $54.78 \pm 0.72$ & $54.73 \pm 0.75$ & $54.60 \pm 0.72$ \\
        & Crossing loss &  $0.34 \pm 0.13$ & $0.26 \pm 0.08$ & $0.11 \pm 0.07$ & $0.15 \pm 0.07$ & $\textbf{0.10} \pm \textbf{0.05}$ \\
        & Training time & $6.97 \pm 0.25$ & $1.43 \pm 0.07$ & $1.38 \pm 0.08$ & $1.48 \pm 0.05$ & $\textbf{1.23} \pm \textbf{0.07}$ \\
    \hline
    \multirow{3}{*}{otoliths} & Pinball loss & $2.78$ & $2.66 \pm 0.02$ & $\textbf{2.64} \pm \textbf{0.02}$ & $2.67 \pm 0.03$ & $2.65 \pm 0.02$ \\
        & Crossing loss &  $\textbf{5.18}$ & $5.46 \pm 0.06$ & $5.43 \pm 0.05$ & $5.46 \pm 0.06$ & $5.44 \pm 0.05$ \\
        & Training time & $606.8$ & $20.4 \pm 0.5$ & $\textbf{20.0} \pm \textbf{0.3}$ & $22.1 \pm 0.4$ & $20.9 \pm 0.3$ \\
    \hline
\end{tabular}
\end{small}
\end{adjustbox}
\label{table:joint_quant_res}
%\vspace{-0.3cm}
\end{table*}

\paragraph{Joint quantile regression.}
We choose the quantile levels as follows $\tau = (0.1, 0.3, 0.5, 0.7, 0.9)$. We apply a subgradient algorithm to minimize the pinball loss described in \Cref{apx:ex-losses} with ridge regularization and a kernel $\bmK = k M$ with $M$ discussed in \Cref{ex:decomposable-quantile}, and $k$ a Gaussian kernel. We select regularisation parameter $\lambda_n$ and bandwidth of kernel $\sigma^2$ via a 5-fold cross-validation. We showcase the behaviour of the proposed algorithm for Joint Sketched Quantile Regression on two datasets: the Boston Housing dataset \citep{harrison1978hedonic}, composed of $506$ data points devoted to house price prediction, and the Fish Otoliths dataset \citep{moen2018automatic, ordonez2020explaining}, dedicated to fish age prediction from images of otoliths (calcium carbonate structures), composed of a train and test sets of size $3780$ and $165$ respectively. The results are averages over 10 random $70\%-30\%$ train-test splits for Boston dataset.
For the Otoliths dataset we kept the initial given train-test split.
The results are reported in \Cref{table:joint_quant_res}.
Sketching allows for a massive reduction of the training times while preserving the statistical performances.
As a comparison, according to the results of \citet{sangnier2016}, the best benchmark result for the Boston dataset in terms of test pinball loss is 47.4, while best test crossing loss is 0.48, which shows that our implementation does not compete in terms of quantile prediction but preserves the non-crossing property.

%\begin{table}
%\caption{Empirical test pinball and crossing and training times (in s) obtained without Sketching and with an IG sketch.}
%\begin{adjustbox}{center}
%\begin{tabular}{ |c|c|c|c|c| }
%    \hline
%    Data set & Sketch type & Pinball loss & Crossing loss & Training time (in s) \\ 
%    \hline
%    \multirow{2}{*}{Boston} & w/o Sketch & $\textbf{55.3} \pm 0.5$ & $0.03 \pm 0.007$ & $7.5 \pm 0.3$ \\
%        & IG & $56.6 \pm 0.8$ & $\textbf{0.003} \pm 0.001$ & $\textbf{0.8} \pm 0.04$ \\
%    \hline
%    \multirow{2}{*}{otoliths} & w/o Sketch & $\textbf{4.4} \pm 0.08$ & $\textbf{0.3} \pm 0.04$ & $314 \pm 2.8$ \\
%        & IG & $8.7 \pm 0.8$ & $3.9 \pm 0.1$ & $\textbf{17} \pm 0.2$ \\
%    \hline
%\end{tabular}
%\end{adjustbox}
%\label{table:joint_quant_res}
%\end{table}

%\begin{table}
%\begin{adjustbox}{center}
%\caption{}
%\begin{tabular}{ |a|b|c|d|e|f|g| }
%    \hline
%    \multirow{2}{*}{Datasets} & \multicolumn{2}{b|c}{Pinball loss} & \multicolumn{2}{d|e}{Crossing loss} &\multicolumn{2}{f|g}{Training times} \\
%        & W/o Sketch & IG & W/o Sketch & IG & W/o Sketch & IG \\
%    \hline
%    Boston & $\textbf{55.3} \pm 0.5$ & $56.6 \pm 0.8$ & $0.03 \pm 0.007$ & $\textbf{0.003} \pm 0.001$  & $7.5 \pm 0.3$ & $\textbf{0.8} \pm 0.04$ \\
%    \hline
%    otoliths & $\textbf{4.4} \pm 0.08$ & $8.7 \pm 0.8$ & $textbf{0.3} \pm 0.04$ & $3.9 \pm 0.1$  & $314 \pm 2.8$ & $\textbf{17} \pm 0.2$ \\
%    \hline
%\end{tabular}
%\end{adjustbox}
%\label{table:joint_quant_res}
%\end{table}

\begin{table*}[!ht]
\caption{ARRMSE and training times (in sec) with square loss and $s = 100$ when using Sketching.}\vspace{0.1cm}
\begin{adjustbox}{center}
\begin{small}
\begin{tabular}{ c c c c c c c }
 %\hline
 % &  & \multicolumn{3}{c||}{ $\kappa$-Huber decomposable kernel} & ERC$_{\text{cv}}$ \\
 \hline
 Dataset & Metrics & w/o Sketch & $20/n_{tr}$-SR & $20/n_{tr}$-SG & Acc. $m=20$ & CountSketch \\ 
 \hline
 \multirow{2}{*}{rf1} & ARRMSE & $\textbf{0.575}$ & $0.584 \pm 0.003$ & $0.583 \pm 0.003$ & $0.592 \pm 0.001$ & $\textbf{0.575} \pm \textbf{0.0005}$ \\
        & Training time & $1.73$ & $\textbf{0.22} \pm \textbf{0.025}$ & $0.25 \pm 0.005$ & $0.60 \pm 0.0004$ & $0.66 \pm 0.013$ \\
 \hline
 \multirow{2}{*}{rf2} & ARRMSE & $\textbf{0.578}$ & $0.671 \pm 0.009$ & $0.656 \pm 0.006$ & $0.796 \pm 0.006$ & $0.715 \pm 0.011$ \\
        & Training time & $1.77$ & $0.28 \pm 0.003$ & $\textbf{0.27} \pm \textbf{0.003}$ & $0.82 \pm 0.003$ & $0.62 \pm 0.001$ \\
 \hline
 \multirow{2}{*}{scm1d} & ARRMSE & $\textbf{0.418}$ & $0.422 \pm 0.002$ & $0.423 \pm 0.001$ & $0.423 \pm 0.001$ & $0.420 \pm 0.001$ \\
        & Training time & $9.36$ & $\textbf{0.45} \pm \textbf{0.022}$ & $\textbf{0.45} \pm \textbf{0.019}$ & $0.86 \pm 0.006$ & $2.49 \pm 0.035$ \\
 \hline
 \multirow{2}{*}{scm20d} & ARRMSE & $0.755$ & $0.754 \pm 0.003$ & $0.754 \pm 0.003$ & $\textbf{0.753} \pm \textbf{0.001}$ & $0.754 \pm 0.002$ \\
        & Training time & $6.16$ & $\textbf{0.38} \pm \textbf{0.016}$ & $\textbf{0.38} \pm \textbf{0.017}$ & $0.70 \pm 0.032$ & $1.91 \pm 0.047$ \\
 \hline
\end{tabular}
\end{small}
\end{adjustbox}
\label{table:mtr_results}
\end{table*}

\paragraph{Multi-output regression.}
We finally conducted experiments on multi-output kernel ridge regression. We used decomposable kernels, and took the largest datasets introduced in \citet{spyromitros2016multi}.
% \footnote{available at \href{http://mulan.sourceforge.net/datasets-mtr.html}{http://mulan.sourceforge.net/datasets-mtr.html}.}.
They consist in four datasets, divided in two groups: River Flow (rf1 and rf2) both composed of $4108$ training data, and Supply Chain Management (scm1d and scm20d) composed of $8145$ and $7463$ training data respectively (more details and additional results can be found in \Cref{apx:sec-results-mtr}). 
We compare our non-sketched decomposable matrix-valued kernel machine with the sketched version. For the sake of conciseness, we only report here the Average Relative Root Mean Squared Error (ARRMSE), see \Cref{table:mtr_results} and \cref{apx:sec-results-mtr}. For all datasets, sketching shows strong computational improvements while maintaining the accuracy of non-sketched approaches.
% Note that kernel methods however do not reach the best performance on these problems (see \cref{apx:sec-results-mtr} for more details).
%Although our method shows bad accuracy when compared to the best results obtained in \cite{spyromitros2016multi}, it always remains within the range of results obtained with SOTA methods.}

Note that for both joint quantile regression and multi-output regression the results obtained after sketching (no matter the sketch chosen) are almost the same as that attained without sketching. It might be explained by two factors. First, the datasets studied have relatively small training sizes (from $354$ training data for Boston to $8145$ for scm1d). Second, predicting jointly multiple outputs is a complex task, so that it appears more natural to obtain less differences and variance using various types of sketches (or no sketch). However, in all cases sketching induces a huge time saver.

%\bigskip

\section{Conclusion}

We proposed excess-risk bounds for sketched kernel machines in the context of Lipschitz-continuous losses, with results valid for both scalar and matrix-valued kernels. We introduced a novel sketching scheme that leverages the good empirical statistical guarantees of the Gaussian Sketching while combining them with the low cost of Nystr\"{o}m sketching. Numerical experiments show that this novel scheme opens the door to many applications beyond the squared loss.
Improvements on multi-output regression can certainly be obtained by applying low-rank considerations in the output space as well.

\subsubsection*{Acknowledgments}

The authors thank Olivier Fercoq for insightful discussions. This work was supported by the Télécom Paris research chair on Data Science and Artificial Intelligence for Digitalized Industry and Services (DSAIDIS) and by the French National Research Agency (ANR) through the ANR-18-CE23-0014 APi project.

\bibliography{ref}
\bibliographystyle{tmlr}

\newpage
\appendix
\onecolumn

\section{Technical Proofs}
\label{apx:proofs}

In this section are gathered all the technical proofs of the results stated in the article.
\paragraph{\bf Notation.}
We recall that we assume that training data $(x_i, y_i)_{i=1}^n$ are i.i.d. realisations sampled from a joint probability density $P(x, y)$. We define
\begin{align*}
    \E_n[\ell_f] &= \frac{1}{n} \sum_{i=1}^n \ell(f(x_i), y_i), \\
    \E[\ell_f] &= \E_P[\ell(f(X), Y].
\end{align*}
For a class of functions $F$, the empirical Rademacher complexity \citep{bartlett2002} is defined as
\begin{equation*}
    \hat{R}_n(F) = \E\left[ \sup_{f \in F} \left| \frac{2}{n} \sum_{i=1}^n \sigma_i f(x_i) \right| | x_1, \ldots, x_n \right],
\end{equation*}
where $\sigma_1, \ldots, \sigma_n$ are independent Rademacher random variables such that $\Prob\left\{\sigma_i = 1\right\} = \Prob\left\{\sigma_i = -1\right\} = 1/2$.
%uniform $\{\pm 1\}$-valued random variables.
The corresponding Rademacher complexity is then defined as the expectation of the empirical Rademacher complexity
\begin{equation*}
    R_n(F) = \E\left[ \hat{R}_n(F) \right].
\end{equation*}

\subsection{Proof of Theorem \ref{theorem:excess_bound_worst}}\label{apx:proof_thm2}

We first prove the first inequality in \cref{theorem:excess_bound_worst} for generic Lipschitz losses.
%\Cref{eq:bound_scalar}.

\begin{proof}
    The proof follows that of \citet[Theorem~3]{li2021unified}. We decompose the expected learning risk as
    \begin{equation}\tag{\ref{eq:decompo}}
        \E[\ell_{\tilde{f}_s}] - \E[\ell_{f_{\bmH_k}}] = \E[\ell_{\tilde{f}_s}] - \E_n[\ell_{\tilde{f}_s}] + \E_n[\ell_{\tilde{f}_s}] - \E_n[\ell_{f_{\bmH_k}}] + \E_n[\ell_{f_{\bmH_k}}] - \E[\ell_{f_{\bmH_k}}].
    \end{equation}
    %\begin{equation}\tag{\ref{eq:decompo}}
    %    \E[\ell_{\tilde{f}_s}] - \E[\ell_{f_{\bmH_k}}] = \E[\ell_{\tilde{f}_s}] - \E_n[\ell_{\tilde{f}_s}] + \E_n[\ell_{\tilde{f}_s}] - \E_n[\ell_{f_{\bmH_k}}] + \E_n[\ell_{f_{\bmH_k}}] - \E[\ell_{f_{\bmH_k}}].
    %\end{equation}
    We then use \citet[Theorem~8]{bartlett2002} to bound $\E[\ell_{\tilde{f}_s}] - \E_n[\ell_{\tilde{f}_s}]$ and $\E_n[\ell_{f_{\bmH_k}}] - \E[\ell_{f_{\bmH_k}}]$.
    \begin{lemma}\label{lemma:genboundworst}\citep[Theorem~8]{bartlett2002}
        Let $\left\{x_{i}, y_{i}\right\}_{i=1}^{n}$ be i.i.d samples from $P$ and let $\bmH$ be the space of functions mapping from $\bmX$ to $\reals$. Denote a loss function with $l: \bmY \times \reals \rightarrow[0,1]$ and recall the learning risk function for all $f \in \bmH$ is $\E\left[l_{f}\right]$, together with the corresponding empirical risk function $\E_{n}\left[l_{f}\right]=(1 / n) \sum_{i=1}^{n} l\left(y_{i}, f\left(x_{i}\right)\right)$. Then, for a sample of size $n$, for all $f \in \bmH$ and $\delta \in(0,1)$, with probability $1-\delta/2$, we have that
        \begin{equation*}
            \E\left[l_{f}\right] \leq \E_{n}\left[l_{f}\right]+R_{n}(l \circ \bmH)+\sqrt{\frac{8 \log (4 / \delta)}{n}}
        \end{equation*}
        where $l \circ \bmH=\{(x, y) \rightarrow l(y, f(x))-l(y, 0) \mid f \in \bmH\}$.
    \end{lemma}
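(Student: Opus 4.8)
The plan is to prove this as a standard uniform-deviation bound, combining a bounded-differences concentration inequality with a symmetrization argument, exactly as in the proof of \citet[Theorem~8]{bartlett2002}. The quantity to control is the one-sided uniform deviation $\Phi(S) = \sup_{f \in \bmH}\big(\E[l_f] - \E_n[l_f]\big)$ over the sample $S = \{(x_i,y_i)\}_{i=1}^n$, since for every fixed $f$ one trivially has $\E[l_f] \le \E_n[l_f] + \Phi(S)$. Bounding $\Phi(S)$ from above with high probability therefore yields the stated inequality \emph{simultaneously} for all $f \in \bmH$, which is the content of the lemma.

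First I would establish concentration of $\Phi(S)$ around its mean. Because $l$ takes values in $[0,1]$, replacing a single sample $(x_i,y_i)$ changes $\E_n[l_f]$ by at most $1/n$ for every $f$, and hence changes $\Phi(S)$ by at most $1/n$. McDiarmid's bounded-differences inequality then gives, with probability at least $1 - \delta/2$, a deviation of the form $\Phi(S) \le \E_S[\Phi(S)] + O\big(\sqrt{\log(1/\delta)/n}\big)$; keeping track of the constants with the failure level $\delta/2$ reproduces the term $\sqrt{8\log(4/\delta)/n}$ appearing in the statement.

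Second I would bound the expectation $\E_S[\Phi(S)]$ by the Rademacher complexity via symmetrization with an independent ghost sample $S'$ and i.i.d. Rademacher signs $\sigma_i$. Jensen's inequality and the standard swap of $(x_i,y_i)$ with its ghost copy give $\E_S[\Phi(S)] \le \E_{S,\sigma}\big[\sup_f \tfrac{2}{n}\sum_i \sigma_i\, l(y_i,f(x_i))\big]$, the factor $2$ matching the normalization in the paper's definition of $\hat R_n$. The remaining manipulation is to pass from the loss class to the shifted class $l\circ\bmH$: since $l(y_i,0)$ does not depend on $f$ and $\E_\sigma[\tfrac{2}{n}\sum_i\sigma_i l(y_i,0)] = 0$, I can subtract this fixed anchor inside the supremum without altering the expectation, and then upper bound by the absolute value present in the definition of $\hat R_n$, obtaining $\E_S[\Phi(S)] \le R_n(l\circ\bmH)$. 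Combining the two steps gives, with probability at least $1-\delta/2$ and for all $f \in \bmH$, the desired $\E[l_f] \le \E_n[l_f] + R_n(l\circ\bmH) + \sqrt{8\log(4/\delta)/n}$.

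The main obstacle is the symmetrization-with-shift step: $\Phi$ is defined through the unshifted loss, whereas the lemma is phrased in terms of $R_n(l\circ\bmH)$, so one must argue carefully that subtracting the anchor $l(\cdot,0)$ is harmless \emph{in expectation over the Rademacher signs}, which is valid in the no-absolute-value symmetrized quantity but must precede the step that introduces the absolute value defining $R_n$. The other mildly delicate point is purely bookkeeping, namely matching the precise (deliberately loose) constant $\sqrt{8\log(4/\delta)/n}$ of \citet{bartlett2002}, together with the $\delta/2$ budget that the surrounding proof later splits across its two applications of this lemma to $\tilde{f}_s$ and $f_{\bmH_k}$.
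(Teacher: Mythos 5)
Your proposal is correct and takes essentially the same route as the source: the paper does not prove this lemma but imports it verbatim from \citet[Theorem~8]{bartlett2002} (with $\delta$ replaced by $\delta/2$, turning $\sqrt{8\log(2/\delta)/n}$ into $\sqrt{8\log(4/\delta)/n}$), and your McDiarmid-plus-symmetrization argument --- including the key point that the anchor $l(\cdot,0)$ is subtracted inside the supremum \emph{before} the absolute value defining $R_n$ is introduced (valid since the anchor term is $f$-independent and has zero Rademacher expectation), and the remark that the constant is deliberately loose relative to what bounded differences with increments $1/n$ actually give --- is precisely the standard proof of that cited theorem. Nothing further is needed.
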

    Thus, since $\tilde{f}_s$ lies in the unit ball $\bmB\left(\bmH_k\right)$ of $\bmH_k$ by \Cref{hyp:unit_ball}, we obtain thanks to the above lemma, with a probability at least $1 - \delta$
    \begin{equation*}
        \E\big[\ell_{\tilde{f}_s}\big] - \E_n\big[\ell_{\tilde{f}_s}\big] \leq R_n\big(\ell \circ \bmB\left(\bmH_k\right)\big) + \sqrt{\frac{8 \log(2/\delta)}{n}}.
    \end{equation*}
    Then, by the Lipschitz continuity of $\ell$ (\Cref{hyp:Lipschitz_loss}) and point 4 of Theorem~12 from \cite{bartlett2002}, we have that
    \begin{equation*}
        R_{n}\left(\ell \circ \bmB\left(\bmH_k\right)\right) \leq 2 L R_{n}\left(\bmB\left(\bmH_k\right)\right).
    \end{equation*}
    Finally, \Cref{hyp:bounded_kernel} combined with Lemma 22 from \cite{bartlett2002} then yields
    \begin{equation*}
        R_{n}\left(\bmB\left(\bmH_k\right)\right) \leq \frac{2}{n} \sqrt{\sum_{i=1}^n k\left(x_i, x_i\right)} \leq 2 \sqrt{\frac{\kappa}{n}}.
    \end{equation*}
    As a consequence, we obtain
    \begin{equation}\label{eq:gen_bound}
        \E\left[\ell_{\tilde{f}_s}\right] - \E_n\left[\ell_{\tilde{f}_s}\right] \leq \frac{4 L \sqrt{\kappa}}{\sqrt{n}} + \sqrt{\frac{8 \log(4/\delta)}{n}},
    \end{equation}
    and the exact same result applies to $\E_n[\ell_{f_{\bmH_k}}] - \E[\ell_{f_{\bmH_k}}]$, by \Cref{hyp:unit_ball} and the opposite side of \Cref{lemma:genboundworst}.

    We now focus on the last quantity to bound. Let $\bmH_S = \left\{f=\sum_{i=1}^n \left[S^\top \gamma\right]_i k\left(\cdot,x_i\right) ~ | ~ \gamma \in \reals^s\right\}$. By \Cref{hyp:unit_ball,hyp:Lipschitz_loss} and Jensen's inequality we have
    \begin{align*}
        \E_{n}\left[\ell_{\tilde{f}_s}\right]-\E_{n}\left[\ell_{f_{\bmH_k}}\right] &= \frac{1}{n} \sum_{i=1}^{n} \ell\left(\tilde{f}_s(x_i), y_i\right)-\frac{1}{n} \sum_{i=1}^{n} \ell\left(f_{\bmH_k}(x_i), y_i\right) \\
        &\leq \frac{1}{n} \sum_{i=1}^{n} \ell\left(\tilde{f}_s(x_i), y_i\right) + \frac{\lambda_n}{2} \left\|\tilde{f}_s\right\|_{\bmH_k}^2 -\frac{1}{n} \sum_{i=1}^{n} \ell\left(f_{\bmH_k}(x_i), y_i\right) \\
        &= \inf _{\underset{\left\|f\right\|_{\bmH_k} \leq 1}{f \in \bmH_S}} \frac{1}{n} \sum_{i=1}^{n} \ell\left(f(x_i), y_i\right)-\frac{1}{n} \sum_{i=1}^{n} \ell\left(f_{\bmH_k}(x_i), y_i\right) + \frac{\lambda_n}{2} \left\|f\right\|_{\bmH_k}^2 \\
        &\leq \inf _{\underset{\left\|f\right\|_{\bmH_k} \leq 1}{f \in \bmH_S}} \frac{L}{n} \sum_{i=1}^{n}\left|f\left(x_{i}\right)-f_{\bmH_k}\left(x_{i}\right)\right| + \frac{\lambda_n}{2} \\
        &\leq L \inf_{\underset{\left\|f\right\|_{\bmH_k} \leq 1}{f \in \bmH_S}} \sqrt{\frac{1}{n} \sum_{i=1}^{n}\left|f\left(x_{i}\right)-f_{\bmH_k}\left(x_{i}\right)\right|^{2}} + \frac{\lambda_n}{2} \\
        &= L \sqrt{\inf _{\underset{\left\|f\right\|_{\bmH_k} \leq 1}{f \in \bmH_S}} \frac{1}{n}\left\|f^X-f_{\bmH_k}^X\right\|_{2}^{2}} + \frac{\lambda_n}{2},
    \end{align*}
    where, for any $f \in \bmH_k$, $f^X = \left(f(x_1), \ldots, f(x_n)\right) \in \reals^n$. Let $\tilde{f}_s^R = \sum_{i=1}^n \left[S^\top \tilde{\gamma}^R\right]_i k\left(\cdot,x_i\right)$, where $\tilde{\gamma}^R$ is a solution to
    \begin{equation*}
        \inf _{\gamma \in \reals^s} \frac{1}{n}\left\|K S^\top \gamma - f_{\bmH_k}^X\right\|_{2}^{2} + \lambda_n \gamma^\top S K S^\top \gamma\,.
    \end{equation*}
    It is easy to check that $\tilde{f}_s^R$ is also a solution to
    \begin{equation*}
        \inf _{\underset{\left\|f\right\|_{\bmH_k} \leq \left\|\tilde{f}_s^R\right\|_{\bmH_k}}{f \in \bmH_S}} \frac{1}{n}\left\|f^X-f_{\bmH_k}^X\right\|_{2}^{2}\,.
    \end{equation*}
    Since we have $\|\tilde{f}_s^R\|_{\bmH_k} \leq 1$ by \cref{hyp:unit_ball}, it holds
    \begin{align*}
    \inf_{\underset{\left\|f\right\|_{\bmH_k} \leq 1}{f \in \bmH_S}} \frac{1}{n}\left\|f^X-f_{\bmH_k}^X\right\|_{2}^{2} &\le \inf_{\underset{\left\|f\right\|_{\bmH_k} \leq \left\|\tilde{f}_s^R\right\|_{\bmH_k}}{f \in \bmH_S}} \frac{1}{n}\left\|f^X-f_{\bmH_k}^X\right\|_{2}^{2}\\
    &= \inf_{\gamma \in \reals^s} \frac{1}{n}\left\|K S^\top \gamma - f_{\bmH_k}^X\right\|_{2}^{2} + \lambda_n \gamma^\top S K S^\top \gamma\,.
    \end{align*}
    As a consequence,
    \begin{equation*}
        \E_{n}\left[\ell_{\tilde{f}_s}\right]-\E_{n}\left[\ell_{f_{\bmH_k}}\right] \leq L \sqrt{\inf _{\gamma \in \reals^s} \frac{1}{n}\left\|K S^\top \gamma - f_{\bmH_k}^X\right\|_{2}^{2} + \lambda_n \gamma^\top S K S^\top \gamma} + \frac{\lambda_n}{2}\,.
    \end{equation*}
    
    Finally, since $S$ is a $K$-satisfiable sketch matrix, using Lemma 2 from \cite{Yang2017},
        \begin{equation}\label{eq:approx_error}
            \E_{n}\left[\ell_{\tilde{f}_s}\right]-\E_{n}\left[\ell_{f_{\bmH_k}}\right] \leq L C \sqrt{\lambda_n + \delta_n^2} + \frac{\lambda_n}{2},
        \end{equation}
        where $C = 1 + \sqrt{6} c$ and $c$ is a universal constant coming from $K$-satisfiable property. The desired bound is obtained by combining Equations \eqref{eq:decompo}, \eqref{eq:gen_bound} and \eqref{eq:approx_error}.
        %Putting all pieces together, we obtain the desired bound.
\end{proof}

We now give the proof of the second claim 
%(Equation \eqref{eq:bound_scalar_ridge}),
i.e., the excess risk bound for kernel ridge regression.

\begin{proof}
    We now assume that the outputs are bounded, hence, without loss of generality, $\bmY \subset [0,1]$. First, we prove Lipschitz-continuity of the square loss under \cref{hyp:unit_ball,hyp:bounded_kernel}. Let $\bmH_k(\bmX) = \left\{f\left(x\right) \colon f \in \bmH_k,\, x \in \bmX\right\}$ and $g : z \in \bmH_k\left(\bmX\right) \mapsto \frac{1}{2}\left(z-y\right)^2$, for $y \in [0,1]$. Hence, $g^\prime(z)=z-y$ and by \cref{hyp:unit_ball,hyp:bounded_kernel}
    \begin{equation*}
        \left|g^\prime(z)\right| \leq \left|z\right|+\left|y\right| \leq \left|f(x)\right|+1 \leq \left|\langle f, k\left(\cdot, x\right)\rangle_{\bmH_k}\right|+1 \leq \left\|f\right\|_{\bmH_k} \kappa^{1/2}+1 = \kappa^{1/2}+1\,,
    \end{equation*}
    for some $f \in \bmH_k$ and $x \in \bmX$ since $z \in \bmH_k\left(\bmX\right)$. 
    %Let $f, f^\prime \in \bmB\left(\bmH_k\right)$, $x, x^\prime \in \bmX$ and $y \in [0,1]$. We have
    %\begin{align*}
    %    \left|\ell\left(f\left(x\right), y\right)-\ell\left(f^\prime\left(x^\prime\right), y\right)\right| &= \left|\frac{1}{2}\left(f\left(x\right)-y\right)^2-\frac{1}{2}\left(f^\prime\left(x^\prime\right)-y\right)^2\right| \\
    %    &= \left| \frac{1}{2} f\left(x\right)^2 - \frac{1}{2} f^\prime\left(x^\prime\right)^2 - y\left(f\left(x\right)-f^\prime\left(x^\prime\right)\right)\right| \\
    %    &= \left|\left(\frac{1}{2}\left(f\left(x\right)+f^\prime\left(x^\prime\right)\right)-y\right)\left(f\left(x\right)-f^\prime\left(x^\prime\right)\right)\right| \\
    %    &\leq \left(\frac{1}{2}\left(\left|f\left(x\right)\right|+\left|f^\prime\left(x^\prime\right)\right|\right)+\left|y\right|\right)\left|f\left(x\right)-f^\prime\left(x^\prime\right)\right|\,.
    %\end{align*}
    %Furthermore, by \cref{hyp:existence_solution,hyp:bounded_kernel} and Cauchy-Schwartz inequality,
    %\begin{equation*}
    %    \left|f\left(x\right)\right| = \left|\langle f, k\left(\cdot, x\right)\rangle_{\bmH_k}\right| \leq \left\|f\right\|_{\bmH_k} \kappa^{1/2} = \kappa^{1/2}\,.
    %\end{equation*}
    As a consequence, we obtain that $g$ is $\left(\kappa^{1/2}+1\right)$-Lipschitz, i.e.
    \begin{equation*}
        \left|\ell\left(f\left(x\right), y\right)-\ell\left(f^\prime\left(x^\prime\right), y\right)\right| \leq \left(\kappa^{1/2}+1\right)\left|f\left(x\right)-f^\prime\left(x^\prime\right)\right|\,.
    \end{equation*}
    We can then obtain the same generalisation bounds as above. Finally, looking at the approximation term,
    \begin{align*}
        \E_{n}\left[\ell_{\tilde{f}_s}\right]-\E_{n}\left[\ell_{f_{\bmH_k}}\right] &=
        \frac{1}{2 n} \left\|\tilde{f}_s^X - Y\right\|_2^2 - \frac{1}{2 n} \left\|f_{\bmH_k}^X - Y\right\|_2^2 \\
        &\leq \frac{1}{2 n} \left\|\tilde{f}_s^X - f_{\bmH_k}^X \right\|_2^2 \\
        &\leq \inf _{\underset{\left\|f\right\|_{\bmH_k} \leq 1}{f \in \bmH_S}} \frac{1}{2 n}\left\|f^X-f_{\bmH_k}^X\right\|_{2}^{2} + \frac{\lambda_n}{2} \\
        &\leq \inf _{\gamma \in \reals^s} \frac{1}{n}\left\|K S^\top \gamma - f_{\bmH_k}^X\right\|_{2}^{2} + \lambda_n \gamma^\top S K S^\top \gamma + \frac{\lambda_n}{2} \\
        &\leq \left(C^2 + \frac{1}{2}\right) \lambda_n + C^2 \delta_n^2\,.
    \end{align*}
    We obtain the same bound as above without the square root since we do not need to invoke Jensen inequality as the studied loss is the square loss. Gathering all arguments, we obtain last inequality in \cref{theorem:excess_bound_worst}.
    %\Cref{eq:bound_scalar_ridge}.
\end{proof}

\subsection{Refined analysis in the scalar case}
\label{apx:excess_bound_refined}

As said in \Cref{remark:excess_bound_refined}, and similarly to \cite{li2021unified}, we can conduct a refined analysis, leading to faster convergence rates for the generalization errors, with the following additional assumption.
\begin{assumption}\label{hyp:boundB}
   There is a constant $B$ such that, for all $f \in \bmH_k$ we have
    \begin{equation}
        \E\left[f-f_{\bmH_k}\right]^{2} \leq B \E\left[\ell_{f}-\ell_{f_{\bmH_k}}\right]\,.
    \end{equation}
\end{assumption}
It has be shown that many loss functions satisfy this assumption such as Hinge loss \citep{steinwart2008support, bartlett2006convexity}, truncated quadratic  or sigmoid loss \citep{bartlett2006convexity}. Under \cref{hyp:existence_solution,hyp:unit_ball,hyp:Lipschitz_loss,hyp:bounded_kernel,hyp:Ksatis,hyp:boundB}, the following result holds:
\begin{theorem}\label{theorem:excess_bound_refined}
    We define, for $\delta \in (0,1)$, the following sub-root function $\hat{\psi}_n$
    \begin{equation*}
        \hat{\psi}_n(r) = 2 L C_1 \left(\frac{2}{n} \sum_{i=1}^{n} \min \left\{b_2 r, \mu_{i}\right\}\right)^{1 / 2} + \frac{C_{2}}{n} \log \frac{1}{\delta},
    \end{equation*}
    and let $\hat{r}_{\bmH_k}^{*}$ be the fixed point of $\hat{\psi}_{n}$, i.e., $\hat{\psi}_{n}\left(\hat{r}_{\bmH_k}^{*}\right)=\hat{r}_{\bmH_k}^{*}$. Then, we have for all $D>1$ and $\delta \in (0,1)$ with probability greater than $1-\delta$,
    \begin{equation}
        \E\left[ \ell_{\tilde{f}_s} \right] \leq \E\left[ \ell_{f_{\bmH_k}} \right] + \frac{D}{D-1}\left( L C \sqrt{\lambda_n + \delta_n^2} + \frac{\lambda_n}{2}\right) + \frac{12 D}{B} \hat{r}_{\bmH_k}^\star + \frac{2 C_{3}}{n} \log \frac{1}{\delta},
    \end{equation}
    where $C$ is as in Theorem \ref{theorem:excess_bound_worst} and $C_1$, $C_2$, $C_3$ and $b_2$ are some constants and $\hat{r}_{\bmH_k}^{*}$ can be upper bounded by
    \begin{equation*}
        \hat{r}_{\bmH_k}^\star \leq \min _{0 \leq h \leq n}\left(b_{0} \frac{h}{n}+\sqrt{\frac{1}{n} \sum_{i>h} \mu_{i}}\right),
    \end{equation*}
    where $B$ and $b_{0}$ are some constants.
\end{theorem}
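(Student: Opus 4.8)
The plan is to follow the same three-term decomposition of the excess risk used in the proof of \Cref{theorem:excess_bound_worst}, namely \eqref{eq:decompo}, but to replace the global Rademacher bound on the two generalization terms by a localized analysis that exploits the Bernstein-type variance condition of \Cref{hyp:boundB}. The central (approximation) term $\E_n[\ell_{\tilde{f}_s}] - \E_n[\ell_{f_{\bmH_k}}]$ is handled verbatim as before: since $\tilde{f}_s$ lies in the unit ball, the argument leading to \eqref{eq:approx_error} applies unchanged and yields the bound $L C \sqrt{\lambda_n + \delta_n^2} + \lambda_n/2$, with $C = 1 + \sqrt{6}c$. The novelty lies entirely in bounding the two terms $\E[\ell_{\tilde{f}_s}] - \E_n[\ell_{\tilde{f}_s}]$ and $\E_n[\ell_{f_{\bmH_k}}] - \E[\ell_{f_{\bmH_k}}]$ at a faster rate.

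First I would invoke the localized Rademacher complexity machinery of \citet{bartlett2005}, in the form used by \citet{li2021unified}. Applied to the excess-loss class $\{(x,y) \mapsto \ell(f(x),y) - \ell(f_{\bmH_k}(x),y) : f \in \bmB(\bmH_k)\}$, and using that \Cref{hyp:boundB} supplies the variance-to-excess-risk control $\E[f - f_{\bmH_k}]^2 \leq B\,\E[\ell_f - \ell_{f_{\bmH_k}}]$, this yields: for every $D>1$, with probability at least $1-\delta$, simultaneously for all $f \in \bmB(\bmH_k)$,
\[
\E[\ell_f] - \E[\ell_{f_{\bmH_k}}] \leq \tfrac{D}{D-1}\big(\E_n[\ell_f] - \E_n[\ell_{f_{\bmH_k}}]\big) + \tfrac{12 D}{B}\,\hat{r}_{\bmH_k}^\star + \tfrac{2 C_3}{n}\log\tfrac{1}{\delta},
\]
where $\hat{r}_{\bmH_k}^\star$ is the fixed point of the sub-root upper bound $\hat{\psi}_n$ on the local Rademacher complexity of this class. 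Instantiating at $f = \tilde{f}_s$ and substituting the approximation bound \eqref{eq:approx_error} for the empirical excess risk on the right-hand side then produces the claimed inequality directly.

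The main work, and the step I expect to be most delicate, is justifying that $\hat{\psi}_n$ is indeed a valid sub-root majorant of the local Rademacher complexity of the excess-loss class, and then controlling its fixed point. For the first point, I would combine Talagrand's contraction inequality with the $L$-Lipschitz continuity of the loss (\Cref{hyp:Lipschitz_loss}) to pass from the loss class to the function class $\{f - f_{\bmH_k}\}$ at the cost of the factor $2L$, and then bound the local Rademacher complexity of the RKHS ball by $C_1\big(\tfrac{2}{n}\sum_{i=1}^n \min\{b_2 r, \mu_i\}\big)^{1/2}$ in terms of the eigenvalues $\mu_i$ of $K/n$, which reproduces the definition of $\hat{\psi}_n$. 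For the fixed-point bound, I would use that for any $h \in \{0,\dots,n\}$ one has $\sum_{i=1}^n \min\{b_2 r, \mu_i\} \leq b_2 r\, h + \sum_{i>h}\mu_i$; substituting this into the sub-root equation $\hat{\psi}_n(r) = r$ and solving the resulting inequality in $\sqrt{r}$ gives, after absorbing constants into $b_0$, the stated bound $\hat{r}_{\bmH_k}^\star \leq \min_{0 \leq h \leq n}\big(b_0 \tfrac{h}{n} + \sqrt{\tfrac{1}{n}\sum_{i>h}\mu_i}\big)$. The chief obstacle is the bookkeeping of the various universal constants ($C_1, C_2, C_3, b_0, b_2$), so that the localized theorem of \citet{bartlett2005} applies to the bounded, shifted loss class and matches the precise numerical factors ($12D/B$ and $2C_3/n$) claimed in the statement.
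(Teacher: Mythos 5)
Your proposal follows essentially the same route as the paper: the $D/(D-1)$-weighted decomposition of the excess risk from \citet{li2021unified}, the localized Rademacher complexity theorem of \citet{bartlett2005} combined with the variance condition of \Cref{hyp:boundB} and Lipschitz contraction to bound the two generalization terms by $\frac{12D}{B}\hat{r}_{\bmH_k}^\star + \frac{2C_3}{n}\log\frac{1}{\delta}$, the eigenvalue bound on the local complexity of the RKHS ball to identify $\hat{\psi}_n$ and its fixed point, and the reuse of \eqref{eq:approx_error} for the approximation term. The steps you flag as delicate (constant bookkeeping, validity of $\hat{\psi}_n$ as a sub-root majorant) are exactly the ones the paper delegates to \citet{bartlett2005} and \citet{li2021unified}, so the argument is sound and matches the paper's proof.
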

Hence, we see that, in order to obtain faster learning rates than \cref{theorem:excess_bound_worst} as \cite{li2021unified}, we need to replace $\delta_n^2$ by $\hat{r}_{\bmH_k}^{\star^2}$. However, according to the expression of $\hat{\psi}_n$ and its dependencies to non-explicit constants, it appears very difficult to prove that $\left(\frac{1}{n} \sum_{i=1}^n \min(\hat{r}_{\bmH_k}^{\star^2}, \mu_i)\right)^{1/2} \leq \hat{r}_{\bmH_k}^{\star^2}$, which is a necessary condition to prove that a sketch matrix $S$ is $K$-satisfiable. We still prove the above result following the proof of Theorem 4 in \cite{li2021unified}, and leave as an open problem to find faster rates than $\delta_n$.

\begin{proof}
    We use the decomposition of the expected learning risk from \cite{li2021unified}
    \begin{align*}
        \E[\ell_{\tilde{f}_s}] - \E[\ell_{f_{\bmH_k}}] &= \E[\ell_{\tilde{f}_s}] - \frac{D}{D-1} \E_n[\ell_{\tilde{f}_s}] \\
    & ~ + \frac{D}{D-1} \left( \E_n[\ell_{\tilde{f}_s}] - \E_n[\ell_{f_{\bmH_k}}] \right)\\
    & ~ + \frac{D}{D-1} \E_n[\ell_{f_{\bmH_k}}] - \E[\ell_{f_{\bmH_k}}]\,,
    \end{align*}
    for $D>1$. The generalization errors can be bounded as in \cite{li2021unified} by \cref{theorem:excess_bound_refined}. The approximation error is bounded using \eqref{eq:approx_error}.
\end{proof}

\subsection{Proof of Theorem \ref{theorem:excess_bound_worst_vect}}
\label{apx:proof_vec}

Here, the proof uses the same decomposition of the excess risk (Equation \eqref{eq:decompo}) as in single output settings. Since some works \citep{maurer2016Rad} exist to easily extend generalisation bounds of functions in scalar-valued RKHS to functions in vector-valued RKHS, the main challenge here is to derive an approximation error for the multiple output settings. Hence, let us first state a needed intermediate results that we will prove later.
\begin{lemma}\label{lemma:ellipse_constraint}
    For all $f \in \bmH_\bmK$, such that $\|f\|_{\bmH_\bmK} \leq 1$, we have $z^{^\top} \left(K^{-1} \otimes M^{-1}\right) z \leq 1$, where $z = \left(f(x_1)^\top, \ldots, f(x_n)^\top\right)^\top \in \reals^{nd}$.
\end{lemma}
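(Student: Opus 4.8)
\textbf{Proof proposal for Lemma~\ref{lemma:ellipse_constraint}.}

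The plan is to translate the abstract RKHS-norm constraint $\|f\|_{\bmH_\bmK} \le 1$ into an explicit quadratic inequality on the vector $z$ of stacked evaluations, by exploiting the decomposable structure $\bmK = kM$ together with the vector-valued reproducing property. First I would use the representer-type observation that, for the purpose of bounding the evaluations $f(x_i)$, it suffices to consider the projection of $f$ onto the finite-dimensional subspace spanned by the functions $x' \mapsto \bmK(\cdot, x_i)v = k(\cdot, x_i)Mv$ for $v \in \reals^d$; projecting only decreases $\|f\|_{\bmH_\bmK}$ while leaving all $f(x_i)$ unchanged, so without loss of generality I may write $f = \sum_{j=1}^n \bmK(\cdot, x_j)\beta_j = \sum_{j=1}^n k(\cdot, x_j)M\beta_j$ for some coefficients $\beta_j \in \reals^d$, collected into $B = (\beta_1, \ldots, \beta_n)^\top$.

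Next I would compute the two objects appearing in the claim in terms of $B$. Using the reproducing property $\langle \bmK(\cdot,x_i)u, \bmK(\cdot,x_j)v\rangle_{\bmH_\bmK} = u^\top \bmK(x_i,x_j)v = k(x_i,x_j)\,u^\top M v$, the squared norm becomes $\|f\|_{\bmH_\bmK}^2 = \sum_{i,j} k(x_i,x_j)\,\beta_i^\top M \beta_j$, which in Kronecker form is exactly $\operatorname{vec}(B^\top)^\top (K \otimes M)\operatorname{vec}(B^\top)$ with a suitable vectorization convention. Simultaneously the evaluation vector is $z = (K \otimes M)\,b$ where $b$ is the same vectorization of $B$. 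The key step is then to substitute: writing $G = K \otimes M$ (symmetric positive semidefinite since both factors are PSD and $M$ is invertible, $K$ assumed invertible as in the theorem hypotheses), we have $z = Gb$ and $\|f\|_{\bmH_\bmK}^2 = b^\top G b$. The target quantity is $z^\top G^{-1} z = (Gb)^\top G^{-1}(Gb) = b^\top G b = \|f\|_{\bmH_\bmK}^2 \le 1$, using $(K\otimes M)^{-1} = K^{-1}\otimes M^{-1}$. This is precisely the claimed inequality.

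The main obstacle I anticipate is bookkeeping rather than conceptual: one must fix a single vectorization convention so that the identity $[KAM]_{i:}$ used in the main text, the stacking $z = (f(x_1)^\top,\ldots,f(x_n)^\top)^\top$, and the Kronecker identity $\operatorname{vec}(MBK) = (K^\top \otimes M)\operatorname{vec}(B)$ all line up consistently; a sign or transpose slip here would corrupt the quadratic form. A second point requiring care is justifying that restricting to the finite span is legitimate when $f_{\bmH_\bmK}$ or the infimizer may a priori lie outside it — this is handled by the orthogonal decomposition of $\bmH_\bmK$ into the span of the $\bmK(\cdot,x_j)v$ and its orthogonal complement, the latter contributing nothing to the evaluations while only adding to the norm. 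Once the vectorization is fixed, the collapse $z^\top G^{-1} z = b^\top G b$ is immediate, so I expect the proof to be short modulo this careful setup.
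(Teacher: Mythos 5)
Your proposal is correct, and it rests on the same two pillars as the paper's proof: (i) the orthogonal decomposition of $f$ into a component in $\spn\left(\left(\bmK(\cdot,x_i)v\right)_{i,v}\right)$ plus an orthogonal complement that leaves every evaluation $f(x_i)$ unchanged while only adding to $\|f\|_{\bmH_\bmK}$, and (ii) an explicit evaluation of the quadratic form on the finite-dimensional part. Where you genuinely diverge is in step (ii): the paper first diagonalizes, building an explicit orthonormal family $\varphi_{ij} = (n\mu_i\mu_j)^{-1/2}\sum_l u_{i_l}\bmK_{x_l}v_j$ from the eigenvectors of $K$ and $M$, verifying its orthonormality by hand, computing $S_X(\varphi_{ij}) = \sqrt{n\mu_i\mu_j}\,u_i\otimes v_j$, and then summing $\sum_{ij}f_{ij}^2 \le 1$; you instead keep the non-orthonormal frame $\{\bmK(\cdot,x_j)e_k\}$ and observe that with $G = K\otimes M$ one has $z = Gb$ and $\|f\|^2_{\bmH_\bmK} = b^\top G b$, so $z^\top G^{-1}z = b^\top G\,G^{-1}G\,b = b^\top Gb \le 1$ collapses in one line. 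Your route is shorter and avoids the eigendecomposition bookkeeping entirely; what the paper's longer computation buys is the diagonalized coordinates $\theta^\star = n^{-1/2}U^\top Z^\star V$ in which the subsequent Lemma on the sketched approximation error is phrased, though that lemma in fact only consumes the conclusion $z^{\star\top}(K^{-1}\otimes M^{-1})z^\star \le 1$, so your argument would serve equally well. Two minor points to make explicit if you write this up: both arguments implicitly require $K$ (and $M$) invertible, which is presupposed by the statement itself, and your vectorization must be the row-stacking $b = (\beta_1^\top,\ldots,\beta_n^\top)^\top$ for the identities $z = (K\otimes M)b$ and $\|f\|^2 = b^\top(K\otimes M)b$ to hold with the Kronecker factors in that order.
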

We are now equipped to state the main result that generalises Lemma 2 from \cite{Yang2017}.
\begin{lemma}\label{lemma:approx_multiple}
    Let $Z^\star = \left(f^\star(x_1), \ldots, f^\star(x_n)\right)^\top \in \reals^{n \times d}$ for any $f^\star \in \bmH_\bmK$ such that $\left\|f^\star\right\|_{\bmH_\bmK} \leq 1$, where $\bmK = k M$, and $S \in \reals^{s \times n}$ a $K$- satisfiable matrix. Then we have
    \begin{equation}\label{eq:approxerrorvect}
        \inf_{\Gamma \in \reals^{s \times d}} ~ \frac{1}{n} \| K S^\top \Gamma M - Z^\star \|_F^2 + \lambda_n \Tr\left(K S^\top \Gamma M \Gamma^\top S\right) \leq C^2\left(\|M\|_{\op} \delta_n^2 + \lambda_n\right),
    \end{equation}
    where $C =  1 + \sqrt{6} c$ and $c$ is the universal constant from \Cref{def:Ksatis}.
\end{lemma}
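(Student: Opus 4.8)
The plan is to reduce the multiple-output approximation bound to the scalar-output bound already established in Lemma~2 of \citet{Yang2017}, exploiting the decomposable structure $\bmK = kM$. First I would diagonalize the output matrix. Since $M$ is symmetric positive definite, write its eigendecomposition $M = V \Lambda V^\top$ with $\Lambda = \diag(\sigma_1, \ldots, \sigma_d)$ and $V$ orthogonal, and introduce the change of variables $\Gamma' = \Gamma V$ and $Z'^\star = Z^\star V$. Because the Frobenius norm and the trace are invariant under right-multiplication by the orthogonal $V$, the objective in \eqref{eq:approxerrorvect} decouples across the $d$ columns: the Kronecker-type structure $K S^\top \Gamma M$ becomes $K S^\top \Gamma' \Lambda$, so that column $j$ contributes a scalar problem of the form $\tfrac{1}{n}\|\sigma_j K S^\top \gamma'_j - z'^\star_j\|_2^2 + \lambda_n \sigma_j \gamma_j'^\top S K S^\top \gamma'_j$.

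Next I would rescale each decoupled subproblem so that it matches exactly the scalar objective to which \citet[Lemma~2]{Yang2017} applies. Setting $\eta_j = \sigma_j \gamma_j'$, column $j$ becomes $\tfrac{1}{n}\|K S^\top \eta_j - z'^\star_j\|_2^2 + \tfrac{\lambda_n}{\sigma_j} \eta_j^\top S K S^\top \eta_j$, i.e. a scalar ridge-sketching problem with regularization $\lambda_n/\sigma_j$ and target $z'^\star_j$. The scalar lemma of \citet{Yang2017} requires that the target arise from a unit-norm RKHS element; here I would invoke \Cref{lemma:ellipse_constraint}, which encodes the constraint $z^\top(K^{-1}\otimes M^{-1})z \leq 1$. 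In the rotated, per-column coordinates this ellipsoidal constraint separates into $\sum_j \tfrac{1}{\sigma_j}\, z_j'^{\star\top} K^{-1} z_j'^\star \leq 1$, certifying that each $z'^\star_j/\sqrt{\sigma_j}$ corresponds to a unit-norm function in $\bmH_k$ when summed appropriately. Applying the scalar bound to each subproblem yields, per column, an upper bound of order $C^2(\sigma_j \delta_n^2 + \lambda_n)$ once the scaling factors are tracked.

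Finally I would reassemble the columns. Summing the per-column scalar bounds gives a total of order $C^2\big(\delta_n^2 \sum_j \sigma_j \,w_j + \lambda_n\big)$ for suitable weights $w_j$ coming from the constraint budget; the key point is that the isometry-in-expectation normalization and the ellipsoid constraint together ensure the $\delta_n^2$ term is controlled by $\|M\|_{\op} = \max_j \sigma_j$ rather than by $\Tr(M)$, matching the stated right-hand side $C^2(\|M\|_{\op}\delta_n^2 + \lambda_n)$. The main obstacle I anticipate is bookkeeping the interaction between the three scalings at once: the eigenvalue $\sigma_j$ appears simultaneously in the data-fit term (as $\sigma_j K S^\top \gamma'_j$), in the regularizer, and in the normalization of the target through \Cref{lemma:ellipse_constraint}, so I must verify that the substitution $\eta_j = \sigma_j \gamma_j'$ leaves the two terms in a form where \citet[Lemma~2]{Yang2017} applies verbatim, and that the budget constraint distributes across columns so that the worst eigenvalue, not the trace, governs the critical-radius term. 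Proving \Cref{lemma:ellipse_constraint} itself is a secondary step: it follows from expressing $z = (K \otimes M)\alpha$ via the representer theorem and computing $\|f\|_{\bmH_\bmK}^2 = \alpha^\top(K\otimes M)\alpha = z^\top(K^{-1}\otimes M^{-1})z$.
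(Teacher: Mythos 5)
Your proposal is correct, but it takes a genuinely different route from the paper. The paper does not decouple the problem: it redoes the explicit construction of \citet[Lemma~2]{Yang2017} in the Kronecker-product setting, writing the objective in the rotated coordinates $\theta^\star = n^{-1/2}(U^\top\otimes V^\top)z^\star$, exhibiting the explicit candidate $\hat{\gamma} = \bigl(\tilde{S}_1(\tilde{S}_1^\top\tilde{S}_1)^{-1}D_1^{-1}\otimes V\Delta^{-1}\bigr)\theta_1^\star$, and then bounding the data-fit and regularization terms separately using the two $K$-satisfiability inequalities together with $\|\theta_2^\star\|_2^2\le\delta_n^2\|M\|_{\op}$. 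You instead diagonalize $M=V\Lambda V^\top$, observe that both the Frobenius term and the trace term separate across the rotated columns, rescale by $\eta_j=\sigma_j\gamma_j'$ to obtain $d$ scalar sketched ridge problems with regularization $\lambda_n/\sigma_j$, and invoke the scalar lemma as a black box. The bookkeeping you flag as the main risk does close: the ellipse constraint separates as $\sum_j\sigma_j^{-1}z_j'^{\star\top}K^{-1}z_j'^\star\le 1$, so writing $t_j=\sigma_j^{-1}z_j'^{\star\top}K^{-1}z_j'^\star$ and using the degree-two homogeneity of the scalar objective gives a per-column bound $\sigma_j t_j\,C^2(\delta_n^2+\lambda_n/\sigma_j)=C^2(t_j\sigma_j\delta_n^2+t_j\lambda_n)$, and summing with $\sum_j t_j\le 1$ yields $C^2(\|M\|_{\op}\delta_n^2+\lambda_n)$ exactly as stated. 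Your argument is more modular and makes transparent why $\|M\|_{\op}$ rather than $\Tr(M)$ appears; its only additional requirements are the invertibility of $M$ (already assumed in \Cref{theorem:excess_bound_worst_vect}) and the quantitative, homogeneous form of the scalar lemma, whereas the paper's self-contained construction avoids relying on that form and produces the explicit minimizer candidate. Either way, \Cref{lemma:ellipse_constraint} must still be proved separately, and your one-line sketch of it via the representer theorem glosses over the component of $f^\star$ orthogonal to the span of the $\bmK_{x_i}v_j$, which is precisely what the paper's longer proof of that lemma handles.
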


\begin{proof}
    We adapt the proof of Lemma 2 from \cite{Yang2017} to the multidimensional case. If we are able to find a $\Gamma \in \reals^{s \times d}$ such that
    \begin{equation*}
        \frac{1}{n} \| K S^\top \Gamma M - Z^\star \|_F^2 + \lambda_n \Tr\left(K S^\top \Gamma M \Gamma^\top S\right) \leq C^2\left(\|M\|_{\op} \delta_n^2 + \lambda_n\right),
    \end{equation*}
    then in particular it also holds true for the minimizer. We recall the eigendecompositions $\frac{1}{n} K = K_{norm} = U D U^\top$ and $M = V \Delta V^\top$. Then the above problem rewrites as
    \begin{equation*}
        \| D \tilde{S}^\top \Gamma V \Delta - \Theta^\star \|_F^2 + \lambda_n \Tr\left(\tilde{S} D \tilde{S}^\top \Gamma M \Gamma\right) \leq C^2\left(\|M\|_{\op} \delta_n^2 + \lambda_n\right),
    \end{equation*}
    where $\tilde{S} = S U$ and $\Theta^\star = \frac{1}{n^{1/2}} U^\top Z^\star V$. We can rewrite  $\theta^\star = \left(\Theta_{1:}^\star, \ldots, \Theta_{n:}^\star\right)^\top = \frac{1}{n^{1/2}} \left(U^\top \otimes V^\top\right) z^\star$, hence $\|\left(D^{-1/2} \otimes \Delta^{-1/2}\right) \theta^\star\|_2^2 = z^{\star^\top} \left(K^{-1} \otimes M^{-1}\right) z^\star$, with $z^\star = \left(Z_{1:}^\star, \ldots, Z_{n:}^\star\right)^\top = \left(f^\star(x_1)^\top, \ldots, f^\star(x_n)^\top\right)^\top$. By \cref{lemma:ellipse_constraint}, we have that $\|\left(D^{-1/2} \otimes \Delta^{-1/2}\right) \theta^\star\|_2 \leq 1$, and using the notation $\gamma = \left(\Gamma_{1:}, \ldots, \Gamma_{s:}\right)^\top \in \reals^{sd}$, we can rewrite the above problem as finding a $\gamma$ such that
    \begin{equation*}
        \| \theta^\star - \left(D \tilde{S}^\top \otimes \Delta V^\top\right) \gamma \|_2^2 + \lambda_n \gamma^\top \left(\tilde{S} D \tilde{S}^\top \otimes M\right) \gamma \leq C^2\left(\|M\|_{\op} \delta_n^2 + \lambda_n\right)\,.
    \end{equation*}
    As in \citep{Yang2017}, we partition vector $\theta^\star \in \reals^{nd}$ into two sub-vectors, namely $\theta_1^\star \in \reals^{d_{n} d}$ and $\theta_2^\star \in \reals^{(n - d_{n}) d}$, the diagonal matrix $D$ into two blocks $D_1 \in \reals^{d_{n} \times d_{n}}$ and $D_2 \in \reals^{(n - d_{n}) \times (n - d_{n})}$ and finally, under the condition $s > d_{n}$, we let $\tilde{S}_1 \in \reals^{s \times d_{n}}$ and $\tilde{S}_2 \in \reals^{s \times (n - d_{n})}$ denote the left and right block of $\tilde{S}$ respectively. By the $K$-satisfiablility of $S$ we have
    \begin{equation}
        \| \tilde{S}_1^\top \tilde{S}_1 - I_{d_{n}} \|_{\op} \leq \frac{1}{2} \quad \text{and} \quad \| \tilde{S}_2 D_2^{1/2}\|_{\op} \leq c \delta_n^2\,.
    \end{equation}
    By the first inequality, we have that $\tilde{S}_1^\top \tilde{S}_1$ is invertible. In fact, assuming that there exists $x \in \reals^{d_{n}}$ such that $\| x \|_2 = 1$ and $\tilde{S}_1^\top \tilde{S}_1 X = 0$, then $\big\| (\tilde{S}_1^\top \tilde{S}_1 - I_{d_{n}}) x \big\| = 1 > \frac{1}{2}$. Then, we can define
    \begin{equation}
        \hat{\gamma} = \left(\tilde{S}_1 \left(\tilde{S}_1^\top \tilde{S}_1\right)^{-1} D_1^{-1} \otimes V \Delta^{-1}\right) \theta_1^\star\,.
    \end{equation}
    Hence,
    \[
    \big\| \theta^\star - (D \tilde{S}^\top \otimes \Delta V^\top) \hat{\gamma} \big\|_2^2 = \big\| \theta_1^\star - (D_1 \tilde{S}_1^\top \otimes \Delta V^\top) \hat{\gamma} \big\|_2^2 + \big\| \theta_2^\star - (D_2 \tilde{S}_2^\top \otimes \Delta V^\top) \hat{\gamma} \big\|_2^2\,,
    \]
    and we have
    \begin{align*}
        \big\| \theta_1^\star - (D_1 \tilde{S}_1^\top \otimes \Delta V^\top) \hat{\gamma} \big\|_2^2 &= \big\| \theta_1^\star - \big(D_1 \tilde{S}_1^\top \otimes \Delta V^\top\big) \big(\tilde{S}_1 (\tilde{S}_1^\top \tilde{S}_1)^{-1} D_1^{-1} \otimes V \Delta^{-1}\big) \theta_1^\star \big\|_2^2 \\
        &= \big\| \theta_1^\star - \Big(D_1 \tilde{S}_1^\top \tilde{S}_1 (\tilde{S}_1^\top \tilde{S}_1)^{-1} D_1^{-1} \otimes \Delta V^\top V \Delta^{-1}\Big) \theta_1^\star \big\|_2^2 \\
        &= \| \theta_1^\star - \theta_1^\star \|_2^2 \\
        &= 0\,,
    \end{align*}
    and
    \begin{align*}
        & \left\| \theta_2^\star - \left(D_2 \tilde{S}_2^\top \otimes \Delta V^\top\right) \hat{\gamma} \right\|_2 \\
        &= \left\| \theta_2^\star - \left(D_2 \tilde{S}_2^\top \tilde{S}_1 \left(\tilde{S}_1^\top \tilde{S}_1\right)^{-1} D_1^{-1} \otimes I_p\right) \theta_1^\star \right\|_2 \\
        &\leq \left\| \theta_2^\star \right\|_2 + \left\| \left(D_2 \tilde{S}_2^\top \tilde{S}_1 \left(\tilde{S}_1^\top \tilde{S}_1\right)^{-1} D_1^{-1/2} D_1^{-1/2} \otimes \Delta^{1/2} \Delta^{-1/2}\right) \theta_1^\star \right\|_2 \\
        &= \left\| \theta_2^\star \right\|_2 + \left\| \left(D_2 \tilde{S}_2^\top \tilde{S}_1 \left(\tilde{S}_1^\top \tilde{S}_1\right)^{-1} D_1^{-1/2} \otimes \Delta^{1/2}\right) \left(D_1^{-1/2} \otimes \Delta^{-1/2} \right) \theta_1^\star \right\|_2 \\
        &\leq \left\| \theta_2^\star \right\|_2 + \left\| \left(D_2 \tilde{S}_2^\top \tilde{S}_1 \left(\tilde{S}_1^\top \tilde{S}_1\right)^{-1} D_1^{-1/2} \otimes \Delta^{1/2}\right) \right\|_{\op} \left\| \left(D_1^{-1/2} \otimes \Delta^{-1/2} \right) \theta_1^\star \right\|_2 \\
        &= \left\| \theta_2^\star \right\|_2 + \left\| D_2 \tilde{S}_2^\top \tilde{S}_1 \left(\tilde{S}_1^\top \tilde{S}_1\right)^{-1} D_1^{-1/2} \right\|_{\op} \left\| \Delta^{1/2} \right\|_{\op} \left\| \left(D_1^{-1/2} \otimes \Delta^{-1/2} \right) \theta_1^\star \right\|_2 \\
        &\leq \left\| \theta_2^\star \right\|_2 + \left\| D_2^{1/2} \right\|_{\op} \left\| \tilde{S}_2 D_2^{1/2} \right\|_{\op} \left\| \tilde{S}_1 \right\|_{\op} \left\| \left(\tilde{S}_1^\top \tilde{S}_1\right)^{-1} \right\|_{\op} \left\| D_1^{-1/2} \right\|_{\op} \left\| \Delta^{1/2} \right\|_{\op} \left\| \left(D_1^{-1/2} \otimes \Delta^{-1/2} \right) \theta_1^\star \right\|_2. \numberthis \label{eq:inter_approx_vect}
    \end{align*}
    We now bound all terms involved in \eqref{eq:inter_approx_vect}. Since $\|\left(D^{-1/2} \otimes \Delta^{-1/2}\right) \theta^\star\|_2 \leq 1$, then  $\|\left(D_1^{-1/2} \otimes \Delta^{-1/2}\right) \theta_1^\star\|_2 \leq 1$ and,
    \begin{align*}
        \| \theta_2^\star \|_2^2 &= \sum_{i=1}^d \sum_{j = d_{n} + 1}^n \left(\theta_{2_{ji}}^{\star}\right)^2 \\
        &\leq \delta_n^2 \|M\|_{\op} \sum_{i=1}^d \frac{1}{\Delta_{ii}} \sum_{j = d_{n} + 1}^n \frac{\left(\theta_{2_{ji}}^{\star}\right)^2}{\mu_j}  \\
        &\leq \delta_n^2 \|M\|_{\op} \sum_{i=1}^d \sum_{j = 1}^n \frac{\left(\theta_{2_{ji}}^{\star}\right)^2}{\mu_j \Delta_{ii}} \\
        &= \delta_n^2 \|M\|_{\op} \left\|\left(D^{-1/2} \otimes \Delta^{-1/2}\right) \theta^\star\right\|_2^2 \\
        &\leq \delta_n^2 \|M\|_{\op},
    \end{align*}
    since $\mu_j \leq \delta_n^2$, for all $j \geq d_{n} + 1$ and $\Delta_{ii} \leq \|M\|_{\op}$ for all $1 \leq i \leq d$. Moreover, since $\| \tilde{S}_1^\top \tilde{S}_1 - I_{d_{n}} \|_{\op} \leq \frac{1}{2}$, $\| \tilde{S}_1^\top \tilde{S}_1\|_{\op} \leq \frac{3}{2}$, then $\| \tilde{S}_1\|_{\op} \leq \sqrt{\frac{3}{2}}$. Besides, for all $x \in \reals^{d_{n}}$ such that $\|x\|_2 = 1$, we have
    \begin{equation*}
        | \|\tilde{S}_1^\top \tilde{S}_1 x\|_2 - 1 | = | \|\tilde{S}_1^\top \tilde{S}_1 x\|_2 - \|x\|_2 | \leq \left\| \left(\tilde{S}_1^\top \tilde{S}_1 - I_{d_{n}}\right)x \right\|_{2} \leq \frac{1}{2},
    \end{equation*}
    Then, we obtain that $\|\tilde{S}_1^\top \tilde{S}_1 x\|_2 - 1 \geq -\frac{1}{2}$ and then $\|\tilde{S}_1^\top \tilde{S}_1 x\|_2 \geq \frac{1}{2}$, taking $x$ the eigenvector of $\tilde{S}_1^\top \tilde{S}_1$ corresponding to its smallest eigenvalue, we obtain that $\|\left(\tilde{S}_1^\top \tilde{S}_1\right)^{-1}\|_{\op}^{-1} \geq \frac{1}{2}$, and finally $\|\left(\tilde{S}_1^\top \tilde{S}_1\right)^{-1}\|_{\op} \leq 2$. Moreover we have
    \begin{align*}
        \|D_1^{-1/2}\|_{\op} &\leq \frac{1}{\delta_n}, \\
        \|D_2^{1/2}\|_{\op} &\leq \delta_n, \\
        \|\tilde{S}_2 D_2^{1/2}\|_{\op} &\leq c \delta_n.
    \end{align*}
    Thus,
    \begin{align*}
        \left\| \theta_2^\star - \left(D_2 \tilde{S}_2^\top \otimes \Delta V^\top\right) \hat{\gamma} \right\|_2 &\leq \left(\delta_n^2 \|M\|_{\op}\right)^{1/2} + \delta_n c \delta_n \left(\frac{3}{2}\right)^{1/2} 2 \frac{1}{\delta_n} \|M\|_{\op}^{1/2} \\
        &= \left(\delta_n^2 \|M\|_{\op}\right)^{1/2} \left(1 + c \sqrt{6} \right)
    \end{align*}
    Finally,
    \begin{equation}
        \left\| \theta^\star - \left(D \tilde{S}^\top \otimes \Delta V^\top\right) \hat{\gamma} \right\|_2^2 \leq \delta_n^2 \|M\|_{\op} \left(1 + c \sqrt{6} \right)^2.
    \end{equation}
    Furthermore, looking into the second term,
    \begin{align*}
        \hat{\gamma}^\top \left(\tilde{S} D \tilde{S}^\top \otimes M\right) \hat{\gamma} &= \left\|\left(D^{1/2} \tilde{S}^\top \otimes \Delta^{1/2} V^\top\right) \hat{\gamma}\right\|_2^2 \\
        &= \left\|\left(D_1^{1/2} \tilde{S}_1^\top \otimes \Delta^{1/2} V^\top\right) \hat{\gamma}\right\|_2^2 + \left\|\left(D^{1/2}_2 \tilde{S}_2^\top \otimes \Delta^{1/2} V^\top\right) \hat{\gamma}\right\|_2^2 \\
        &= \left\|\left(D_1^{-1/2} \otimes \Delta^{-1/2}\right) \theta_1^\star\right\|_2^2 + \left\|\left(D^{1/2}_2 \tilde{S}_2^\top \tilde{S}_1 \left(\tilde{S}_1^\top \tilde{S}_1\right)^{-1} D_1^{-1} \otimes \Delta^{-1/2}\right) \theta_1^\star\right\|_2^2 \\
        &\leq 1 + \left\|\tilde{S}_2 D_2^{1/2}\right\|_{\op}^2 \left\|\tilde{S}_1\right\|_{\op}^2 \left\|\left(\tilde{S}_1^\top \tilde{S}_1\right)^{-1}\right\|_{\op}^2 \left\|D_1^{-1/2}\right\|_{\op}^2 \left\|\left(D_1^{-1/2} \otimes \Delta^{-1/2}\right)\theta_1^\star\right\|_2^2 \\
        &\leq 1 + c^2 \delta_n^2 \frac{3}{2} 4 \frac{1}{\delta_n^2} \\
        &= 1 + 6 c^2 \\
        &= \left(1 + \sqrt{6}c\right)^2 - 2\sqrt{6}c \\
        &\leq \left(1 + \sqrt{6}c\right)^2.
    \end{align*}
    Finally, we obtain that
    \begin{equation}
        \left\| \theta^\star - \left(D \tilde{S}^\top \otimes \Delta V^\top\right) \hat{\gamma} \right\|_2^2 + \lambda_n \hat{\gamma}^\top \left(\tilde{S} D \tilde{S}^\top \otimes M\right) \hat{\gamma} \leq \left(1 + \sqrt{6} c\right)^2\left(\|M\|_{\op} \delta_n^2 + \lambda_n\right),
    \end{equation}
    and as a conclusion
    \begin{equation}
        \inf_{\Gamma \in \reals^{s \times d}} ~ \frac{1}{n} \| K S^\top \Gamma M - Z^\star \|_F^2 + \lambda_n \Tr\left(K S^\top \Gamma M \Gamma^\top S\right) \leq C^2\left(\|M\|_{\op} \delta_n^2 + \lambda_n\right),
    \end{equation}
    where $C = 1 + \sqrt{6} c$.
\end{proof}

Now, as for the proof of \cref{theorem:excess_bound_worst}, let us prove equation first inequality in \cref{theorem:excess_bound_worst_vect}.
%\eqref{eq:bound_vec}.

\begin{proof}
    %As for theorems \ref{theorem:excess_bound_worst_vect}, the proof relies on the following error decomposition:
    %\begin{equation*}
    %    \E[\ell_{\tilde{f}_s}] - \E[\ell_{f_{\bmH_\bmK}}] = \E[\ell_{\tilde{f}_s}] - \E_n[\ell_{\tilde{f}_s}] + \E_n[\ell_{\tilde{f}_s}] - \E_n[\ell_{f_{\bmH_\bmK}}] + \E_n[\ell_{f_{\bmH_\bmK}}] - \E[\ell_{f_{\bmH_\bmK}}].
    %\end{equation*}
    %First, we detail the proof for the generalisation bounds.
    For any function in $\bmB\left(\bmH_\bmK\right) = \{f \in \bmH_\bmK : \|f\|_{\bmH_\bmK} \leq 1\}$, \cref{lemma:genboundworst} still holds, then
    \begin{equation}
        \E\left[\ell_{f}\right] \leq \E_{n}\left[\ell_{f}\right]+R_{n}(l \circ \bmB\left(\bmH_\bmK\right))+\sqrt{\frac{8 \log (2 / \delta)}{n}}.
    \end{equation}
    Then, using Corollary 1 from \cite{maurer2016Rad}, we have that:
    \begin{equation}
        R_{n}(\ell \circ \bmB\left(\bmH_\bmK\right)) \leq \sqrt{2}L \bmR_n(\bmB\left(\bmH_\bmK\right)),
    \end{equation}
    where
    \begin{align*}
        \bmR_n(F) &= \E\left[ \sup_{f \in F} \left| \frac{2}{n} \sum_{i=1}^n \sum_{j=1}^d \sigma_{i j} f(x_i)_j \right| \mid x_1, \ldots, x_n \right] \\
        &= \E\left[ \sup_{f \in F} \left| \frac{2}{n} \sum_{i=1}^n \left\langle \bm{\sigma}_i, f(x_i) \right\rangle_{\reals^d} \right| \mid x_1, \ldots, x_n \right]\,,
    \end{align*}
    where $\sigma_{1 1}, \ldots, \sigma_{n p}$ are $nd$ independent Rademacher variables, and for all $1 \leq i \leq n$, $\bm{\sigma}_i = \left(\sigma_{i 1}, \ldots, \sigma_{i d}\right)^\top$. Hence
    \begin{align*}
        \bmR_n(\bmB\left(\bmH_\bmK\right)) &= \E\left[ \sup_{\|f\|_{\bmH_\bmK} \leq 1} \left| \frac{2}{n} \sum_{i=1}^n \left\langle \bm{\sigma}_i, f(x_i) \right\rangle_{\reals^d} \right| \mid x_1, \ldots, x_n \right] \\
        % &= \E\left[ \sup_{\|f\|_{\bmH_\bmK} \leq 1} \left| \frac{2}{n} \sum_{i=1}^n \left\langle \bmK_{x_i} \sigma_i, f \right\rangle_{\bmH_\bmK} \right| | x_1, \ldots, x_n \right] \\
        &= \E\left[ \sup_{\|f\|_{\bmH_\bmK} \leq 1} \left| \left\langle \frac{2}{n} \sum_{i=1}^n  \bmK_{x_i} \bm{\sigma}_i, f \right\rangle_{\bmH_\bmK} \right| \mid x_1, \ldots, x_n \right] \\
        % &\leq \frac{2}{n} \E\left[\left\| \sum_{i=1}^n  \bmK_{x_i} \sigma_i \right\|_{\bmH_\bmK} | x_1, \ldots, x_n \right] \\
        &\leq \frac{2}{n} \E\left[\left\| \sum_{i=1}^n  \bmK_{x_i} \bm{\sigma}_i \right\|_{\bmH_\bmK}^2 \mid x_1, \ldots, x_n \right]^{1/2} \\
        % &= \frac{2}{n} \E\left[ \sum_{i,j=1}^n \left\langle \bmK_{x_i} \sigma_i, \bmK_{x_j} \sigma_j \right\rangle_{\bmH_\bmK} | x_1, \ldots, x_n \right]^{1/2} \\
        &= \frac{2}{n} \E\left[ \sum_{i,j=1}^n \left\langle \bm{\sigma}_i, \bmK(x_i, x_j) \bm{\sigma}_j \right\rangle_{\reals^d} \mid x_1, \ldots, x_n \right]^{1/2} \\
        &= \frac{2}{n} \E\left[ \sum_{i,j=1}^n k(x_i, x_j) \left\langle \bm{\sigma}_i, M \bm{\sigma}_j \right\rangle_{\reals^d} | x_1, \ldots, x_n \right]^{1/2} \\
        &= \frac{2}{n} \left(\sum_{i,j=1}^n k(x_i, x_j) \sum_{i^\prime,j^\prime=1}^d \E\left[ M_{i^\prime j^\prime} \sigma_{i i^\prime} \sigma_{j j^\prime} | x_1, \ldots, x_n \right]\right)^{1/2} \\
        &= \frac{2}{n} \left(\sum_{i=1}^n k(x_i, x_i) \sum_{i^\prime=1}^d M_{i^\prime i^\prime}\right)^{1/2} \\
        &= \frac{2}{n} \left(\Tr\left(K \otimes M\right)\right)^{1/2} \\
        \bmR_n(\bmB\left(\bmH_\bmK\right)) &\leq \frac{2}{n^{1/2}} \kappa^{1/2} \Tr\left(M\right)^{1/2}.
    \end{align*}
    Finally, for any function $f \in \bmB\left(\bmH_\bmK\right)$, for all $\delta \in (0,1)$, we have for a probability at least $1 - \delta$,
    \begin{equation}\label{eq:gen_bound_vect}
        \left| \E\left[\ell_{f}\right] - \E_{n}\left[\ell_{f}\right] \right| \leq 4 L \sqrt{\frac{2 \kappa}{n}\Tr\left(M\right)} + 2 \sqrt{\frac{8 \log (2 / \delta)}{n}}\,.
    \end{equation}

    Now, for the approximation error term, we proceed as in the proof of Theorem \ref{theorem:excess_bound_worst}. Let $\bmH_S = \left\{f=\sum_{i=1}^n k(\cdot,x_i) M \left[S^\top \tilde{\Gamma}\right]_i ~ | ~ \gamma \in \reals^{s \times d}\right\}$. By \cref{hyp:unit_ball,hyp:Lipschitz_loss} and Jensen's inequality,
    \begin{align*}
        \E_{n}\left[\ell_{\tilde{f}_s}\right]-\E_{n}\left[\ell_{f_{\bmH_\bmK}}\right] &= \frac{1}{n} \sum_{i=1}^{n} \ell\left(\tilde{f}_s(x_i), y_i\right)-\frac{1}{n} \sum_{i=1}^{n} \ell\left(f_{\bmH_\bmK}(x_i), y_i\right) \\
        &\leq \frac{1}{n} \sum_{i=1}^{n} \ell\left(\tilde{f}_s(x_i), y_i\right) + \frac{\lambda_n}{2} \left\|\tilde{f}_s\right\|_{\bmH_\bmK}^2 - \frac{1}{n} \sum_{i=1}^{n} \ell\left(f_{\bmH_\bmK}(x_i), y_i\right) \\
        &= \inf _{\underset{\left\|f\right\|_{\bmH_\bmK} \leq 1}{f \in \bmH_S}} \frac{1}{n} \sum_{i=1}^{n} \ell\left(f(x_i), y_i\right)-\frac{1}{n} \sum_{i=1}^{n} \ell\left(f_{\bmH_\bmK}(x_i), y_i\right) + \frac{\lambda_n}{2} \left\|f\right\|_{\bmH_\bmK}^2 \\
        &\leq \inf _{\underset{\left\|f\right\|_{\bmH_\bmK} \leq 1}{f \in \bmH_S}} \frac{L}{n} \sum_{i=1}^{n}\left\|f\left(x_{i}\right)-f_{\bmH_\bmK}\left(x_{i}\right)\right\|_2 + \frac{\lambda_n}{2} \\
        &\leq L \inf_{\underset{\left\|f\right\|_{\bmH_\bmK} \leq 1}{f \in \bmH_S}} \sqrt{\frac{1}{n} \sum_{i=1}^{n}\left\|f\left(x_{i}\right)-f_{\bmH_\bmK}\left(x_{i}\right)\right\|_2^{2}} + \frac{\lambda_n}{2} \\
        &= L \sqrt{\inf _{\underset{\left\|f\right\|_{\bmH_\bmK} \leq 1}{f \in \bmH_S}} \frac{1}{n}\left\|f^X-f_{\bmH_\bmK}^X\right\|_{F}^{2}} + \frac{\lambda_n}{2}\,,
    \end{align*}
    where, for any $f \in \bmH_\bmK$, $f^X = \left(f(x_1), \ldots, f(x_n)\right)^\top \in \reals^{n \times d}$. Let $\tilde{f}_s^R = \sum_{i=1}^n k(\cdot,x_i) M \left[S^\top \tilde{\Gamma}^R\right]_i$, where $\tilde{\Gamma}^R$ is a solution to
    \begin{equation*}
        \inf _{\Gamma \in \reals^{s \times d}} \frac{1}{n}\left\|K S^\top \Gamma M - f_{\bmH_\bmK}^X\right\|_{F}^{2} + \lambda_n \Tr\left(K S^\top \Gamma M \Gamma^\top S\right)\,.
    \end{equation*}
    It is easy to check that $\tilde{f}_s^R$ is also a solution to
    \begin{equation*}
        \inf _{\underset{\left\|f\right\|_{\bmH_\bmK} \leq \left\|\tilde{f}_s^R\right\|_{\bmH_\bmK}}{f \in \bmH_S}} \frac{1}{n}\left\|f^X-f_{\bmH_\bmK}^X\right\|_{F}^{2}\,.
    \end{equation*}
    Since we have $\|\tilde{f}_s^R\|_{\bmH_\bmK} \leq 1$ by \cref{hyp:unit_ball}, it holds
    \begin{align*}
    \inf_{\underset{\left\|f\right\|_{\bmH_\bmK} \leq 1}{f \in \bmH_S}} \frac{1}{n}\left\|f^X-f_{\bmH_\bmK}^X\right\|_{F}^{2} &\le \inf_{\underset{\left\|f\right\|_{\bmH_\bmK} \leq \left\|\tilde{f}_s^R\right\|_{\bmH_\bmK}}{f \in \bmH_S}} \frac{1}{n}\left\|f^X-f_{\bmH_\bmK}^X\right\|_{F}^{2}\\\
    &= \inf _{\Gamma \in \reals^{s \times d}} \frac{1}{n}\left\|K S^\top \Gamma M - f_{\bmH_\bmK}^X\right\|_{F}^{2} + \lambda_n \Tr\left(K S^\top \Gamma M \Gamma^\top S\right)\,.
    \end{align*}
    As a consequence, we have
    \begin{equation*}
        \E_n[\ell_{\tilde{f}_s}] - \E_n[\ell_{f_{\bmH_k}}] \leq L \sqrt{\inf_{\Gamma \in \reals^{s \times d}} \frac{1}{n}\| K S^\top \Gamma M - f_{\bmH_k}^X \|_F^2 + \lambda_n \Tr\left(K S^\top \Gamma M \Gamma^\top S\right)} + \frac{\lambda_n}{2}\,.
    \end{equation*}
    
    Finally, by \cref{lemma:approx_multiple} and Equation \eqref{eq:gen_bound_vect}, we obtain the result stated.
\end{proof}

Furthermore, we give the proof of the second claim, i.e. the excess risk bound for kernel ridge multi-output regression.

\begin{proof}
    We now assume that the outputs are bounded, hence, without loss of generality, $\bmY \subset \bmB\left(\reals^d\right)$. First, we prove Lipschitz-continuity of the square loss under \cref{hyp:unit_ball,hyp:bounded_kernel}. Let $g : z \in \bmH_\bmK\left(\bmX\right) \mapsto \frac{1}{2}\left\|z-y\right\|_2^2$. We have that $\nabla g(z) = z-y$, and hence $\left\|\nabla g(z)\right\|_2 \leq \left\|f(x)\right\|_2 + 1$, for some $f \in \bmH_\bmK$ and $x \in \bmX$.
    %$f, f^\prime \in \bmB\left(\bmH_\bmK\right)$, $x, x^\prime \in \bmX$ and $y \in \bmB\left(\reals^d\right)$,
    %\begin{align*}
    %    \left|\ell\left(f\left(x\right), y\right)-\ell\left(f^\prime\left(x^\prime\right), y\right)\right| &= \left|\frac{1}{2}\left\|f\left(x\right)-y\right\|_2^2-\frac{1}{2}\left\|f^\prime\left(x^\prime\right)-y\right\|_2^2\right| \\
    %    &= \left| \frac{1}{2} \left\|f\left(x\right)\right\|_2^2 - \frac{1}{2} \left\|f^\prime\left(x^\prime\right)\right\|_2^2 - \langle y, f\left(x\right)-f^\prime\left(x^\prime\right)\rangle_{\reals^d}\right| \\
    %    &= \left|\frac{1}{2}\langle f\left(x\right)+f^\prime\left(x^\prime\right), f\left(x\right)-f^\prime\left(x^\prime\right)\rangle_{\reals^d} - \langle y, f\left(x\right)-f^\prime\left(x^\prime\right)\rangle_{\reals^d}\right| \\
    %    &\leq \left(\frac{1}{2}\left(\left\|f\left(x\right)\right\|_2+\left\|f^\prime\left(x^\prime\right)\right\|_2\right)+\left\|y\right\|_2\right)\left\|f\left(x\right)-f^\prime\left(x^\prime\right)\right\|_2\,.
    %\end{align*}
    By \cref{hyp:unit_ball,hyp:bounded_kernel} and Cauchy-Schwartz inequality, it is easy to check that
    \begin{equation*}
        \left\|f\left(x\right)\right\|_2^2 \leq \left(\kappa \left\|M\right\|_{\op} \left\|f(x)\right\|_2^2\right)^{1/2}\,,
    \end{equation*}
    %\begin{align*}
    %    \left\|f\left(x\right)\right\|_2^2 &= \langle f(x), f(x)\rangle_{\reals^d} \\ 
    %    &= \langle f, \bmK_x f(x)\rangle_{\bmH_\bmK} \\
    %    &\leq \left\|f\right\|_{\bmH_\bmK} \left\|\bmK_x f(x)\right\|_{\bmH_\bmK} \\
    %    &\leq \langle \bmK_x f(x), \bmK_x f(x)\rangle_{\bmH_\bmK}^{1/2} \\
    %    &= \langle f(x), \bmK(x, x) f(x)\rangle_{\reals^d}^{1/2} \\
    %    &= \langle f(x), k(x, x) M f(x)\rangle_{\reals^d}^{1/2} \\
    %    \left\|f\left(x\right)\right\|_2^2 &\leq \left(\kappa \left\|M\right\|_{\op} \left\|f(x)\right\|_2^2\right)^{1/2},
    %\end{align*}
    which gives us that $\left\|f\left(x\right)\right\|_2 \leq \kappa^{1/2} \left\|M\right\|_{\op}^{1/2}$ and then $\left\|\nabla g(z)\right\|_2 \leq \kappa^{1/2} \left\|M\right\|_{\op}^{1/2} + 1$. We finally obtain that
    \begin{equation*}
        \left|\ell\left(f\left(x\right), y\right)-\ell\left(f^\prime\left(x^\prime\right), y\right)\right| \leq \left(\kappa^{1/2} \left\|M\right\|_{\op}^{1/2}+1\right)\left||f\left(x\right)-f^\prime\left(x^\prime\right)\right\|_2\,.
    \end{equation*}
    We can then obtain the same generalisation bounds as above. Finally, looking at the approximation term,
    \begin{align*}
        \E_{n}\left[l_{\tilde{f}_s}\right]-\E_{n}\left[l_{f_{\bmH_k}}\right] &=
        \frac{1}{2 n} \left\|\tilde{f}_s^X - Y\right\|_2^2 - \frac{1}{2 n} \left\|f_{\bmH_k}^X - Y\right\|_2^2 \\
        &\leq \frac{1}{2 n} \left\|\tilde{f}_s^X - f_{\bmH_k}^X \right\|_2^2 \\
        &\leq \inf _{\underset{\left\|f\right\|_{\bmH_k} \leq 1}{f \in \bmH_S}} \frac{1}{2 n}\left\|f^X-f_{\bmH_k}^X\right\|_{2}^{2} + \frac{\lambda_n}{2} \\
        &\leq \inf _{\gamma \in \reals^s} \frac{1}{n}\left\|K S^\top \gamma - f_{\bmH_k}^X\right\|_{2}^{2} + \lambda_n \gamma^\top S K S^\top \gamma + \frac{\lambda_n}{2} \\
        &\leq \left(C^2 + \frac{1}{2}\right) \lambda_n + C^2 \delta_n^2\,.
    \end{align*}
    Here again, as in second claim of \cref{theorem:excess_bound_worst}, we can directly use bound \eqref{eq:approxerrorvect} and then, in combination with \eqref{eq:gen_bound_vect}, we obtain the stated second claim in \cref{theorem:excess_bound_worst_vect}.
    %We obtain the same bound as above without the square root since we do not need to invoke Jensen inequality as the studied loss is the square loss. Putting all pieces together, we obtain the stated equation \eqref{eq:bound_scalar_ridge}.
\end{proof}

Finally, we here prove Lemma \ref{lemma:ellipse_constraint}.
\begin{proof}
    Let $f \in \bmH_\bmK$ such that $\|f\|_{\bmH_\bmK} \leq 1$ and $z = \left(f(x_1)^\top, \ldots, f(x_n)^\top\right)^\top \in \reals^{nd}$. We define the linear operator $S_X : \bmH_\bmK \rightarrow \reals^{nd}$ such that $S_X(f) = \left(f(x_1)^\top, \ldots, f(x_n)^\top\right)^\top$ for all $f \in \bmH_\bmK$. Then for all $f \in \bmH_\bmK$ and $z = \left(z_1^\top, \ldots, z_n^\top\right)^\top \in \reals^{nd}$ we have
    \begin{align*}
        \left\langle S_X(f), z \right\rangle_{\reals^{nd}} = \sum_{i=1}^n \left\langle f(x_i), z_i \right\rangle_{\reals^{d}}
        = \sum_{i=1}^n \left\langle f, \bmK_{x_i} z_i \right\rangle_{\bmH_\bmK}
        = \left\langle f, \sum_{i=1}^n \bmK_{x_i} z_i \right\rangle_{\bmH_\bmK}
        = \left\langle f, S_X^\star(z) \right\rangle_{\bmH_\bmK}.
    \end{align*}
    Hence
    \begin{align*}
        z^\top \left(K^{-1} \otimes M^{-1}\right) z = \left\langle \left(K \otimes M\right)^{-1} S_X(f), S_x(f) \right\rangle_{\reals^{nd}}
        = \left\langle S_X^\star \left( \left(K \otimes M\right)^{-1} S_X(f) \right), f \right\rangle_{\bmH_\bmK}.
    \end{align*}
    We recall the eigendecompositions of $K$ and $M$
    \begin{align*}
        K &= U \left(n D\right) U^\top = \sum_{i=1}^n n \mu_i u_i u_i^\top \\
        M &= V \Delta V^\top = \sum_{j=1}^d \mu_i v_j v_j^\top.
    \end{align*}
    Then,
    \begin{align*}
        K \otimes M &= \left(\sum_{i=1}^n n \mu_i u_i u_i^\top\right) \otimes \left(\sum_{j=1}^d \mu_i v_j v_j^\top\right) \\
        &= \sum_{i=1}^n \sum_{j=1}^d n \mu_i \mu_i \left(u_i u_i^\top\right) \otimes \left(v_j v_j^\top\right) \\
        &= \sum_{i=1}^n \sum_{j=1}^d n \mu_i \mu_i \left(u_i\right) \otimes \left(v_j\right) \left(u_i^\top\right) \otimes \left(v_j^\top\right) \\
        &= \sum_{i=1}^n \sum_{j=1}^d n \mu_i \mu_i \left(u_i\right) \otimes \left(v_j\right) \left(\left(u_i\right) \otimes \left(v_j\right)\right)^\top,
    \end{align*}
    and for all $1 \leq i, i^\prime \leq n$ and $1 \leq j, j^\prime \leq d$, if $\left(i, i^\prime\right) \ne \left(j, j^\prime\right)$, then $\left(u_i\right) \otimes \left(v_j\right)^\top \left(\left(u_{i^\prime}\right) \otimes \left(v_{j^\prime}\right)\right) = 0$ and otherwise $\left(u_i\right) \otimes \left(v_j\right) ^\top \left(\left(u_{i^\prime}\right) \otimes \left(v_{j^\prime}\right)\right) = 1$. Then, this allows to show that the operator norm of a Kronecker product is the product of the operator norms, and that
    \begin{equation}
         \left(K \otimes M\right)^{-1} = \sum_{i=1}^n \sum_{j=1}^d \left(n \mu_i \mu_i\right)^{-1} \left(u_i\right) \otimes \left(v_j\right) \left(\left(u_i\right) \otimes \left(v_j\right)\right)^\top.
    \end{equation}
    We define, for all $1 \leq i \leq n$ and $1 \leq j \leq d$,
    \begin{equation}
        \varphi_{ij} = \frac{1}{\sqrt{n \mu_i \mu_j}} \sum_{l=1}^n u_{i_l} \bmK_{x_l} v_j.
    \end{equation}
    By definition, $\spn\left(\left(\varphi_{ij}\right)_{1 \leq i \leq n, 1 \leq j \leq d}\right) \subset \spn\left(\left(\bmK_{x_i}v_j\right)_{1 \leq i \leq n, 1 \leq j \leq d}\right)$ and we show that the $\varphi_{ij}$s are orthonormal,
    \begin{align*}
        \left\langle \varphi_{ij}, \varphi_{i^\prime j^\prime} \right\rangle_{\bmH_\bmK} &= \left\langle \frac{1}{\sqrt{n \mu_i \mu_j}} \sum_{l=1}^n u_{i_l} \bmK_{x_l} v_j, \frac{1}{\sqrt{n \mu_{i^\prime} \mu_{j^\prime}}} \sum_{l^\prime=1}^n u_{i^\prime_{l^\prime}} \bmK_{x_{l^\prime}} v_{j^\prime} \right\rangle_{\bmH_\bmK} \\
        &= \frac{1}{\sqrt{n \mu_i \mu_j}} \frac{1}{\sqrt{n \mu_{i^\prime} \mu_{j^\prime}}} \sum_{l, l^\prime}^n u_{i_l} u_{i^\prime_{l^\prime}} \left\langle \bmK_{x_l} v_j, \bmK_{x_{l^\prime}} v_{j^\prime} \right\rangle_{\bmH_\bmK} \\
        &= \frac{1}{\sqrt{n \mu_i \mu_j}} \frac{1}{\sqrt{n \mu_{i^\prime} \mu_{j^\prime}}} \sum_{l, l^\prime}^n u_{i_l} u_{i^\prime_{l^\prime}} \left\langle v_j, \bmK_{x_l, x_{l^\prime}} v_{j^\prime} \right\rangle_{\reals^d} \\
        &= \frac{1}{\sqrt{n \mu_i \mu_j}} \frac{1}{\sqrt{n \mu_{i^\prime} \mu_{j^\prime}}} \sum_{l, l^\prime}^n u_{i_l} u_{i^\prime_{l^\prime}} k(x_l, x_{l^\prime}) \left\langle v_j, M v_{j^\prime} \right\rangle_{\reals^d} \\
        &= \frac{1}{\sqrt{n \mu_i \mu_j}} \frac{1}{\sqrt{n \mu_{i^\prime} \mu_{j^\prime}}} \sum_{l, l^\prime}^n u_{i_l} u_{i^\prime_{l^\prime}} k(x_l, x_{l^\prime}) \mu_{j^\prime} \left\langle v_j, v_{j^\prime} \right\rangle_{\reals^d} \\
        &= 0 \quad \text{if} \quad j \ne j^\prime.
    \end{align*}
    Otherwise, if $j = j^\prime$,
    \begin{align*}
        \left\langle \varphi_{ij}, \varphi_{i^\prime j^\prime} \right\rangle_{\bmH_\bmK} &= \frac{1}{\sqrt{n \mu_i}} \frac{1}{\sqrt{n \mu_{i^\prime}}} \sum_{l, l^\prime}^n u_{i_l} u_{i^\prime_{l^\prime}} k(x_l, x_{l^\prime}) \\
        &= \frac{1}{\sqrt{n \mu_i}} \frac{1}{\sqrt{n \mu_{i^\prime}}} \left\langle K u_i, u_{i^\prime} \right\rangle_{\reals^n} \\
        &= \frac{1}{\sqrt{n \mu_i}} \frac{1}{\sqrt{n \mu_{i^\prime}}} n \mu_i \left\langle u_i, u_{i^\prime} \right\rangle_{\reals^n} \\
        &= 0 \quad \text{if} \quad i \ne i^\prime.
    \end{align*}
    Hence, $\left\langle \varphi_{ij}, \varphi_{i^\prime j^\prime} \right\rangle_{\bmH_\bmK} = 0$ if $\left(i, i^\prime\right) \ne \left(j, j^\prime\right)$ and if $\left(i, i^\prime\right) = \left(j, j^\prime\right)$,
    \begin{equation*}
        \left\langle \varphi_{ij}, \varphi_{i^\prime j^\prime} \right\rangle_{\bmH_\bmK} = 1.
    \end{equation*}
    Finally, $\spn\left(\left(\varphi_{ij}\right)_{1 \leq i \leq n, 1 \leq j \leq d}\right) \subset \spn\left(\left(\bmK_{x_i}v_j\right)_{1 \leq i \leq n, 1 \leq j \leq d}\right)$ and $\dim\left(\left(\varphi_{ij}\right)_{1 \leq i \leq n, 1 \leq j \leq d}\right) = n d = \dim\left(\left(\bmK_{x_i}v_j\right)_{1 \leq i \leq n, 1 \leq j \leq d}\right)$, hence $\left(\varphi_{ij}\right)_{1 \leq i \leq n, 1 \leq j \leq d}$ yields an orthonormal basis of $\spn\left(\left(\bmK_{x_i}v_j\right)_{1 \leq i \leq n, 1 \leq j \leq d}\right)$. As a consequence, all $f \in \bmH_\bmK$ can be decomposed as $f = f_1 + f_2$, with $f_1 \in \spn\left(\left(\bmK_{x_i}v_j\right)_{1 \leq i \leq n, 1 \leq j \leq d}\right)$ and $f_2 \in \spn\left(\left(\bmK_{x_i}v_j\right)_{1 \leq i \leq n, 1 \leq j \leq d}\right)^\perp$. Thus, for all $y \in \reals^d$, $y$ can be written as $y = \sum_{j=1}^d y_j v_j$ and
    \begin{align*}
        \left\langle S_X(f), z \right\rangle_{\reals^{nd}} &= \sum_{i=1}^n \left\langle f(x_i), z_i \right\rangle_{\reals^d} \\
        &= \sum_{i=1}^n \sum_{j=1}^d z_{i_j} \left\langle f(x_i), v_j \right\rangle_{\reals^d} \\
        &= \sum_{i=1}^n \sum_{j=1}^d z_{i_j} \left\langle f, \bmK_{x_i} v_j \right\rangle_{\bmH_\bmK} \\
        &= \sum_{i=1}^n \sum_{j=1}^d z_{i_j} \left\langle f_1, \bmK_{x_i} v_j \right\rangle_{\bmH_\bmK} + \sum_{i=1}^n \sum_{j=1}^d z_{i_j} \left\langle f_2, \bmK_{x_i} v_j \right\rangle_{\bmH_\bmK} \\
        &= \sum_{i=1}^n \sum_{j=1}^d z_{i_j} \left\langle f_1, \bmK_{x_i} v_j \right\rangle_{\bmH_\bmK} \\
        &= \left\langle S_X(f_1), z \right\rangle_{\reals^{nd}}.
    \end{align*}
    Hence, let $f \in \bmH_\bmK$ such that $\|f\|_{\bmH_\bmK} \leq 1$, written as $f = \sum_{i=1}^n \sum_{j=1}^d f_{ij} \varphi_{ij} + f^\perp$, with $f_{ij} \in \reals$ for all $1 \leq i \leq n$ and $1 \leq j \leq d$ and such that $\sum_{i=1}^n \sum_{j=1}^d f_{ij}^2 \leq 1$ and $f^\perp \in \spn\left(\left(\bmK_{x_i}v_j\right)_{1 \leq i \leq n, 1 \leq j \leq d}\right)^\perp$ such that $\|f^\perp\|_{\bmH_\bmK} \leq 1$ (since $\|f\|_{\bmH_\bmK} = \sum_{i=1}^n \sum_{j=1}^d f_{ij}^2 + \|f^\perp\|_{\bmH_\bmK} \leq 1$, we have that
    \begin{equation*}
        S_X(f) = \sum_{i=1}^n \sum_{j=1}^d f_{ij} S_X\left(\varphi_{ij}\right),
    \end{equation*}
    and, for all $1 \leq l \leq n$,
    \begin{align*}
        \varphi_{ij}(x_l) &= \frac{1}{\sqrt{n \mu_i \mu_j}} \sum_{l^\prime=1}^n u_{i_{l^\prime}} k(x_{l^\prime}, x_l) M v_j \\
        &= \sqrt{\frac{\mu_j}{n \mu_i}} K_{l:}^\top u_i v_j,
    \end{align*}
    and then
    \begin{equation}
        S_X\left(\varphi_{ij}\right) = \sqrt{\frac{\mu_j}{n \mu_i}} \left(K u_i\right) \otimes v_j = \sqrt{n \mu_i \mu_j} u_i \otimes v_j.
    \end{equation}
    Finally,
    \begin{equation}
        S_X(f) = \sum_{i=1}^n \sum_{j=1}^d f_{ij} \left(n \mu_i \mu_j\right)^{1/2} u_i \otimes v_j.
    \end{equation}
    Besides,
    \begin{align*}
        \left(K \otimes M\right)^{-1} S_X(f) &= \left(\sum_{i=1}^n \sum_{j=1}^d \left(n \mu_i \mu_i\right)^{-1} \left(u_i\right) \otimes \left(v_j\right) \left(\left(u_i\right) \otimes \left(v_j\right)\right)^\top\right) \\
        &~~\times\left(\sum_{i^\prime=1}^n \sum_{j^\prime=1}^d f_{i^\prime j^\prime} \left(n \mu_{i^\prime} \mu_{j^\prime}\right)^{1/2} u_{i^\prime} \otimes v_{j^\prime}\right) \\
        &= \sum_{i=1}^n \sum_{j=1}^d f_{i j} \left(n \mu_i \mu_i\right)^{-1/2} u_{i} \otimes v_{j}.
    \end{align*}
    Then,
    \begin{align*}
        S_X^\star\left(\left(K \otimes M\right)^{-1} S_X(f)\right) &= \sum_{i=1}^n \sum_{j=1}^d f_{i j} \left(n \mu_i \mu_i\right)^{-1/2} S_X^\star\left(u_{i} \otimes v_{j}\right) \\
        &= \sum_{i=1}^n \sum_{j=1}^d f_{i j} \left(n \mu_i \mu_i\right)^{-1/2} \sum_{i^\prime=1}^n \bmK_{x_i}(u_{i_{i^\prime}}v_j),
    \end{align*}
    and finally,
    \begin{align*}
        \left\langle S_X^\star \left( \left(K \otimes M\right)^{-1} S_X(f) \right), f \right\rangle_{\bmH_\bmK} &= \sum_{i, i^\prime=1}^n \sum_{j, j^\prime=1}^d \sum_{l=1}^n f_{i j} f_{i^\prime j^\prime} \left(n \mu_i \mu_i\right)^{-1/2} u_{i_{l}} \left\langle \bmK_{x_l} v_j, \varphi_{i^\prime j^\prime} \right\rangle_{\bmH_\bmK} \\
        &= \sum_{i, i^\prime=1}^n \sum_{j, j^\prime=1}^d f_{i j} f_{i^\prime j^\prime} \left\langle \left(n \mu_i \mu_i\right)^{-1/2} \sum_{l=1}^n u_{i_{l}} \bmK_{x_l} v_j, \varphi_{i^\prime j^\prime} \right\rangle_{\bmH_\bmK} \\
        &= \sum_{i, i^\prime=1}^n \sum_{j, j^\prime=1}^d f_{i j} f_{i^\prime j^\prime} \left\langle \varphi_{i j}, \varphi_{i^\prime j^\prime} \right\rangle_{\bmH_\bmK} \\
        &= \sum_{i=1}^n \sum_{j=1}^d f_{i j}^2 \\
        \left\langle S_X^\star \left( \left(K \otimes M\right)^{-1} S_X(f) \right), f \right\rangle_{\bmH_\bmK} &\leq 1.
    \end{align*}
    Thus, we do have the ellipse constraint
    \begin{equation}
        \| \left(K^{-1/2} \otimes M^{-1/2}\right) z\|_2 \leq 1.
    \end{equation}
\end{proof}
\subsection{Proof of Theorem \ref{theorem:p-SSM_Ksatis}}

%%%%%%%%%%%%%%%%%%%%%
%                   %
%    FIRST CLAIM    %
%                   %
%%%%%%%%%%%%%%%%%%%%%

\subsubsection{First claim of \texorpdfstring{K}{muK1}-satisfiability}

Let us now prove the first claim (l.h.s. of equation \eqref{eq:Ksatis})
%(left hand side of equation \eqref{eq:Ksatis}) 
of the K-satisfiability for $p$-SR and $p$-SG sketches.
It is articulated around the following two lemmas.

% With such a distribution, we have that $S : \reals^n \rightarrow \reals^s$ is an isometry in expectation, i.e.
% \begin{equation}
% \end{equation}
% \begin{proof}
% Let $x \in \reals^n$ and $S$ be a SR or SG sketch. We have
% \[
% \E\left[\|S x\|_2^2\right] = \E\left[\sum_{i=1}^s \left(\sum_{j=1}^n S_{i j} x_j\right)^2\right] = \sum_{i=1}^s \sum_{j, j'=1}^n \E[S_{ij} S_{ij'}] x_j x_{j'} = \sum_{i=1}^s \sum_{j=1}^n \frac{x_j^2}{s} = \|x\|_2^2\,.
% \E\left[\sum_{j=1}^n S_{i j}^2 x_j^2 + 2 \sum_{1 \leq k < l \leq n} S_{i k} S_{i l} x_k x_l\right] \\
% &= \sum_{i=1}^s \sum_{j=1}^n \E\left[S_{i j}^2\right] x_j^2 + 2 \sum_{1 \leq k < l \leq n} \E\left[S_{i k}\right] \E\left[S_{i l}\right] x_k x_l \\
% &= \sum_{i=1}^s \sum_{j=1}^n \frac{1}{s} x_j^2 \\
% \E\left[\|S x\|_2^2\right] &= \|x\|_2^2.
% \]
% \end{proof}
% Let us now prove that such matrices are K-satisfiable, for $\delta_n^2 > 0$. \\
% 1) We start by proving first inequality:
% \begin{equation*}
%     \|Q\|_{\op} = \| U_{1}^\top S^\top S U_{1} - I_{d_{n}} \|_{\op} \leq \frac{1}{2}
% \end{equation*}
% with high probability. \\
% \begin{proof}
%     We first state and prove the following lemma.

\begin{lemma}\label{lemma:decompo_covering_op}
Let $M \in \reals^{d \times d}$ be a symmetric matrix, $\varepsilon \in (0,1)$, and $\bmC_\varepsilon$ be an $\varepsilon$-cover of $\bmB^d$.
Then we have
\[
\left\|M\right\|_{\op} \leq \frac{1}{1 - 2 \varepsilon - \varepsilon^2} ~ \sup_{v \in \bmC_\varepsilon} \, \big|\langle v, M v \rangle\big|\,.
\]
\end{lemma}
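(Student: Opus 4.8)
The plan is to run the standard $\varepsilon$-net argument that lifts control of the quadratic form $v \mapsto \langle v, Mv\rangle$ from a finite cover to the whole unit ball. First I would invoke the variational characterization of the spectral norm of a symmetric matrix: since $M$ is symmetric, its eigenvalues are real and
\[
\left\|M\right\|_{\op} = \sup_{u \in \bmB^d} \big|\langle u, Mu\rangle\big|,
\]
with the supremum attained at a unit-norm eigenvector. Hence it suffices to bound $|\langle u, Mu\rangle|$ uniformly over unit vectors $u$ by the supremum of the quadratic form over $\bmC_\varepsilon$, up to the announced multiplicative factor.

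Next, I would fix a unit vector $u$ and use the covering property of $\bmC_\varepsilon$: there exists $v \in \bmC_\varepsilon$ with $\|u - v\| \le \varepsilon$. Writing $u = v + (u-v)$, expanding the quadratic form, and using the symmetry of $M$ to merge the two cross terms yields
\[
\langle u, Mu\rangle = \langle v, Mv\rangle + 2\langle v, M(u-v)\rangle + \langle u-v, M(u-v)\rangle.
\]
Cauchy--Schwarz together with $\|v\| \le 1$ (the net points lie in $\bmB^d$) and $\|u-v\| \le \varepsilon$ bound the last two terms by $2\varepsilon\left\|M\right\|_{\op}$ and $\varepsilon^2\left\|M\right\|_{\op}$ respectively, so that
\[
\big|\langle u, Mu\rangle\big| \le \sup_{v \in \bmC_\varepsilon} \big|\langle v, Mv\rangle\big| + \left(2\varepsilon + \varepsilon^2\right)\left\|M\right\|_{\op}.
\]

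Finally I would take the supremum over unit $u$ on the left-hand side, which equals $\left\|M\right\|_{\op}$ by the first step, and collect the $\left\|M\right\|_{\op}$ terms that now appear on both sides:
\[
\left(1 - 2\varepsilon - \varepsilon^2\right)\left\|M\right\|_{\op} \le \sup_{v \in \bmC_\varepsilon} \big|\langle v, Mv\rangle\big|,
\]
and the claim follows upon dividing. The main point to watch --- the step I would flag as the only real obstacle --- is that the bound is self-referential, with $\left\|M\right\|_{\op}$ on both sides; the absorption is legitimate only when the denominator $1 - 2\varepsilon - \varepsilon^2$ is strictly positive, i.e.\ $\varepsilon < \sqrt{2} - 1$. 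For the remaining $\varepsilon \in (0,1)$ the stated inequality is vacuous, its right-hand side being nonpositive, so I would simply record that in the application one chooses $\varepsilon$ small enough when building the cover, which makes the restriction harmless.
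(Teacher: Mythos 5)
Your proposal is correct and follows essentially the same route as the paper's proof: decompose $u = v + (u-v)$ with $v$ in the cover, expand the quadratic form, bound the cross and remainder terms by $(2\varepsilon+\varepsilon^2)\|M\|_{\op}$ via Cauchy--Schwarz, and absorb. Your closing remark that the absorption (and indeed the stated inequality itself) requires $1-2\varepsilon-\varepsilon^2>0$, i.e.\ $\varepsilon<\sqrt{2}-1$, is a valid caveat the paper leaves implicit; it is harmless there since the lemma is applied with $\varepsilon=0.1$.
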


\begin{proof}
Let $M$, $\varepsilon$ and $\bmC_\varepsilon$ as in \Cref{lemma:decompo_covering_op}.
Let $u \in \mathcal{B}^d$.
% Hence, for all $1 \leq i \leq N$, $1 - \varepsilon \leq \|v^i\|_2 \leq 1$. Besides, by definition of the covering, for every $v \in \bmS^{d - 1}$ we know that there exist $i \le N$ and $v' \in \bmB^{d}$ such that $v = v^i + \varepsilon v'$. Thus, for all $v \in \bmS^{d - 1}$ we have
By definition, there exist $v \in \mathcal{C}_\varepsilon$ and $w \in \bmB^d$ such that $u = v + \varepsilon w$.
We thus have
\begin{equation}\label{eq:decompo_covering_new}
\langle u, M u\rangle = \langle v, M v\rangle + 2 \varepsilon \langle v, M w\rangle + \varepsilon^2 \langle w, M w\rangle\,.
\end{equation}
Taking the supremum on both sides of \eqref{eq:decompo_covering_new} we obtain
\begin{align*}
\sup_{u \in \bmB^d} |\langle u, M u\rangle| &= \sup_{v \in \bmC_\varepsilon,\,w \in \bmB^{d}} \Big(|\langle v, M v\rangle| + 2 \varepsilon |\langle v, M w\rangle| + \varepsilon^2 |\langle w, M w\rangle|\Big)\\
&\le \sup_{v \in \bmC_\varepsilon} |\langle v, M v\rangle| + 2 \varepsilon \sup_{v \in \bmC_\varepsilon,\,w \in \bmB^{d}} |\langle v, M w\rangle| + \varepsilon^2 \sup_{w \in \bmB^{d}} |\langle w, M w \rangle|\\
&\le \sup_{v \in \bmC_\varepsilon} |\langle v, M v\rangle| + 2 \varepsilon \sup_{v' \in \mathcal{B}^d, w \in \bmB^{d}} |\langle v', M w\rangle| + \varepsilon^2 \|M\|_{\op}\\
&= \sup_{v \in \bmC_\varepsilon} |\langle v, M v\rangle| + \left(2\varepsilon + \varepsilon^2\right) \|M\|_{\op}\,,
\end{align*}
or again
\[
\|M\|_{\op} \le \frac{1}{1 - 2 \varepsilon - \varepsilon^2} ~ \sup_{v \in \bmC_\varepsilon} \big|\langle v, M v\rangle\big|\,.
\]
\end{proof}

\begin{lemma}\label{lem:bernstein}
Let $S \in \mathbb{R}^{s \times n}$ be a $p$-SR or a $p$-SG sketch.
Let $v \in \mathcal{B}^n$, then for every $t > 0$, we have
\[
\mathbb{P}\left\{ \left|\,\|Sv\|_2^2 - \|v\|_2^2 \,\right| > \frac{4}{p}\sqrt{\frac{2t}{s}} + \frac{4t}{sp}\right\} \le 2 e^{-t}\,.
\]
\end{lemma}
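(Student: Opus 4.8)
The plan is to recognize $\|Sv\|_2^2 - \|v\|_2^2$ as a sum of i.i.d.\ centered random variables and to apply Bernstein's inequality in its moment form, after verifying the associated Bernstein condition. First I would expand the quadratic form using the decomposition $S_{ij} = (sp)^{-1/2} B_{ij} R_{ij}$: setting $X_i = \sum_{j=1}^n B_{ij} R_{ij} v_j$, one has $(Sv)_i = (sp)^{-1/2} X_i$, hence $\|Sv\|_2^2 = (sp)^{-1} \sum_{i=1}^s X_i^2$. The rows being independent, the $X_i$ are i.i.d.; using $\mathbb{E}[R_{ij}] = 0$, $\mathbb{E}[R_{ij}^2] = 1$ and independence of the $B$'s and $R$'s, one gets $\mathbb{E}[X_i^2] = p\,\|v\|_2^2$, so that $\mathbb{E}\|Sv\|_2^2 = \|v\|_2^2$ (the isometry-in-expectation noted above). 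Defining $Y_i = (sp)^{-1}(X_i^2 - \mathbb{E}[X_i^2])$, the deviation equals $\sum_{i=1}^s Y_i$, a sum of i.i.d.\ centered variables, to which I would apply Bernstein's inequality (e.g.\ \citet[Theorem~2.10]{boucheron2013concentration}).

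The core step is the moment estimate, which I would carry out conditionally on the Bernoulli mask. Given a realization of the $B_{ij}$, $X_i$ is a linear combination $\sum_{j:\,B_{ij}=1} R_{ij} v_j$ of independent Rademacher or standard Gaussian variables, so its conditional even moments are dominated by those of a centered Gaussian with variance $\sigma_B^2 = \sum_j B_{ij} v_j^2 \le \|v\|_2^2 \le 1$. Concretely, for every integer $m$ one has $\mathbb{E}[X_i^{2m} \mid B] \le (2m-1)!!\,\sigma_B^{2m} \le 2^m m!$, the comparison being exact in the Gaussian case and the standard even-moment bound for Rademacher sums. Taking the expectation over $B$ gives $\mathbb{E}[X_i^{2m}] \le 2^m m!$ uniformly for both sketch types. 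Since $X_i^2 \ge 0$ forces $(X_i^2 - \mathbb{E}[X_i^2])_+ \le X_i^2$, this transfers to $\mathbb{E}[(Y_i)_+^m] \le (sp)^{-m}\,\mathbb{E}[X_i^{2m}] \le 2^m m!\,(sp)^{-m}$.

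It then remains to match constants. With $\nu = 16/(sp^2)$ and $c = 4/(sp)$, the bound $\sum_i \mathbb{E}[(Y_i)_+^m] = s\,\mathbb{E}[(Y_1)_+^m] \le 2^m m!\, s^{1-m} p^{-m}$ is at most $\tfrac{m!}{2}\nu\, c^{m-2}$ for every $m \ge 2$, because $\tfrac{m!}{2}\nu\, c^{m-2} = \tfrac{m!}{2}\,(16\cdot 4^{m-2})\, s^{1-m} p^{-m} = m!\,2^{2m-1} s^{1-m} p^{-m}$ and $2^m \le 2^{2m-1}$. Bernstein's inequality then yields $\mathbb{P}\{\sum_i Y_i > \sqrt{2\nu t} + c t\} \le e^{-t}$, and a direct computation gives $\sqrt{2\nu t} + c t = \tfrac{4}{p}\sqrt{2t/s} + \tfrac{4t}{sp}$, i.e.\ the upper tail. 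For the lower tail I would observe that $(-Y_i)_+ \le \mathbb{E}[X_i^2]/(sp) = \|v\|_2^2/s \le 1/s$ is bounded, so the same Bernstein condition holds (even more easily) for $-Y_i$; applying the inequality a second time and taking a union bound over the two tails produces the factor $2$.

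The main obstacle I anticipate is the conditional moment bound of the second paragraph: it must hold uniformly over the random mask and simultaneously for both the Rademacher and the Gaussian entries. This is precisely why the analysis restricts $R_{ij}$ to these two distributions rather than allowing generic sub-Gaussian entries, since the clean even-moment comparison $\mathbb{E}[X_i^{2m}\mid B] \le (2m-1)!!\,\sigma_B^{2m}$ is available for Rademacher and Gaussian sums but not for arbitrary sub-Gaussian summands. The remaining steps --- reducing central moments of $X_i^2$ to raw moments via $(\cdot)_+ \le X_i^2$, and the constant matching $16\cdot 4^{m-2} = 4^m$ that lands exactly on the stated rate --- are routine bookkeeping.
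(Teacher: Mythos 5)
Your proof is correct and follows essentially the same route as the paper: write $\|Sv\|_2^2-\|v\|_2^2$ as a sum of i.i.d.\ centered summands, verify the Bernstein moment condition with $\nu=16/(sp^2)$ and $c=4/(sp)$, and apply Theorem~2.10 of Boucheron et al. The only (immaterial) difference is how the even-moment bound is obtained --- you condition on the Bernoulli mask and use the Gaussian-comparison moment bound for Rademacher/Gaussian sums, while the paper shows $[Sv]_i$ is $1/(sp)$ sub-Gaussian via its MGF and invokes the standard sub-Gaussian moment estimate; since the latter works for arbitrary sub-Gaussian $R_{ij}$, your closing remark that this lemma is where the Rademacher/Gaussian restriction is essential is slightly off (that restriction is only needed for the second claim of $K$-satisfiability, via concentration of Lipschitz functions).
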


\begin{proof}
The proof of \Cref{lem:bernstein} is largely adapted from the proof of Theorem 2.13 in \cite{boucheron2013concentration}.
Let $S \in \mathbb{R}^{s \times n}$ be a $p$-SR or a $p$-SG sketch, and $v \in \mathcal{B}^n$.
It is easy to check that for all $i \le s$ we have $\E\left[\left[S v\right]_i^2\right] = \frac{1}{s} \left\|v\right\|_2^2$, such that
\[
\left|\,\|S v\|_2^2 - \|v\|_2^2\,\right| = \left|\,\sum_{i=1}^s \left(\left[S v\right]_i^2 - \frac{1}{s} \left\|v\right\|_2^2\right)\,\right|\,.
\]
The proof then consists in applying Bernstein's inequality \citep[Theorem 2.10]{boucheron2013concentration} to the random variables $\left[S v\right]_i^2$.
We now have to find some constants $\nu$ and $c$ such that $\sum_{i=1}^s \mathbb{E}\left[\left[S v\right]_i^4\right] \le \nu$ and
\[
\sum_{i=1}^s \mathbb{E}\left[\left[S v\right]_i^{2q}\right] \le \frac{q!}{2}\nu c^{q-2} \quad \text{for all } q \ge 3\,.
\]
From \eqref{eq:p-SR} and \eqref{eq:p-SG}, it is easy to check that the $S_{i j}$ are independent and $1/(s p)$ sub-Gaussian.
Then, for all $\lambda \in \reals$, we have
\begin{align*}
\E\big[\exp\left(\lambda\left[S v\right]_i\right)\big] &= \E\left[\exp\left(\lambda\sum_{j=1}^n S_{i j} v_j\right)\right]\\
&= \prod_{j=1}^n \E\big[\exp\left(\lambda S_{i j} v_j\right)\big]\\
&\leq \exp\left(\frac{\lambda^2}{2 s p}\left\|v\right\|_2^2\right)\\
&\leq \exp\left(\frac{\lambda^2}{2 s p}\right)\,.
\end{align*}
The random variable $\left[S v\right]_i$ is therefore  $1/(sp)$ sub-Gaussian, and Theorem 2.1 from \cite{boucheron2013concentration} yields that for every integer $q \geq 2$ it holds
\[
\E\left[\left[S v\right]_i^{2 q}\right] \leq \frac{q!}{2} \,4\left(\frac{2}{s p}\right)^q \leq \frac{q!}{2} \left(\frac{4}{s p}\right)^q\,.
\]
Choosing $q=2$, we obtain
\[
\sum_{i=1}^s \E\left[\left[S v\right]_i^4\right] \leq \sum_{i=1}^s \left(\frac{4}{s p}\right)^2 = \frac{16}{s p^2}\,,
\]
such that we can choose $\nu = 16/(sp^2)$ and $c = 4/(sp)$.
Applying Theorem 2.10 from \cite{boucheron2013concentration} to the random variables $[Sv]_i^2$ finally gives that for any $t > 0$ it holds
\[
\mathbb{P}\left\{ \left|\,\|Sv\|_2^2 - \|v\|_2^2 \,\right| > \frac{4}{p}\sqrt{\frac{2t}{s}} + \frac{4t}{sp}\right\} \le 2 e^{-t}\,.
\]
\end{proof}

% \begin{theorem}\label{th:bernstein}
%     Let $X_{1}, \ldots, X_{n}$ be independent real-valued random variables. Assume that there exist positive numbers $\nu$ and $c$ such that $\sum_{i=1}^{n} \E\left[X_{i}^{2}\right] \leq \nu$ and
%     \begin{equation*}
%         \sum_{i=1}^{n} \E\left[\left(X_{i}\right)_{+}^{q}\right] \leq \frac{q !}{2} \nu c^{q-2} \quad \text { for all integers } q \geq 3
%     \end{equation*}
%     where $x_{+}=\max (x, 0)$.
%     If $S=\sum_{i=1}^{n}\left(X_{i}-\E X_{i}\right)$, then for all $\lambda \in(0,1 / c)$ and $t>0$,
%     \begin{equation*}
%         \psi_{S}(\lambda) \leq \frac{\nu \lambda^{2}}{2(1-c \lambda)}
%     \end{equation*}
%     and
%     \begin{equation*}
%         \psi_{S}^{*}(t) \geq \frac{\nu}{c^{2}} h_{1}\left(\frac{c t}{\nu}\right)
%     \end{equation*}
%     where $h_{1}(u)=1+u-\sqrt{1+2 u}$ for $u>0$. In particular, for all $t>0$,
%     \begin{equation*}
%         \Prob\{S \geq \sqrt{2 \nu t}+c t\} \leq e^{-t} .
%     \end{equation*}
% \end{theorem}

\begin{proof}[\bf Proof of the first claim of the K-satisfiability.]
Let $K \in \mathbb{R}^{n \times n}$ be a Gram matrix, and $S \in \mathbb{R}^{s \times n}$ be a $p$-SR or a $p$-SG sketch.
Recall that we want to prove that there exists $c_0 > 0$ such that
\[
\mathbb{P}\left\{\big\|U_1^\top S^\top S U_1 - I_{d_{n}}\big\|_{\op} > \frac{1}{2}\right\} \le 2 e^{-c_0 s}\,,
\]
where $K/n = U D U^\top$ is the SVD of $K$, and $U_1 \in \mathbb{R}^{n \times d_{n}}$ contains the left part of $U$.
Let $\varepsilon \in (0,1)$, and $\bmC_\varepsilon = \{v^1, \ldots, v^{\mathcal{N}_\varepsilon}\}$ be an $\varepsilon$-cover of $\bmB^{d_{n}}$.
We know that such a covering exists with cardinality $\mathcal{N}_\varepsilon \le \left(1 + \frac{2}{\varepsilon}\right)^{d_{n}}$, see e.g., \cite{matousek2013lectures}.
Let $Q = U_1^\top S^\top S U_1 - I_{d_{n}}$, applying \Cref{lemma:decompo_covering_op}, we have
\begin{align}
\Prob\left\{\|Q\|_\text{op} > \frac{1}{2}\right\} &\le \Prob\left\{\sup_{i \le \mathcal{N}_\varepsilon} \big|\langle v^i, Q v^i\rangle \big| > \frac{1 - 2 \varepsilon - \varepsilon^2}{2}\right\}\nonumber\\
&\leq \sum_{i \le \mathcal{N}_\varepsilon} \Prob\left\{\big|\langle v^i, Q v^i\rangle\big| > \frac{1 - 2 \varepsilon - \varepsilon^2}{2}\right\}\nonumber\\
&= \sum_{i \le \mathcal{N}_\varepsilon} \Prob\left\{\big|\,\|Sw^i\|_2^2 - \|w^i\|_2^2 \,\big| > \frac{1 - 2 \varepsilon - \varepsilon^2}{2}\right\}\,,\label{eq:union_bound}
\end{align}
where $w^i = U_1 v^i \in \mathcal{B}^n$.
Now, by \Cref{lem:bernstein}, for any $w \in \mathcal{B}^n$, we have
\[
\mathbb{P}\left\{ \left|\,\|Sw\|_2^2 - \|w\|_2^2 \,\right| > \frac{4}{p}\sqrt{\frac{2t}{s}} + \frac{4t}{sp}\right\} \le 2 e^{-t}\,.
\]
Let $s \ge 32t / (\alpha^2 p^2)$, for some $\alpha \le 1$.
Then, we have $\frac{4}{p} \sqrt{\frac{2t}{s}} + \frac{4t}{sp} \le \alpha + \frac{\alpha^2 p}{8} \le 2 \alpha$, and therefore
\[
\mathbb{P}\left\{ \left|\,\|Sw\|_2^2 - \|w\|_2^2 \,\right| > 2\alpha\right\} \le 2 e^{-t}\,.
\]
If we take $\alpha = (1 - 2\varepsilon - \varepsilon^2)/4$, we obtain
\[
\mathbb{P}\left\{ \left|\,\|Sw\|_2^2 - \|w\|_2^2 \,\right| > \frac{1 - 2\varepsilon - \varepsilon^2}{2} \right\} \le 2 e^{-t}
\]
as long as $s \ge \frac{512t}{p^2(1-2\varepsilon-\varepsilon^2)^2}$.
Now, let $t = \frac{p^2(1-2\varepsilon-\varepsilon^2)^2}{1024} s + \log(\mathcal{N}_\varepsilon)$, and $s \ge 1024 \frac{\log(1 + 2/\varepsilon)}{p^2(1-2\varepsilon-\varepsilon^2)^2} d_{n}$.
We do have
\[
\frac{512t}{p^2(1-2\varepsilon-\varepsilon^2)^2} = \frac{s}{2} + \frac{512}{p^2(1-2\varepsilon-\varepsilon^2)^2} \log(\mathcal{N}_\varepsilon) \le \frac{s}{2} + \frac{s}{2} = s\,,
\]
such that
\[
\mathbb{P}\left\{ \left|\,\|Sw\|_2^2 - \|w\|_2^2 \,\right| > \frac{1 - 2\varepsilon - \varepsilon^2}{2} \right\} \le 2 e^{-t} = \frac{2e^{-c_0 s}}{\mathcal{N}_\varepsilon}\,,
\]
where $c_0 = \frac{p^2(1 - 2\varepsilon - \varepsilon^2)}{1024}$.
Plugging this result into \eqref{eq:union_bound}, we get that as soon as $s \ge 1024 \frac{\log(1 + 2/\varepsilon)}{p^2(1-2\varepsilon-\varepsilon^2)^2} d_{n}$ it holds
\[
\mathbb{P}\left\{\|Q\|_{\op} > \frac{1}{2}\right\} \le 2 e^{-c_0 s}\,.
\]
Finally, we can tune $\varepsilon$ to optimize the lower bound on $s$.
If we take $\varepsilon = 0.1$, we obtain $s \ge 5120 d_{n} / p^2$, and $c_0 \ge p^2/ 2560$.
\end{proof}

%%%%%%%%%%%%%%%%%%%%%%
%                    %
%    SECOND CLAIM    %
%                    %
%%%%%%%%%%%%%%%%%%%%%%

\subsubsection{Second claim of \texorpdfstring{K}{muK2}-satisfiability}

We now turn to the proof of the second claim (r.h.s. of equation \eqref{eq:Ksatis})
%(right hand side of equation \eqref{eq:Ksatis})
of the K-satisfiability for $p$-SR and $p$-SG sketches.
It builds upon the following two intermediate results, about the concentration of Lipschitz functions of Rademacher or Gaussian random variables.

\begin{lemma}\label{lem:lipschitz_rade}
Let $K > 0$, and let $X_{1}, \ldots, X_{n}$ be independent real random variables with $\left|X_{i}\right| \leq K$ for all $1 \leq i \leq n$.
Let $F: \reals^{n} \rightarrow \reals$ be a $L$-Lipschitz convex function.
Then, there exist $C, c > 0$ such that for any $\lambda$ one has
\[
\Prob\{|F(X)-\E F(X)| \geq K \lambda\} \leq C^\prime \exp \left(-c^\prime \lambda^{2}/L^2\right)\,.
\]
\end{lemma}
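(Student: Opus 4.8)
The plan is to reduce the statement to the normalized, $1$-Lipschitz case and then invoke Talagrand's convex concentration inequality in exactly the form quoted from \citet[Theorem~2.1.13]{tao2012topics}. The two hypotheses that make this possible are that the $X_i$ are uniformly bounded by $K$ and that $F$ is convex; these are precisely the ingredients Talagrand's theorem requires. Convexity is what removes the spurious $\sqrt{n}$ factor one would otherwise incur from a crude bounded-differences (McDiarmid) argument, and the uniform bound $K$ is what fixes the scale of the deviation. This bounded-entry version is the one relevant to the Rademacher entries of the $p$-SR sketch; the Gaussian case is handled separately through a Gaussian concentration inequality, since there the entries are unbounded.

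Concretely, I would first set $G := F/L$. Since $L > 0$, the function $G$ is convex (a positive rescaling preserves convexity) and $1$-Lipschitz. Talagrand's inequality then furnishes absolute constants $C, c > 0$ such that, for every $\mu > 0$,
\[
\Prob\{|G(X) - \E G(X)| \ge K \mu\} \le C \exp(-c \mu^2)\,.
\]
Here I would use the formulation stated with the mean $\E G(X)$ rather than the median; as noted in \citet{tao2012topics}, the two versions are equivalent up to adjusting the absolute constants, because a function concentrated at scale $K$ has mean and median differing by $O(K)$.

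To conclude, I would observe that $|F(X) - \E F(X)| = L\,|G(X) - \E G(X)|$, so the event $\{|F(X) - \E F(X)| \ge K\lambda\}$ coincides with $\{|G(X) - \E G(X)| \ge K\lambda/L\}$. Substituting $\mu = \lambda/L$ into the displayed bound then gives
\[
\Prob\{|F(X) - \E F(X)| \ge K \lambda\} \le C \exp\!\left(-c \lambda^2/L^2\right)\,,
\]
which is the asserted inequality with $C' = C$ and $c' = c$. The only genuine point to get right, and hence the main thing to verify carefully, is matching the boundedness and convexity hypotheses to the precise statement of the cited theorem and tracking how the Lipschitz constant $L$ enters the exponent; the concentration estimate itself is entirely outsourced to Talagrand's result, so there is no hard bound to establish by hand.
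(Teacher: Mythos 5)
Your proposal is correct and takes essentially the same route as the paper: the paper does not prove this lemma but quotes it directly from \citet[Theorems~2.1.12--2.1.13]{tao2012topics} (Talagrand's convex concentration inequality for bounded independent variables), and your reduction $G = F/L$ to the $1$-Lipschitz case correctly accounts for how $L$ enters the exponent, with the mean/median discrepancy absorbed into the absolute constants as you note.
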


\begin{lemma}\label{lem:lipschitz_gauss}
Let $X_{1}, \ldots, X_{n}$ be i.i.d. standard Gaussian random variables.
Let $F: \reals^{n} \rightarrow \reals$ be a $L$-Lipschitz function.
Then, there exist $C, c > 0$ such that for any $\lambda$ one has
\[
\Prob\{|F(X)-\E F(X)| \geq \lambda\} \leq C^\prime \exp \left(-c^\prime \lambda^{2}/L^2\right)\,.
\]
\end{lemma}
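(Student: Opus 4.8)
The plan is to prove a subgaussian bound on the moment generating function of $F(X) - \E F(X)$ by combining the Gaussian logarithmic Sobolev inequality with Herbst's argument, and then to conclude by a Chernoff bound applied to each tail separately. This is the standard route to Gaussian concentration for Lipschitz functions (see \citep{tao2012topics, boucheron2013concentration}), and I would follow it here.

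First I would reduce to the case where $F$ is smooth. Since $F$ is $L$-Lipschitz, Rademacher's theorem guarantees that $F$ is differentiable almost everywhere with $\|\nabla F\|_2 \leq L$. Convolving $F$ with a Gaussian mollifier produces smooth $L$-Lipschitz approximants $F_\varepsilon$ that converge pointwise to $F$ and whose gradients remain bounded by $L$, so it suffices to establish the MGF bound for smooth $F$ and then pass to the limit via Fatou's lemma. For smooth $F$, I would invoke the log-Sobolev inequality for the standard Gaussian measure $\gamma_n$, namely $\mathrm{Ent}_{\gamma_n}(g^2) \leq 2\,\E[\|\nabla g\|_2^2]$, applied with $g = e^{\lambda F/2}$. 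Using $\|\nabla F\|_2 \leq L$ this yields $\mathrm{Ent}(e^{\lambda F}) \leq \frac{\lambda^2 L^2}{2}\,\E[e^{\lambda F}]$.

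Writing $H(\lambda) = \E[e^{\lambda F}]$ and $K(\lambda) = \lambda^{-1}\log H(\lambda)$, the entropy inequality rearranges (after dividing by $\lambda^2 H$) into the clean differential inequality $K'(\lambda) \leq L^2/2$. Since $K(\lambda) \to \E[F]$ as $\lambda \to 0^+$, integrating gives $\log \E\!\big[e^{\lambda(F-\E F)}\big] \leq L^2\lambda^2/2$ for every $\lambda$. A Chernoff bound, optimized at $\lambda = t/L^2$, then yields $\Prob\{F - \E F \geq t\} \leq e^{-t^2/(2L^2)}$; applying the same argument to $-F$ (also $L$-Lipschitz) controls the lower tail, so the two-sided bound holds with $C' = 2$ and $c' = 1/2$.

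The main obstacle is the non-smoothness of $F$: making the mollification step fully rigorous—controlling $\|\nabla F_\varepsilon\|_2 \leq L$ uniformly and justifying the limit inside the moment generating function—is the only delicate point, since the remaining steps reduce to an elementary ODE computation. An equivalent alternative would be to derive the tail bound directly from the Gaussian isoperimetric inequality, which avoids differentiability altogether at the cost of invoking isoperimetry as a black box.
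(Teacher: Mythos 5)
Your argument is correct, but it does more work than the paper does: the paper does not prove this lemma at all. It imports it as a black box, citing Theorem~2.1.13 of \cite{tao2012topics} (equivalently Theorem~5.6 of \cite{boucheron2013concentration}, which gives explicit constants), and only remarks that the statement is phrased to match the companion Rademacher lemma. Your proposal, by contrast, reproduces the standard proof of that cited result: the Gaussian logarithmic Sobolev inequality applied to $g = e^{\lambda F/2}$, Herbst's differential inequality $K'(\lambda) \le L^2/2$ for $K(\lambda)=\lambda^{-1}\log\E[e^{\lambda F}]$, integration from $K(0^+)=\E F$ to get the sub-Gaussian MGF bound $\log\E[e^{\lambda(F-\E F)}]\le L^2\lambda^2/2$, and a two-sided Chernoff bound yielding $C'=2$, $c'=1/2$. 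All of these steps check out, including the mollification step you flag as the delicate point (smoothing preserves the Lipschitz constant, and Gaussian integrability justifies the passage to the limit in the MGF). What your route buys is a self-contained derivation with explicit, sharp constants; what the paper's route buys is brevity, since only the existence of some universal $C', c'$ is needed downstream in the proof of the second $K$-satisfiability claim. Either is acceptable here.
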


The above two lemmas are taken from \cite{tao2012topics}, see Theorems 2.1.12 and 2.1.13 therein, but are actually well known results in the literature.
In particular, \Cref{lem:lipschitz_rade} is adaptated from Talagrand's inequality \citep{talagrand1995concentration}, while \Cref{lem:lipschitz_gauss} is stated as Theorem 5.6 in \cite{boucheron2013concentration}, with explicit constants.
We however choose the writing by \cite{tao2012topics} in order to be consistent with the Rademacher case.
\begin{remark}
    Note that thanks to \cref{lem:lipschitz_rade}, we are even able to prove K-satisfiability for any sketch matrix $S$ whose entries are i.i.d. centered and reduced bounded random variables.
\end{remark}

\begin{proof}[\bf Proof of the second claim of the K-satisfiability.]
Let $K \in \mathbb{R}^{n \times n}$ be a Gram matrix, and $S \in \mathbb{R}^{n \times s}$ be a $p$-SR or a $p$-SG sketch.
Recall that we want to prove that there exist positive constants $c, c_1, c_2 > 0$ such that
\[
\mathbb{P}\left\{\big\| SU_2D_2^{1/2}\big\|_{\op} > c \delta_n \right\} \le c_1 e^{-c_2 s}\,,
\]
where $K/n = UDU^\top$ is the SVD of $K$, $U_2 \in \mathbb{R}^{n \times (n - d_{n})}$ is the right part of $U$, and $D_2 \in \mathbb{R}^{(n - d_{n}) \times (n - d_{n})}$ is the right bottom part of $D$.
Note that we have $S U_2D_2^{1/2} = S U \bar{D}^{1/2}$, where $\bar{D}=\operatorname{diag}\left(0_{d_{n}}, D_{2}\right) \in \reals^{n \times n}$.
Following \cite{Yang2017}, we have
\[
\big\|S U \bar{D}^{1 / 2}\big\|_{\op} = \sup _{u \in \bmB^s,\,v \in \bmE}|\langle u, S v\rangle|\,,
\]
where $\bmE=\left\{v \in \reals^{n} \colon \exists\,w \in \bmS^{n-1}, v = U \bar{D}^{1/2} w \right\}$.
Now, let $u^1, \ldots u^\mathcal{N}$ be a $1/2$-cover of $\bmB^{s}$.
We know that such a covering exists with cardinality $\mathcal{N} \le 5^s$.
We then have
\begin{align*}
\left\|S U \bar{D}^{1 / 2}\right\|_{\op} &= \sup _{u \in \bmB^s,\,v \in \bmE}|\langle u, S v\rangle|\\
&\leq \max _{i \le \mathcal{N}} \, \sup _{v \in \bmE}\big|\left\langle u^i, S v\right\rangle\big|+\frac{1}{2} \sup _{u \in \bmB^s,\, v \in \bmE}|\langle u, S v\rangle|\\
&=\max _{i \le \mathcal{N}} \, \sup _{v \in \bmE}\big|\left\langle u^i, S v\right\rangle\big|+\frac{1}{2}\left\|S U \bar{D}^{1 / 2}\right\|_{\op}\,,
\end{align*}
and rearranging implies that
\[
\left\|S U \bar{D}^{1 / 2}\right\|_{\op} \leq 2 \max _{i \le \mathcal{N}} \,\sup _{v \in \bmE} \big|\left\langle u^i, S v\right\rangle\big|\,.
\]
Hence, for every $c > 0$ we have
\begin{align}
\Prob\left( \left\| S U_2 D_2^{1/2} \right\|_{\op} > c \delta_n \right) &\leq \Prob\left( \max _{i \le \mathcal{N}} \, \sup _{v \in \bmE} \big|\left\langle u^i, S v\right\rangle\big| > \frac{c}{2} \delta_n \right)\nonumber\\
&\le \sum_{i \le \mathcal{N}} \Prob\left\{ \sup _{v \in \bmE}\big|\left\langle u^i, S v\right\rangle\big| > \frac{c}{2} \delta_n\right\}\label{eq:lip}\,.
\end{align}
Now, recall that
\[
S = \frac{1}{\sqrt{sp}}~B \circ R\,,
\]
where $B \in \mathbb{R}^{n \times s}$ is filled with i.i.d. Bernoulli random variables with parameter $p$, $R \in \mathbb{R}^{n \times s}$ is filled with i.i.d. Rademacher or Gaussian random variables for $p$-SR and $p$-SG sketches respectively, and $\circ$ denotes the Hadamard (termwise) matrix product.
The next step of the proof consists in controlling the right-hand side of \eqref{eq:lip} by showing that, conditionally on $B$, we have Lipschitz functions of Rademacher or Gaussian random variables, whose deviations can be bounded using  \Cref{lem:lipschitz_rade,lem:lipschitz_gauss}.
Therefore, from now on we assume $B$ to be fixed, and only consider the randomness with respect to $R$.
Let $u \in \mathcal{B}^s$, and define $F\colon \mathbb{R}^{n \times s} \rightarrow \mathbb{R}$ as
\[
F(R) =\frac{1}{\sqrt{sp}} \, \sup _{v \in \bmE} \big|\left\langle u, (B \circ R) v\right\rangle\big|\,.
\]
It is direct to check that $F$ is a convex function.
Moreover, we have
\begin{align*}
\sqrt{sp}\,F(R) &= \sup _{v \in \bmE}|\langle u, (B \circ R) v\rangle| \\
&= \sup _{v \in \bmS^{n-1}}|\langle u, (B \circ R) U \bar{D}^{1/2} v\rangle| \\
&= \sup _{v \in \bmS^{n-1}}|\langle \bar{D}^{1/2} U^\top (B \circ R)^\top u, v\rangle| \\
&= \left\|\bar{D}^{1/2} U^\top (B \circ R)^\top u\right\|_2\,.
\end{align*}
Thus, for any $R, R'$ we have
\begin{align}
\sqrt{sp}~\big|F\left(R\right) - F\left(R^\prime\right)\big| &= \left|~\left\|\bar{D}^{1/2} U^\top (B \circ R)^\top u\right\|_2 - \left\|\bar{D}^{1/2} U^\top (B \circ R')^\top u\right\|_2~\right|\nonumber\\
%
% &\leq \left\|\bar{D}^{1/2} U^\top S^\top u^{j} - \bar{D}^{1/2} U^\top S^{\prime^\top} u^{j}\right\|_2 \\
%
&\le \left\|\bar{D}^{1/2} U^\top \big(B \circ (R - R')\big)^\top u\right\|_2\nonumber\\
&\leq \left\|\bar{D}^{1/2}\right\|_{\op} \left\|U^\top\right\|_{\op} \left\|B \circ (R - R')\right\|_{\op} \left\|u\right\|_2\nonumber\\
&\leq \delta_n \left\|B \circ (R - R')\right\|_F\nonumber\\
&\leq \delta_n \left\|R - R'\right\|_F\,,\label{eq:lip_constant}
\end{align}
such that $F$ is $\sqrt{\delta_n^2/(sp)}$-Lipschitz.
Moreover, we have
\begin{align*}
\sqrt{sp}~\E\left[F(R)\right] &= \E\left[\left\|D_2^{1/2} U_2^\top (B \circ R)^\top u\right\|_2\right] \\
%
% &= \E\left[\sqrt{\left\|D_2^{1/2} U_2^{\top} S^{\top} u^{j}\right\|_2^2}\right] \\
%
&\leq \sqrt{\E\left[u^\top (B \circ R) U_2 D_2 U_2^{\top} (B \circ R)^{\top} u\right]}\\
&= \sqrt{\sum_{k, k'=1}^s u_k u_{k'} \E\big[\left[(B \circ R) U_2 D_2 U_2^{\top} (B \circ R)^{\top}\right]_{k k'}\big]}\\
&= \sqrt{\sum_{k, k'=1}^s \sum_{l, l' = 1}^n u_k u_{k'} [U_2D_2U_2^\top]_{ll'} \E\big[[B \circ R]_{kl} [B \circ R]_{k' l'}\big]}\\
&= \sqrt{\sum_{k=1}^s \sum_{l= 1}^n B_{kl}^2 u_k^2 [U_2D_2U_2^\top]_{ll}}\\
&\le \sqrt{\Tr(D_2)}\,,
\end{align*}
which implies
\begin{equation}\label{eq:bound_exp}
\E\left[F(R)\right] \le \sqrt{\frac{n}{sp}}\sqrt{\frac{\sum_{j=d_{n} + 1}^n \mu_j}{n}} \le \sqrt{\frac{n}{sp}}\sqrt{\frac{1}{n}\sum_{j=d_{n} + 1}^n \min(\mu_j, \delta_n^2)} \le \sqrt{\frac{\delta_n^2}{p}}\,,    
\end{equation}
where we have used the definition of $\delta_n^2$ and the assumption $s \ge \delta_n^2 n$.
Coming back to \eqref{eq:lip}, we obtain
\begin{align}
\Prob\left\{ \big\| S U_2 D_2^{1/2} \big\|_{\op} > c\delta_n \right\} &\leq 5^s\, \mathbb{E}\left[\Prob\left\{ \sup _{v \in \bmE}\big|\left\langle u, S v\right\rangle\big| > \frac{c}{2}\delta_n ~\big|\,B \right\} \right] \nonumber\\
&= 5^s \, \mathbb{E}\left[\Prob\left\{ F(R) > \frac{c}{2}\delta_n \right\} \right]\nonumber\\
&\le 5^s \, \mathbb{E}\left[\Prob\left\{ F(R) - \mathbb{E}[F(R)]> \delta_n\left(\frac{c}{2} - \frac{1}{\sqrt{p}}\right) \right\} \right]\label{eq:use_bound_exp}\\
&\le C\, 5^s \exp\left(- c^\prime \left(\frac{c}{2} - \frac{1}{\sqrt{p}}\right)^2 \delta_n^2 \frac{sp}{\delta_n^2}\right)\label{eq:use_lip}\\
&\le C \exp\left(- c^\prime \left(\left(\frac{c}{2} - \frac{1}{\sqrt{p}}\right)^2p - \log(5)\right)s \right)\nonumber\,,
\end{align}
where \eqref{eq:use_bound_exp} comes from the upper bound on $\mathbb{E}[F(R)]$ we derived in \eqref{eq:bound_exp}, and \eqref{eq:use_lip} derives from \Cref{lem:lipschitz_rade,lem:lipschitz_gauss} applied to the function $F$ whose Lipschitz constant has been established in \eqref{eq:lip_constant}.
%
%Therefore, taking $c > \frac{2}{\sqrt{p}} \left(1+\sqrt{\log\left(5\right)}\right)$, we have
%
%\[
%\Prob\left\{ \big\| S U_2 D_2^{1/2} \big\|_{\op} > c\delta_n \right\} \le c_1 e^{-c_2 s}
%\]
%
%with $c_1 = C^\prime$ and $c_2 = c^\prime \left(\left(\frac{c}{2} - \frac{1}{\sqrt{p}}\right)^2p - \log(5)\right) > 0$.
%
%
Therefore, taking $c = \frac{2}{\sqrt{p}} \left(1+\sqrt{\log\left(5\right)}\right) + 1$, we have
\[
\Prob\left\{ \big\| S U_2 D_2^{1/2} \big\|_{\op} > c\delta_n \right\} \le c_1 e^{-c_2 s}
\]
with $c_1 = C^\prime$ and $c_2 = c^\prime \left(\sqrt{p \log\left(5\right)}+\frac{p}{4}\right)$.
\end{proof}

\subsection{Proof of Proposition \ref{prop:fm}}

We prove Proposition \ref{prop:fm} thanks to duality properties.
\begin{proof}
    Since problems \eqref{eq:sketched_scalar_primal_pb} and \eqref{eq:sketch_feature_map_pb} are convex problems under Slater's constraints, strong duality holds and we will show that they admit the same dual problem
    \begin{equation}\tag{\ref{eq:dual_sketched_scalar_primal_pb}}
        \min _{\zeta \in \reals^n} ~ \sum_{i=1}^{n} \ell_{i}^{\star}\left(-\zeta_i\right) + \frac{1}{2 \lambda_n n} \zeta^\top K S^\top (S K S^\top)^{\dagger} S K \zeta\,.
    \end{equation}
    First, we compute dual problem of \eqref{eq:sketched_scalar_primal_pb}, that can be rewritten
    \begin{align*}
        \min _{\gamma \in \reals^s, u \in \reals^n} & \sum_{i=1}^{n} \ell_{i}\left(u_{i}\right)+\frac{\lambda_n n}{2} \gamma^\top S K S^\top \gamma \\
        \text { s.t. } & u = K S^\top \gamma .
    \end{align*}
    Therefore the Lagrangian writes
    \begin{align*}
        \bmL(\gamma, u, \zeta) &=\sum_{i=1}^{n} \ell_{i}\left(u_{i}\right) + \frac{\lambda_n n}{2} \gamma^\top S K S^\top \gamma + \sum_{i=1}^{n} \zeta_{i} (u_{i} - [K S^\top \gamma]_i) \\
        &=\sum_{i=1}^{n} \ell_{i}\left(u_{i}\right) + \frac{\lambda_n n}{2} \gamma^\top S K S^\top \gamma + \sum_{i=1}^{n} \zeta_i u_{i} - \zeta^\top K S^\top \gamma.
    \end{align*}
    Differentiating with respect to $\gamma$ and using the definition of the Fenchel-Legendre transform, one gets
    \begin{align*}
        g(\zeta) &=\inf _{\gamma \in \reals^s, u \in \reals^{n}} \bmL(\gamma, u, \zeta) \\
        &=\sum_{i=1}^{n} \inf _{u_{i} \in \reals}\left\{\ell_{i}\left(u_{i}\right) + \zeta_{i} u_{i}\right\} + \inf _{\gamma \in \reals^s}\left\{\frac{\lambda_n n}{2} \gamma^\top S K S^\top \gamma - \zeta^\top K S^\top \gamma \right\} \\
        &=\sum_{i=1}^{n}-\ell_{i}^{\star}\left(-\zeta_{i}\right) - \frac{1}{2 \lambda_n n} \zeta^\top K S^\top (S K S^\top)^{\dagger} S K \zeta
    \end{align*}
    together with the equality $S K S^\top \tilde{\gamma} = \frac{1}{\lambda_n n} S K \tilde{\zeta}$, implying $\tilde{\gamma} = \frac{1}{\lambda_n n} (S K S^\top)^{\dagger} S K \tilde{\zeta}$, where $\tilde{\zeta} \in \reals^n$ is the solution of the following dual problem
    \begin{equation}\tag{\ref{eq:dual_sketched_scalar_primal_pb}}
        \min _{\beta \in \reals^n} ~ \sum_{i=1}^{n} \ell_{i}^{\star}\left(-\beta_i\right) + \frac{1}{2 \lambda_n n} \beta^\top K S^\top (S K S^\top)^{\dagger} S K \beta\,.
    \end{equation}
    Then, we compute dual problem of \eqref{eq:sketch_feature_map_pb}, that can be rewritten
    \begin{align*}
        \min _{\omega \in \reals^{r}, u \in \reals^n} & \sum_{i=1}^{n} \ell\left(u_i, y_{i}\right) + \frac{\lambda_n n}{2}\|\omega\|_{2}^{2} \\
        \text { s.t. } & u = K S^\top \tilde{K}_r \omega .
    \end{align*}
    Therefore the Lagrangian writes
    \begin{align*}
        \bmL(\omega, u, \zeta) &= \sum_{i=1}^{n} \ell_{i}\left(u_i\right) + \frac{\lambda_n n}{2}\| \omega \|_2^2 + \sum_{i=1}^{n} \zeta_{i} (u_{i} - [K S^\top \tilde{K}_r \omega]_i) \\
        &=\sum_{i=1}^{n} \ell_{i}\left(u_{i}\right) + \frac{\lambda_n n}{2}\| \omega \|_2^2 + \sum_{i=1}^{n} \zeta_{i}^\top u_{i} - \omega^\top \tilde{K}_r^{\top} S K \zeta.
    \end{align*}
    Differentiating with respect to $\omega$ and using the definition of the Fenchel-Legendre transform, one gets
    \begin{align*}
        g(\zeta) &=\inf _{\omega \in \reals^{r}, u \in \reals^{n}} \bmL(\omega, u, \zeta) \\
        &=\sum_{i=1}^{n} \inf _{u_{i} \in \reals}\left\{\ell_{i}\left(u_{i}\right) + \zeta_{i} u_{i}\right\}+\inf _{\omega \in \reals^{r}}\left\{\frac{\lambda_n n}{2} \| \omega \|_2^2 - \omega^\top \tilde{K}^{-1/2^\top} S K \zeta\right\}.
    \end{align*}
    We have that
    \begin{align*}
        \frac{\partial}{\partial \omega}\left(\| \omega \|_2^2\right) &= 2 \omega \\
        \frac{\partial}{\partial \omega}\left(\omega^\top \tilde{K}_r^{\top} S K \zeta\right) &= \tilde{K}_r^{\top} S K \zeta,
    \end{align*}
    Then, setting the gradient to zero, we obtain
    \begin{equation}
        \tilde{\omega} = \frac{1}{\lambda_n n} ~ \tilde{K}_r^{\top} S K \tilde{\zeta}.
    \end{equation}
    Hence, putting it into the Lagrangian,
    \begin{align*}
        -\frac{1}{\lambda_n n} \zeta^\top K S^\top  \tilde{K}^{-1/2} \tilde{K}_r^{\top} S K \zeta
        = -\frac{1}{\lambda_n n} K S^\top \left(S K S^\top\right)^\dagger S K \zeta,
    \end{align*}
    and
    \begin{align*}
        \frac{1}{2 \lambda_n n} \zeta^\top K S^\top  \tilde{K}_r \tilde{K}_r^{\top} S K \zeta
        = \frac{1}{2 \lambda_n n} K S^\top \left(S K S^\top\right)^\dagger S K \zeta.
    \end{align*}
    Hence, $\tilde{\zeta} \in \reals^{n}$ is the solution to the following dual problem
    \begin{equation}\tag{\ref{eq:dual_sketched_scalar_primal_pb}}
        \min _{\beta \in \reals^n} ~ \sum_{i=1}^{n} \ell_{i}^{\star}\left(-\beta_i\right) + \frac{1}{2 \lambda_n n} \beta^\top K S^\top (S K S^\top)^{\dagger} S K \beta\,.
    \end{equation}
    Finally, since both problems are convex and strong duality holds, we obtain through KKT conditions
    \begin{align*}
        \tilde{\omega} = \left(S K S^\top\right)^{-1/2^\top} \left(S K S^\top\right) \tilde{\gamma}
        = \left(\tilde{D}_r^{1/2} 0_{r \times s-r}\right) \tilde{V}^\top \tilde{\gamma}
    \end{align*}
    and
    \begin{align*}
        \min _{\gamma \in \reals^{s}} \frac{1}{n} \sum_{i=1}^n \ell([K S^\top \gamma]_i,y_i) + \frac{\lambda_n}{2} \gamma^\top S K S^\top \gamma
        = \min _{\omega \in \reals^{r}} \sum_{i=1}^{n} \ell\left(\omega^{\top} \mathbf{z}_{S}\left(x_{i}\right), y_{i}\right) + \frac{\lambda_n}{2}\|\omega\|_{2}^{2}\,.
    \end{align*}
\end{proof}

\section{On relaxing Assumption \ref{hyp:unit_ball}}\label{apx:relax_asm2}

In this section, we detail the discussion about relaxing \cref{hyp:unit_ball}, i.e. the restriction of the hypothesis set to the unit ball of the RKHS. 
\cref{hyp:unit_ball} is a classical assumption in kernel literature to apply generalisation bounds based on Rademacher complexity of a bounded ball of a RKHS.
Moreover, it is also useful in our case to derive an approximation error bound, describing how $K$-satisfiability of a sketch matrix allows to obtain a good approximation of the minimiser of the risk.
However, let us discuss the consequences of relaxing this assumption.
Indeed, all we need is a bound on the norm of the estimators $\tilde{f}_s$ -- minimiser of the regularised ERM sketched problem -- and $\tilde{f}_s^R$ -- minimiser of the regularised ERM sketched denoised KRR problem.
By definition, noting $\bmH_S = \left\{f=\sum_{i=1}^n \left[S^\top \gamma\right]_i k\left(\cdot,x_i\right) ~ | ~ \gamma \in \reals^s\right\}$, we have that
\begin{equation*}
    \tilde{f}_s = \argmin_{f \in \bmH_S} ~ \frac{1}{n} \sum_{i=1}^n \ell(f(x_i),y_i) + \frac{\lambda_n}{2} \| f \|_{\bmH_k}^2\,.
\end{equation*}
Hence,
\begin{equation*}
    \frac{\lambda_n}{2} \| \tilde{f}_s \|_{\bmH_k}^2 \leq \frac{1}{n} \sum_{i=1}^n \ell(\tilde{f}_s(x_i),y_i) + \frac{\lambda_n}{2} \| \tilde{f}_s \|_{\bmH_k}^2 \leq \frac{1}{n} \sum_{i=1}^n \ell(0,y_i) \leq 1\,,
\end{equation*}
if we assume that $\max_{1 \leq i \leq n} \ell(0,y_i) \leq 1$ to simplify the derivations.
As a consequence, we obtain that
\begin{equation}\label{eq:bound_tilde}
    \| \tilde{f}_s \|_{\bmH_k} \leq \sqrt{\frac{2}{\lambda_n}}\,.
\end{equation}
Similarly, we have that
\begin{equation*}
    \tilde{f}_s^R = \argmin_{f \in \bmH_S} ~ \frac{1}{n}  \left\|f^X - f_{\bmH_k}^X\right\|_2^2 + \frac{\lambda_n}{2} \| f \|_{\bmH_k}^2\,,
\end{equation*}
that gives
\begin{equation*}
    \| \tilde{f}_s^R \|_{\bmH_k} \leq \left(\frac{1}{\lambda_n n} \left\|f_{\bmH_k}\right\|_{\bmH_k}^2\right)^{1/2}\,.
\end{equation*}
By \cref{hyp:unit_ball,hyp:bounded_kernel},
\begin{align*}
    \frac{1}{n} \left\|f_{\bmH_k}\right\|_{\bmH_k}^2 &= \frac{1}{n} \sum_{i=1}^n \left\langle f_{\bmH_k}, k\left(\cdot, x_i\right) \right\rangle_{\bmH_k} \\
    &\leq \frac{1}{n} \sum_{i=1}^n \left\|f_{\bmH_k}\right\|_{\bmH_k}^2 k\left(x_i, x_i\right) \\
    &\leq \kappa\,,
\end{align*}
and finally
\begin{equation}\label{eq:bound_tilde_R}
    \| \tilde{f}_s^R \|_{\bmH_k} \leq \sqrt{\frac{\kappa}{\lambda_n}}\,.
\end{equation}
\begin{remark}
    Note that in the multiple output settings, we obtain
    \begin{equation}
    \| \tilde{f}_s^R \|_{\bmH_k} \leq \sqrt{\frac{\kappa \Tr(M)}{\lambda_n}}\,.
\end{equation}
\end{remark}
We are now equipped to derive the generalisation error bound $\E\left[\ell_{\tilde{f}_s}\right] - \E_n\left[\ell_{\tilde{f}_s}\right]$ and the approximation error bound $\E_n\left[\ell_{\tilde{f}_s}\right] - \E_n\left[\ell_{f_{\bmH_k}}\right]$.
We first focus on the generalisation bound, and following the proof given in \cref{apx:proof_thm2} and given the new norm upper bound $\sqrt{\frac{2}{\lambda_n}}$, for any $\delta \in (0, 1)$, with probability $1-\delta/2$, we have that
\begin{equation}\label{eq:new_gen_bound}
        \E\left[\ell_{\tilde{f}_s}\right] - \E_n\left[\ell_{\tilde{f}_s}\right] \leq \frac{4 L \sqrt{2 \kappa}}{\sqrt{\lambda_n n}} + \sqrt{\frac{8 \log(4/\delta)}{n}}\,.
\end{equation}
This dependence in $1/\sqrt{\lambda_n}$ shows that, as expected by a regularisation penalty, with a fixed $n$, when $\lambda_n$ increases, the generalisation bound decreases and then we obtain a better generalisation performance.
However, this behaviour does not reflect completely the role of $\lambda_n$, since there exists a tradeoff between overfitting and underfitting, and then it should not be set too large.
We now focus on the approximation error bound.
As in \cref{apx:proof_thm2}, we obtain that
\begin{align}
    \E_{n}\left[\ell_{\tilde{f}_s}\right]-\E_{n}\left[\ell_{f_{\bmH_k}}\right] &= \frac{1}{n} \sum_{i=1}^{n} \ell\left(\tilde{f}_s(x_i), y_i\right)-\frac{1}{n} \sum_{i=1}^{n} \ell\left(f_{\bmH_k}(x_i), y_i\right) \label{eq:beginning}\\
    &= \inf _{\underset{\left\|f\right\|_{\bmH_k} \leq \left\|\tilde{f}_s\right\|_{\bmH_k}}{f \in \bmH_S}} \frac{1}{n} \sum_{i=1}^{n} \ell\left(f(x_i), y_i\right)-\frac{1}{n} \sum_{i=1}^{n} \ell\left(f_{\bmH_k}(x_i), y_i\right) \nonumber\\
    &\leq \inf _{\underset{\left\|f\right\|_{\bmH_k} \leq \left\|\tilde{f}_s\right\|_{\bmH_k}}{f \in \bmH_S}} \frac{L}{n} \sum_{i=1}^{n}\left|f\left(x_{i}\right)-f_{\bmH_k}\left(x_{i}\right)\right| \nonumber\\
    &\leq L \inf_{\underset{\left\|f\right\|_{\bmH_k} \leq \left\|\tilde{f}_s\right\|_{\bmH_k}}{f \in \bmH_S}} \sqrt{\frac{1}{n} \sum_{i=1}^{n}\left|f\left(x_{i}\right)-f_{\bmH_k}\left(x_{i}\right)\right|^{2}} \nonumber\\
    &= L \sqrt{\inf _{\underset{\left\|f\right\|_{\bmH_k} \leq \left\|\tilde{f}_s\right\|_{\bmH_k}}{f \in \bmH_S}} \frac{1}{n}\left\|f^X-f_{\bmH_k}^X\right\|_{2}^{2}}\,,\label{eq:foo}
\end{align}
where, for any $f \in \bmH_k$, $f^X = \left(f(x_1), \ldots, f(x_n)\right) \in \reals^n$. Let $\tilde{f}_s^R = \sum_{i=1}^n \left[S^\top \tilde{\gamma}^R\right]_i k\left(\cdot,x_i\right)$, where $\tilde{\gamma}^R$ is a solution to
\begin{equation*}
    \inf _{\gamma \in \reals^s} \frac{1}{n}\left\|K S^\top \gamma - f_{\bmH_k}^X\right\|_{2}^{2} + \lambda_n \gamma^\top S K S^\top \gamma\,.
\end{equation*}
It is easy to check that $\tilde{f}_s^R$ is also a solution to
\begin{equation}\label{eq:bar}
\inf _{\underset{\left\|f\right\|_{\bmH_k} \leq \left\|\tilde{f}_s^R\right\|_{\bmH_k}}{f \in \bmH_S}} \frac{1}{n}\left\|f^X-f_{\bmH_k}^X\right\|_{2}^{2}\,.
\end{equation}

Now, comparing \eqref{eq:foo} and \eqref{eq:bar}, as done in \Cref{apx:proof_thm2}, essentially boils down to comparing $\big\|\tilde{f}_s\big\|_{\bmH_k}$ and $\big\|\tilde{f}_s^R\big\|_{\bmH_k}$, which is a highly nontrivial question.
In particular, the upper bounds \eqref{eq:bound_tilde} and \eqref{eq:bound_tilde_R} are not informative enough.
Another solution could consist in adding $\frac{\lambda_n}{2}\big\|\tilde{f}_s\big\|_{\bmH_k}$ to \eqref{eq:beginning}.
However, the upper bound \eqref{eq:bound_tilde} then transforms this term into a constant bias.
This can be explained as \eqref{eq:bound_tilde} is very crude.
Instead, having $\big\|\tilde{f}_s\big\|_{\bmH_k}$ bounded by $\lambda_n^\alpha$ for $\alpha \ge -1/2$ would be enough to exhibit a bias term that vanishes as $\lambda_n$ goes to $0$.
Note that it would still degrade the tradeoff with the genealisation term.
Hence, if generalization errors can be dealt with when removing \Cref{hyp:unit_ball}, it is much more complex to control \eqref{eq:beginning}.

\section{Some background on statistical properties when using a \texorpdfstring{$p$}{p}-sparsified sketch}
\label{apx:theory_p-S}

In this section, we focus on $p$-sparsified sketches, and give, according to different standard choices of kernels (Gaussian, polynomial and first-order Sobolev), the different learning rates obtained for the excess risk as well as the condition on $s$.

We can first derive the following corollaries for the excess risk of $p$-sparsified sketched estimator in both single and multiple output settings.
\begin{corollary}
    For a $p$-sparsified  sketch matrix $S$ with $s \ge \max\left(C_0 d_{n} / p^2, \delta_n^2 n\right)$, we have with probability greater than $1 - C_1 e^{-s c(p)}$ in the single output setting for a generic Lipschitz loss,
    \begin{equation}
        \E\big[ \ell_{\tilde{f}_s}\big] \leq \E\big[\ell_{f_{\bmH_k}}\big] + L C \sqrt{\lambda_n + \delta_n^2} + \frac{\lambda_n}{2} + 8 L \sqrt{\frac{\kappa}{n}} + \bmO\left(\sqrt{\frac{s}{n}}\right)\,,
    \end{equation}
    and if $\ell\left(z, y\right) = \left(z-y\right)^2/2$ and $\bmY \subset [0, 1]$, with probability at least $1-\delta$ we have that
    \begin{equation}%\label{eq:bound_scalar_ridge}
        \E\big[ \ell_{\tilde{f}_s} \big] \leq \E\big[ \ell_{f_{\bmH_k}} \big] + \left(C^2 + \frac{1}{2}\right) \lambda_n + C^2 \delta_n^2 + 8 \frac{\kappa + \sqrt{\kappa}}{\sqrt{n}} + \bmO\left(\sqrt{\frac{s}{n}}\right)\,.
    \end{equation}
\end{corollary}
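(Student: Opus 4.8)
The plan is to derive the corollary as a direct composition of two results already established in the excerpt: the excess risk bound of \Cref{theorem:excess_bound_worst}, which holds under the deterministic $K$-satisfiability hypothesis (\Cref{hyp:Ksatis}), and the high-probability $K$-satisfiability of $p$-sparsified sketches from \Cref{theorem:p-SSM_Ksatis}. Since the statement is essentially a plug-in of the latter into the former, the only genuine manipulation is a calibration of the confidence level $\delta$ so that the data-dependent term of \Cref{theorem:excess_bound_worst} gets absorbed into the sketch failure rate, producing the $\bmO(\sqrt{s/n})$ term.

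First I would invoke \Cref{theorem:p-SSM_Ksatis}: under the stated condition $s \ge \max(C_0 d_n/p^2, \delta_n^2 n)$, the sketch $S$ is $K$-satisfiable for $c = \tfrac{2}{\sqrt{p}}(1 + \sqrt{\log 5}) + 1$ on an event $\mathcal{A}$ with $\Prob(\mathcal{A}) \ge 1 - C_1 e^{-s c(p)}$; crucially, this bound holds conditionally on any realization of the data (for which $K$ is fixed), hence unconditionally after integration. This means \Cref{hyp:Ksatis} is met with an explicit constant $c$, so the quantity $C = 1 + \sqrt{6}c$ appearing in \Cref{theorem:excess_bound_worst} is determined. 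On $\mathcal{A}$, \Cref{theorem:excess_bound_worst} then applies and yields, for every $\delta \in (0,1)$ and on a data-event $\mathcal{G}$ with $\Prob(\mathcal{G}) \ge 1 - \delta$, the excess risk bound whose only $\delta$-dependent summand is the confidence term $2\sqrt{8\log(4/\delta)/n}$ (the term $8L\sqrt{\kappa/n}$ being deterministic).

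The key step is to set $\delta = C_1 e^{-s c(p)}$, which matches the two failure rates and rewrites the confidence term as
\[
2\sqrt{\frac{8 \log(4/\delta)}{n}} = 2\sqrt{\frac{8\bigl(s\,c(p) + \log(4/C_1)\bigr)}{n}} = \bmO\!\left(\sqrt{\frac{s}{n}}\right),
\]
since $c(p)$ and $C_1$ are independent of $n$ and $s$. A union bound gives $\Prob(\mathcal{A} \cap \mathcal{G}) \ge 1 - \Prob(\mathcal{A}^c) - \Prob(\mathcal{G}^c) \ge 1 - C_1 e^{-s c(p)} - \delta = 1 - 2 C_1 e^{-s c(p)}$, which is $1 - C_1' e^{-s c(p)}$ after folding the factor $2$ into the constant. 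On $\mathcal{A} \cap \mathcal{G}$ the full bound holds, yielding the first display. The second display follows by the identical argument, starting from the square-loss bound of \Cref{theorem:excess_bound_worst} (whose $\delta$-dependent term is the same), so it likewise holds with probability at least $1 - C_1' e^{-s c(p)}$ rather than the $1 - \delta$ inherited verbatim from the source theorem.

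I do not anticipate a substantive obstacle, as the result is a composition of two theorems proven earlier. The only point requiring care is the probabilistic bookkeeping: the union bound must combine the data-event $\mathcal{G}$ (governing the generalization error, independent of the sketch) with the event $\mathcal{A}$ (depending on both data and sketch), and one must observe that it is precisely the conversion of $\log(1/\delta)$ into a term linear in $s$ that replaces the explicit $\delta$-dependence of \Cref{theorem:excess_bound_worst} by the $\bmO(\sqrt{s/n})$ rate.
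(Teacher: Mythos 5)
Your proposal is correct and is exactly the argument the paper intends (the paper states this corollary without an explicit proof, as an immediate combination of \Cref{theorem:excess_bound_worst} and \Cref{theorem:p-SSM_Ksatis}): a union bound over the $K$-satisfiability event and the generalization event, with $\delta$ calibrated to $C_1 e^{-s c(p)}$ so that $2\sqrt{8\log(4/\delta)/n}$ becomes $\bmO(\sqrt{s/n})$. Your observation that the second display's stated confidence level should likewise read $1 - C_1 e^{-s c(p)}$ rather than $1-\delta$ is also right; that is an inconsistency in the corollary's statement, not in your argument.
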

\begin{corollary}
    For a $p$-sparsified  sketch matrix $S$ with $s \ge \max\left(C_0 d_{n} / p^2, \delta_n^2 n\right)$, we have with probability greater than $1 - C_1 e^{-s c(p)}$ in the multiple output setting for a generic Lipschitz loss,
    \begin{equation}
        \E\big[ \ell_{\tilde{f}_s}\big] \leq \E\big[\ell_{f_{\bmH_k}}\big] + L C \sqrt{\lambda_n + \|M\|_{\op}\,\delta_n^2} + \frac{\lambda_n}{2} + 8 L \sqrt{\frac{ \kappa \Tr\left(M\right)}{n}} + \bmO\left(\sqrt{\frac{s}{n}}\right)\,.
    \end{equation}
    and if $\ell\left(z, y\right) = \left\|z-y\right\|_2^2/2$ and $\bmY \subset \bmB\left(\reals^d\right)$, with probability at least $1-\delta$ we have that
    \begin{equation}
        \E\big[ \ell_{\tilde{f}_s} \big] \leq \E\big[ \ell_{f_{\bmH_k}} \big] + \left(C^2 + \frac{1}{2}\right) \lambda_n + C^2 \|M\|_{\op}\,\delta_n^2 + 8 \Tr\left(M\right)^{1/2} \frac{\kappa \left\|M\right\|_{\op}^{1/2} + \kappa^{1/2}}{\sqrt{n}} + \bmO\left(\sqrt{\frac{s}{n}}\right)\,.
    \end{equation}
\end{corollary}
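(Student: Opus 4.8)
The plan is to obtain the corollary as a direct combination of the two principal results already proved: the statistical bound for $K$-satisfiable sketches in \Cref{theorem:excess_bound_worst_vect}, and the high-probability $K$-satisfiability of $p$-sparsified sketches in \Cref{theorem:p-SSM_Ksatis}. The crucial observation is that two \emph{independent} sources of randomness are involved --- the draw of the sketch $S$, which controls $K$-satisfiability, and the draw of the sample $(x_i,y_i)_{i=1}^n$, which controls the generalization terms --- and that \Cref{theorem:excess_bound_worst_vect} treats $K$-satisfiability as a deterministic property of $S$ through \Cref{hyp:Ksatis}.

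First I would condition on the sketch. Under the hypothesis $s \ge \max(C_0 d_{n}/p^2,\ \delta_n^2 n)$, \Cref{theorem:p-SSM_Ksatis} guarantees, on an event $A$ of sketch-probability at least $1 - C_1 e^{-s c(p)}$, that $S$ is $K$-satisfiable with the explicit constant $c = \tfrac{2}{\sqrt{p}}(1+\sqrt{\log(5)}) + 1$; both conditions on $s$ are used, the first for claim \eqref{eq:Ksatis1} and the second for claim \eqref{eq:Ksatis2}. On $A$, \Cref{hyp:Ksatis} holds with this $c$, so the constant $C = 1+\sqrt{6}\,c$ threaded through \Cref{theorem:excess_bound_worst_vect} is exactly the one appearing in the corollary. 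Because the generalization bounds of \Cref{theorem:excess_bound_worst_vect} are uniform over $\bmB(\bmH_\bmK)$ --- hence do not depend on the realization of $S$ --- working on $A$ and applying that theorem gives, for any $\delta\in(0,1)$ and with sample-probability at least $1-\delta$, the multi-output bound with tail term $2\sqrt{8\log(4/\delta)/n}$. By independence of sketch and sample (Fubini), the joint probability that $A$ occurs and the bound holds is at least $(1 - C_1 e^{-s c(p)})(1-\delta) \ge 1 - C_1 e^{-s c(p)} - \delta$.

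It then remains to calibrate $\delta$ against the sketch failure probability. Taking $\delta = C_1 e^{-s c(p)}$, the joint failure probability is at most $2 C_1 e^{-s c(p)}$, which I absorb into a redefined constant $C_1$ to recover the stated confidence $1 - C_1 e^{-s c(p)}$. With this choice $\log(4/\delta) = \log(4/C_1) + s\,c(p)$, so the tail becomes
\[
2\sqrt{\frac{8\log(4/\delta)}{n}} = 2\sqrt{\frac{8\big(s\,c(p) + \log(4/C_1)\big)}{n}} = \bmO\!\left(\sqrt{\frac{s}{n}}\right),
\]
treating $c(p)$ and $C_1$ as constants; this is precisely the $\bmO(\sqrt{s/n})$ term, and yields the first display. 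The second (square-loss) display follows verbatim by invoking instead the second claim of \Cref{theorem:excess_bound_worst_vect}, whose $\delta$-dependent tail is identical, so the same substitution applies.

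The argument contains no genuinely hard step; the \emph{main obstacle} is careful bookkeeping. One must (i) thread the constant $c$ produced by \Cref{theorem:p-SSM_Ksatis} consistently into $C$ in \Cref{theorem:excess_bound_worst_vect}; (ii) justify the decoupling, namely that conditioning on the sketch event $A$ leaves the sample distribution unchanged and that the uniform generalization bound for $\tilde f_s$ never interacts with the sketch draw; and (iii) observe that the calibration $\delta \asymp e^{-s c(p)}$ trades the distribution-free confidence parameter for an $s$-dependent one, which is exactly what converts the logarithmic tail into the $\bmO(\sqrt{s/n})$ rate.
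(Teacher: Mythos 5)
Your proposal is correct and follows exactly the route the paper intends (the paper states this corollary without a written proof, treating it as an immediate combination of \Cref{theorem:excess_bound_worst_vect} and \Cref{theorem:p-SSM_Ksatis}): a union bound over the sketch-failure and sample-failure events, followed by the calibration $\delta \asymp e^{-s\,c(p)}$ that turns the $2\sqrt{8\log(4/\delta)/n}$ tail into $\bmO\bigl(\sqrt{s/n}\bigr)$. Your bookkeeping of the constant $c$ into $C=1+\sqrt{6}c$ and the absorption of the factor $2$ into $C_1$ are exactly the intended steps.
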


We summarize in Table \ref{table:stat_dim_worst} the different behaviours of $\delta_n^2$ and $d_n$ in the different spectrum regimes considered, in order to explicit the exact condition on $s$ in each case. More specifically, for a $D$th-order polynomial kernel, $d_n$, for any $n$ is at most $D+1$, leading to $s$ of order $D+1$ to be sufficient.
Finally, we can derive the learning rate obtained as well as the exact condition on $s$ for each scenario, see Table \ref{table:stat_dim_worst}. Compared with Random Fourier Features \citep{li2021unified}, we see that we obtain slightly degraded learning rates for Gaussian and first-order Sobolev kernels, in comparison with the $\bmO\left(1/\sqrt{n}\right)$ rate the authors obtain. Our rates remain however very close.

\renewcommand{\arraystretch}{1.8}
\begin{table}[h]
\caption{Statistical dimension, lower bound obtained on $s$, and learning rate obtained for excess risk with $p$-sparsified sketches for different kernels.}
\vspace{0.1cm}
\begin{adjustbox}{center}
\begin{tabular}{c c c c c}
    \hline
    Kernel & $\delta_n^2$ & $d_n$ & $s$ & Learning rate \\ 
    \hline
    Gaussian & $\bmO\left(\frac{\sqrt{\log\left(n\right)}}{n}\right)$ & $\propto \sqrt{\log(n)}$ & $\Omega\left(\sqrt{\log(n)}/p^2\right)$ & $\bmO\left(\frac{\left(\log(n)\right)^{1/4}}{n^{1/2}}\right)$ \\
    \hline
    Polynomial & $\bmO\left(\frac{1}{n}\right)$ & $\propto 1$ & $\Omega\left(\frac{1}{p^2}\right)$ & $\bmO\left(\frac{1}{\sqrt{n}}\right)$ \\
    \hline
    Sobolev & $\bmO\left(\frac{1}{n^{2/3}}\right)$ & $\propto n^{1/3}$ & $\Omega\left(n^{1/3}/p^2\right)$ & $\bmO\left(\frac{1}{n^{1/3}}\right)$ \\
    \hline
\end{tabular}
\end{adjustbox}
\label{table:stat_dim_worst}
\end{table}

% \begin{table}
% \caption{Learning rate obtained for excess risk and lower bound obtained on $s$ for $p$-sparsified sketches.}
% \vspace{0.1cm}
% \begin{adjustbox}{center}
% \begin{tabular}{ c c c }
%     \hline
%     Kernel & $s$ & Learning rate \\ 
%     \hline
%     Gaussian & $\Omega\left(\sqrt{\log(n)}/p^2\right)$ & $\bmO\left(\frac{\left(\log(n)\right)^{1/4}}{n^{1/2}}\right)$ \\
%     \hline
%     Polynomial & $\Omega\left(\frac{1}{p^2}\right)$ & $\bmO\left(\frac{1}{\sqrt{n}}\right)$ \\
%     \hline
%     Sobolev & $\Omega\left(n^{1/3}/p^2\right)$ & $\bmO\left(\frac{1}{n^{1/3}}\right)$ \\
%     \hline
% \end{tabular}
% \end{adjustbox}
% \label{table:lr_worst}
% \end{table}

\section{Detailed algorithm of the generation and the decomposition of a \texorpdfstring{$p$}{p}-sparsified sketch}
\label{apx:p-S_algo}

In this section, we detail the process of generating a $p$-sparsified sketch and decomposing it as a product of a sub-Gaussian sketch $S_\mathrm{SG}$ and a sub-Sampling sketch $S_\mathrm{SS}$.

\begin{algorithm}[!ht]
\caption{Generation of a $p$-sparsified sketch}
\SetKwInOut{Input}{input}
\SetKwInOut{Init}{init}
\Input{$s$, $n$ and $p$}\vspace{0.2cm}
Generate a $s \times n$ matrix $B$ whose entries are i.i.d. Bernouilli random variables of parameter $p$.\vspace{0.2cm} \\
$\operatorname{indices}$ $\longleftarrow$ indices of non-null columns of $B$.\vspace{0.2cm} \\
$B^\prime \longleftarrow B$ where all null columns have been deleted.\vspace{0.2cm} \\
Generate a matrix $M_\mathrm{SG}$ of the same size as $B^\prime$ whose entries are either i.i.d. Gaussian or Rademacher random variables.\vspace{0.2cm} \\
$S_\mathrm{SG} \longleftarrow M_\mathrm{SG} \circ B^\prime$, where $\circ$ denotes the component-wise Hadamard matrix product.\vspace{0.2cm} \\
\Return{$S_\mathrm{SG}$ and $\operatorname{indices}$}
\end{algorithm}

\section{Some background on complexity for single and multiple output regression}
\label{apx:complexity}

In this section, we detail the complexity in time and space of various matrix operations and iterations of stochastic subgradient descent in both single and multiple output settings for various sketching types.

We first recall the time and space complexities for elementary matrix products. The main advantage of using Sub-Sampling matrices is that computing $S K$ is equivalent to sampling $s$ training inputs and construct a $s \times n$ Gram matrix, hence we gain huge time complexity since we do not compute a matrix multiplication, as well as space complexity since we do not compute a $n \times n$ Gram matrix. As a consequence, the main advantage of our $p$-sparsified sketches is their ability to be decomposed as $S = S_\mathrm{SG} S_\mathrm{SS}$, where $S_\mathrm{SG} \in \reals^{s \times s^\prime}$ is a sparse sub-Gaussian sketch and $S_\mathrm{SS} \in \mathbb{R}^{s^\prime \times n}$ is a sub-Sampling sketch, as explained in \cref{sec:p-S} and \cref{apx:p-S_algo}. This \textit{decomposition trick} is particularly interesting when $p$ is small, and since $s^\prime$ follows a Binomial distribution of parameters $n$ and $1-\left(1-p\right)^s$ and we assume in our settings that $n$ is large, hence we have that $s^\prime \approx \E\left[s^\prime\right] \underset{p \rightarrow 0}{\sim} n s p$. In the following, we take $s^\prime = n s p$. We recall that Accumulation matrices from \cite{chen2021accumulations} writes as $S = \sum_{i=1}^m S_{(i)}$, where the $S_{(i)}$s are sub-sampling matrices whose each row is multiplied by independent Rademacher variables. Hence, both $p$-sparsified and Accumulation sketches are interesting since it completely benefits from computational efficiency of sub-sampling matrices. See table \ref{table:sketch_comp_matrix} for complexity analysis of matrix multiplications.

Going into the complexity of the learning algorithms, the main difference between single and multiple output settings are the computation of feature maps, relying on the construction of $S K S^\top$ and the computation of the square root of its pseudo-inverse for the single output setting which is not present in the multiple output settings. We assume in our framework that $d$ and even $d^2$ are typically very small in comparison with $n$. Hence, we have that the complexity in the single output case is dominated by the complexity of the operation $S K S^\top$, whereas in the multiple output case it is dominated by the complexity of the operation $S K$. We see that from a time complexity perspective, $p$-sparsified sketches outperform Accumulation sketches in single output settings as long as $p \leq m / n \sqrt{s}$, and in multiple output settings as long as $p \leq m / n s$. From a space complexity perspective, Accumulation is always better as $n s p$ is typically greater than $s$, otherwise it shows poor performance. However, $p$ is usually chosen such that $n s p$ is not very large compared with $s$.

\renewcommand{\arraystretch}{1.3}
\begin{table}[h]
\caption{Complexity of matrix operations for each sketching type.}\vspace{0.1cm}
\begin{adjustbox}{center}
\begin{tabular}{ c c c c }
    \hline
    Sketching type & Complexity type & $S K$ & $S K S^\top$ \\
    \hline
    \multirow{2}{*}{Gaussian} & time & $\bmO\left(n^2 s\right)$ & $\bmO\left(n^2 s\right)$ \\
        & space & $\bmO\left(n^2\right)$ & $\bmO\left(n^2\right)$ \\
    \hline
        \multirow{2}{*}{$p$-sparsified} & time & $\bmO\left(n^2 s^2 p\right)$ & $\bmO\left(n^2 s^3 p^2\right)$ \\
        & space & $\bmO\left(n^2 s p\right)$ & $\bmO\left(n^2 s^2 p^2\right)$ \\
    \hline
    \multirow{2}{*}{Accumulation} & time & $\bmO\left(n s m\right)$ & $\bmO\left(s^2 m^2\right)$ \\
        & space & $\bmO\left(n s\right)$ & $\bmO\left(s^2\right)$ \\
    \hline
    \multirow{2}{*}{CountSketh} & time & $\bmO\left(n^2\right)$ & $\bmO\left(n^2\right)$ \\
        & space & $\bmO\left(n^2\right)$ & $\bmO\left(n^2\right)$ \\
    \hline
    \multirow{2}{*}{Sub-Sampling} & time & $\bmO\left(n s\right)$ & $\bmO\left(s^2\right)$ \\
        & space & $\bmO\left(n s\right)$ & $\bmO\left(s^2\right)$ \\
    \hline
\end{tabular}
\end{adjustbox}
\label{table:sketch_comp_matrix}
\end{table}

\iffalse
\begin{table}
\caption{Complexity of SGD iteration for each sketching type.}
\begin{adjustbox}{center}
\begin{tabular}{ |c|c|c|c| }
    \hline
    Sketching type & Complexity type & Single output & Multiple output \\ 
    \hline
    \multirow{2}{*}{Gaussian} & time & $\bmO\left(n^2 s\right)$ & $\bmO\left(n^2 s\right)$ \\
        & space & $\bmO\left(n^2\right)$ & $\bmO\left(n^2\right)$ \\
    \hline
        \multirow{2}{*}{$p$-sparsified} & time & $\bmO\left(n^2 s^3 p^2\right)$ & $\bmO\left(n^2 s^2 p\right)$ \\
        & space & $\bmO\left(n^2 s^2 p^2\right)$ & $\bmO\left(n^2 s p\right)$ \\
    \hline
    \multirow{2}{*}{Accumulation} & time & $\bmO\left(s^2 m^2\right)$ & $\bmO\left(n s m\right)$ \\
        & space & $\bmO\left(s^2\right)$ & $\bmO\left(n s\right)$ \\
    \hline
    \multirow{2}{*}{CountSketch} & time & $\bmO\left(n^2\right)$ & $\bmO\left(n^2\right)$ \\
        & space & $\bmO\left(n^2\right)$ & $\bmO\left(n^2\right)$ \\
    \hline
    \multirow{2}{*}{Sub-Sampling} & time & $\bmO\left(s^3\right)$ & $\bmO\left(n s\right)$ \\
        & space & $\bmO\left(s^2\right)$ & $\bmO\left(n s\right)$ \\
    \hline
\end{tabular}
\end{adjustbox}
\label{table:sketch_comp_sgd}
\end{table}
\fi

\section{Discussion with dual implementation}
\label{apx:dual}

In this section we detail the discussion about duality of kernel machines when using sketching.
%
%Many kernel problems enjoy a nicer expression when expressed in the dual \citep{cortes1995support,laforgue2020duality}.
%
%It thus feels natural to
%investigating the interplay between sketching and duality when focusing on the algorithmic challenges.
%
A first idea consists in computing the dual problem to the sketched problem~\eqref{eq:sketched_scalar_primal_pb}.
It writes
\begin{equation}\label{eq:dual_sketched_scalar_primal_pb}
\min _{\zeta \in \reals^n} ~ \sum_{i=1}^{n} \ell_{i}^{\star}\left(-\zeta_i\right) + \frac{1}{2 \lambda_n n} \zeta^\top K S^\top (S K S^\top)^{\dagger} S K \zeta\,,
\end{equation}
where $\ell_i = \ell(\cdot, y_i)$, and $f^\star$ denotes the Fenchel-Legendre transform of $f$, such that $f^\star(\theta) = \sup_x \langle \theta, x \rangle - f(x)$.
First note that sketching with a subsampling matrix in the primal is thus equivalent to using a Nystr\"{o}m approximation in the dual.
This remark generalizes for any loss function the observation made in \cite{Yang2017} for the kernel Ridge regression.
However, although the $\ell_i^\star$ might be easier to optimize, solving \eqref{eq:dual_sketched_scalar_primal_pb} seems not a meaningful option, as duality brought us back to optimizing over $\mathbb{R}^n$, what we initially intended to avoid.
The natural alternative thus appears to use duality first, and then sketching.
The resulting problem writes
\begin{equation}\label{eq:sketched_dual_pbm}
\min_{\theta \in \reals^{s}} ~ \sum_{i=1}^{n} \ell_{i}^{\star}\left(-[S^\top\theta]_i\right)+\frac{1}{2 \lambda_n n} \theta^\top SKS^\top \theta\,.
\end{equation}
It is interesting to note that \eqref{eq:sketched_dual_pbm} is also the sketched version of Problem \eqref{eq:dual_sketched_scalar_primal_pb}, which we recall is itself the dual to the sketched primal problem.
Hence, sketching in the dual can be seen as a double approximation.
As a consequence, the objective value reached by minimizing \eqref{eq:sketched_dual_pbm} is always larger than that achieved by minimizing \eqref{eq:sketched_scalar_primal_pb}, and theoretical guarantees for such an approach are likely to be harder to obtain.
Another limitation of \eqref{eq:sketched_dual_pbm} regards the $\ell_{i}^{\star}(-[S^\top\theta]_i)$.
Indeed, these terms generally contain the non-differentiable part of the objective function (for the $\epsilon$-insensitive Ridge regression we have $\sum_i \ell_i^\star(\theta_i) = \frac{1}{2}\|\theta\|_2^2 + \langle \theta, y\rangle + \epsilon \|\theta\|_1$ for instance), and are usually minimized by proximal gradient descent.
However, using a similar approach for \eqref{eq:sketched_dual_pbm} is impossible, since the proximal operator of $\ell_i^\star(S^\top \cdot)$ is only computable if $S^\top S = I_n$, which is never the case.
Instead, one may use a primal-dual algorithm \citep{chambolle2018stochastic, vu2011splitting, condat2013primal}, which solves the saddle-point optimization problem of the Lagrangian, but maintain a dual variable in $\mathbb{R}^n$.
Coordinate descent versions of such algorithms \citep{Fercoq_2019, alacaoglu2020random} may also be considered, as they leverage the possible sparsity of $S$ to reduce the per-iteration cost.
In order to converge, these algorithms however require a number of iteration that is of the order of $n$, making them hardly relevant in the large scale setting we consider.

For all the reasons listed above, we thus believe that minimizing \eqref{eq:dual_sketched_scalar_primal_pb} or \eqref{eq:sketched_dual_pbm} is not theoretically relevant nor computationally attractive, and that running stochastic (sub-)gradient descent on the primal problem, as detailed at the beginning of the section, is the best way to proceed algorithmically despite the possibly more elegant dual formulations.
Finally, we highlight that although the condition $S^\top S = I_n$ is almost surely not verified (we have $S \in \mathbb{R}^{s \times n}$ with $s < n$), we still have $\mathbb{E}[S^\top S] = I_n$ for most sketching matrices.
An interesting research direction could thus consist in running a proximal gradient descent assuming that $S^\top S = I_n$, and controlling the error incurred by such an approximation.

\section{Examples of Lipschitz-continuous losses}
\label{apx:ex-losses}

\paragraph{Robust losses for multiple output regression:}

For $p \geq 1$, $\bmY \subset \reals^p$, and for all $y, y^\prime \in \bmY$, $\ell(y, y^\prime) = g(y - y^\prime)$, where $g$ is:

\textbf{For $\kappa$-Huber}:
For $\kappa > 0$:
\begin{equation*}
\forall y \in \bmY, g(y)=\left\{\begin{array}{cl}
\frac{1}{2} \|y\|_\bmY^2 & \text { if } \|y\|_\bmY \leq \kappa \\
\kappa\left(\|y\|_\bmY - \frac{\kappa}{2}\right) & \text { otherwise }
\end{array}\right. .
\end{equation*}

\textbf{For $\epsilon$-SVR} (i.e. $\epsilon$-insensitive $\ell_1$ loss):
For $\epsilon > 0$:
\begin{equation*}
\forall y \in \bmY, g(y)=\left\{\begin{array}{cl}
\|y\|_\bmY - \epsilon & \text { if } \|y\|_\bmY \geq \epsilon \\
0 & \text { otherwise }
\end{array}\right. .
\end{equation*}

\paragraph{The pinball loss \citep{koenker2005quantile} for joint quantile regression:}

For $d$ quantile levels, $\tau_1 < \tau_2 < \ldots < \tau_d$ with $\tau_i \in (0,1)$, we define: 
\begin{align*}
    \ell_{\tau}(f(x),y) = L_{\tau}(f(x) - y \mathds{1}_{d}),
\end{align*} 
with the following definition for $L_{\tau}$ the extension of pinball loss to $\reals^d$ \citep{sangnier2016}:\\
		For $r \in \reals^d$:
		\begin{align*}
			L_{\tau}(r) =
			\sum_{j=1}^d \left\{ \begin{array}{ll}
			\tau_j r_j & \text{if } r_j \ge 0, \\
			(\tau_j-1) r_j & \text{if } r_j < 0.
	    \end{array} \right.
		\end{align*}

	%	\begin{flushright}
	%		\includegraphics[height=2cm]{pinball_loss.png}
	%	\end{flushright}

\section{Additional experiments}

In this section, we bring some additional experiments and details.

\subsection{Simulated dataset for single output regression}
\label{apx:sec-results-toy}

First, we report the plots obtained with $\kappa$-Huber for $p$-SG sketches (see \cref{fig:khub_toy_pSG}) and note that we observe a behaviour similar to $p$-SR sketches when varying $p$ and in comparison to other types of sketching and RFFs. However, we see that the MSE obtained is slightly worse than $p$-SR sketches. An explanation might be that, in a very sparse regime, i.e. very low $p$, a $p$-SG sketch is too different than a Gaussian sketch, making it lose some good statistical properties of Gaussian sketches. We however observe that the larger $p$ is, the smaller is the statistical performance between $p$-SR and $p$-SG sketches.

%\begin{figure*}[t!]
%\centering
%
%\subfigure[Test relative Mean Squared Error w.r.t. sketching size $s$ for the $\epsilon$-SVR on toy data set.]{\label{fig:esvr_mse_igB1}\includegraphics[width=0.4\textwidth]{Figures/e-SVR/toy1per_esvr_test_mse_compare_igB1.pdf}}
%\qquad
%\subfigure[Training times (in seconds) w.r.t. sketching size $s$ for the $\epsilon$-SVR on toy data set.]{\label{fig:esvr_times_igB1}\includegraphics[width=0.4\textwidth]{Figures/e-SVR/toy1per_esvr_times_compare_igB1.pdf}}
%\subfigure[Test MSE w.r.t. training times for the $\epsilon$-SVR on toy data set.]{\label{fig:esvr_scatter_igb1}\includegraphics[width=0.4\textwidth]{Figures/e-SVR/toy1per_esvr_times_mse_scatter_compare_igB1.pdf}}
%
%\end{figure*}

\begin{figure}[!t]
\centering
\subfigure[Test relative MSE w.r.t. $s$ with $\kappa$-Huber.]
%for the $\kappa$-Huber on  a toy dataset.]
{\label{fig:khub_mse_pSG}\includegraphics[width=0.45\columnwidth]{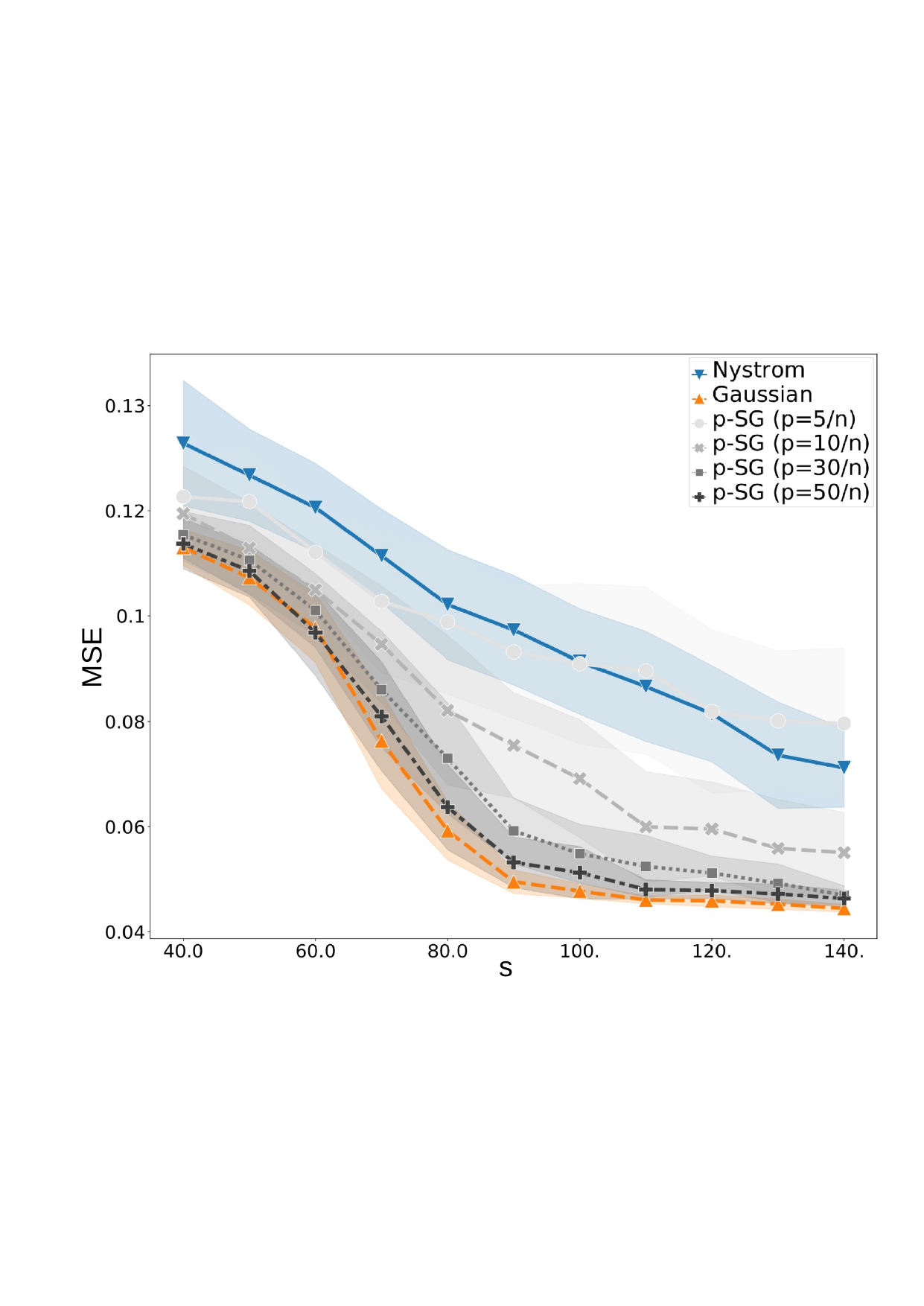}}
\qquad
\subfigure[Training time (seconds) w.r.t. $s$ with $\kappa$-Huber.]
% for the $\kappa$-Huber on toy dataset.]
{\label{fig:khub_times_pSG}\includegraphics[width=0.45\columnwidth]{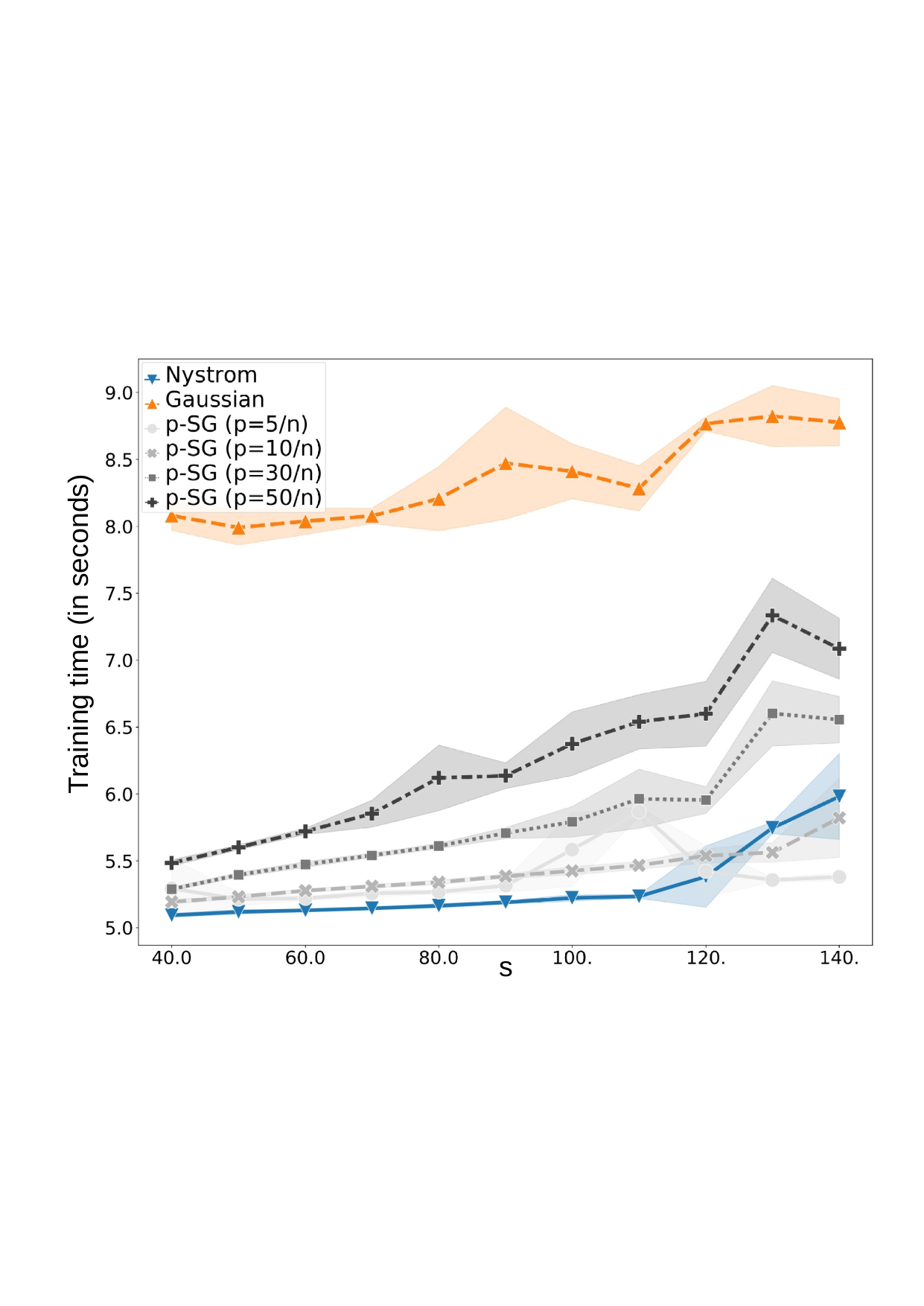}}\\
\subfigure[Test relative MSE w.r.t. training times with $\kappa$-Huber.]
% for the $\kappa$-Huber on toy data set.]
{\label{fig:khub_scatter_pSG}\includegraphics[scale=0.6]{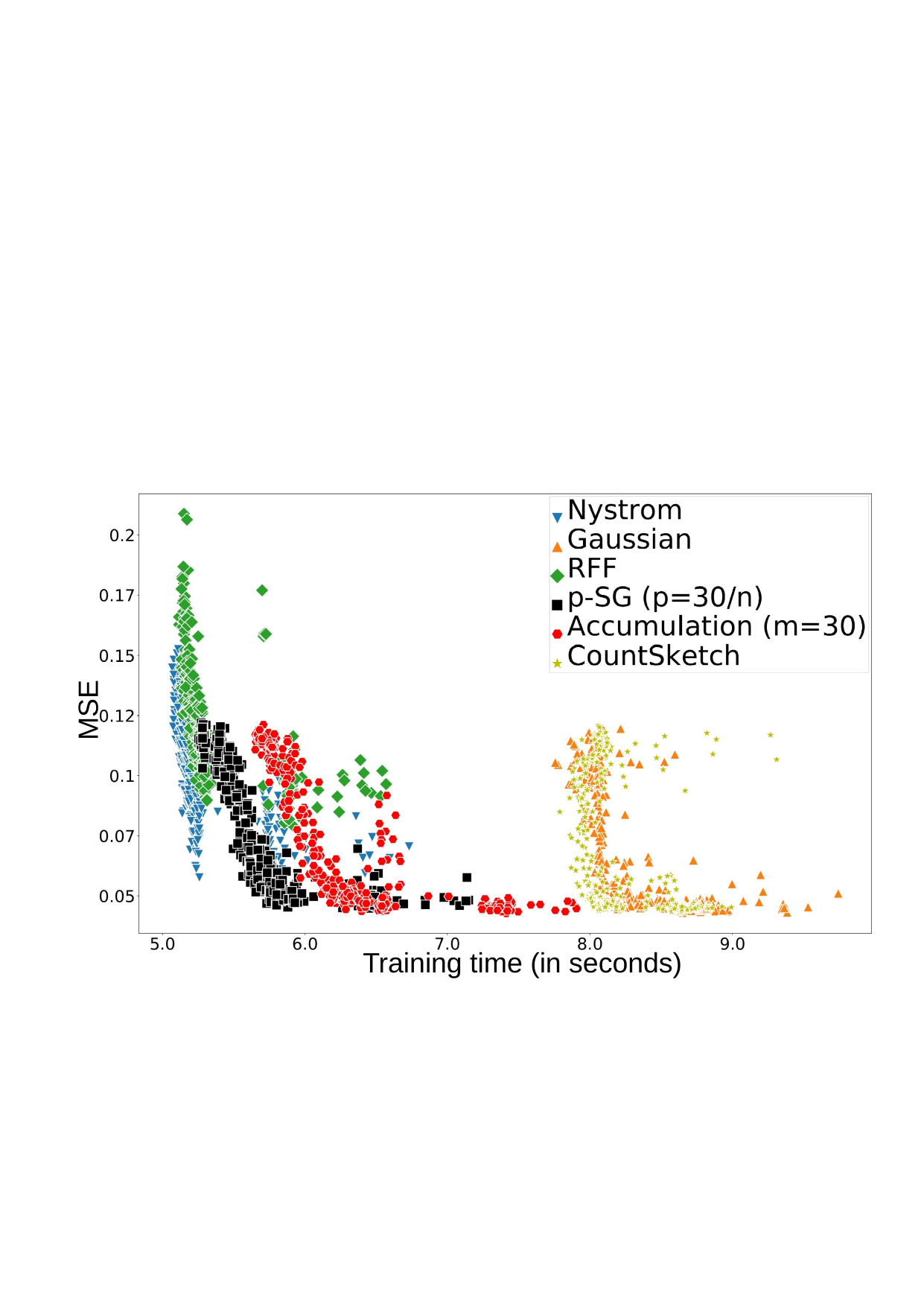}}
\caption{Trade-off between Accuracy and Efficiency for $p$-SG sketches with $\kappa$-Huber loss on synthetic dataset.}
\centering
\label{fig:khub_toy_pSG}
\end{figure}

We then report in the following the corresponding plots obtained with $\epsilon$-SVR, that witnesses the same phenomenon observed earlier with $\kappa$-Huber about the interpolation between Nystr\"{o}m method and Gaussian sketching while varying the probability of being different than 0 $p$, with $p$-SR sketches (see \cref{fig:esvr_toy_pSR}) and $p$-SG sketches (see \cref{fig:esvr_toy_pSG}).

\begin{figure}[!t]
\centering
\subfigure[Test relative MSE w.r.t. $s$ with $\epsilon$-SVR.]
%for the $\kappa$-Huber on  a toy dataset.]
{\label{fig:esvr_mse_pSR}\includegraphics[width=0.45\columnwidth]{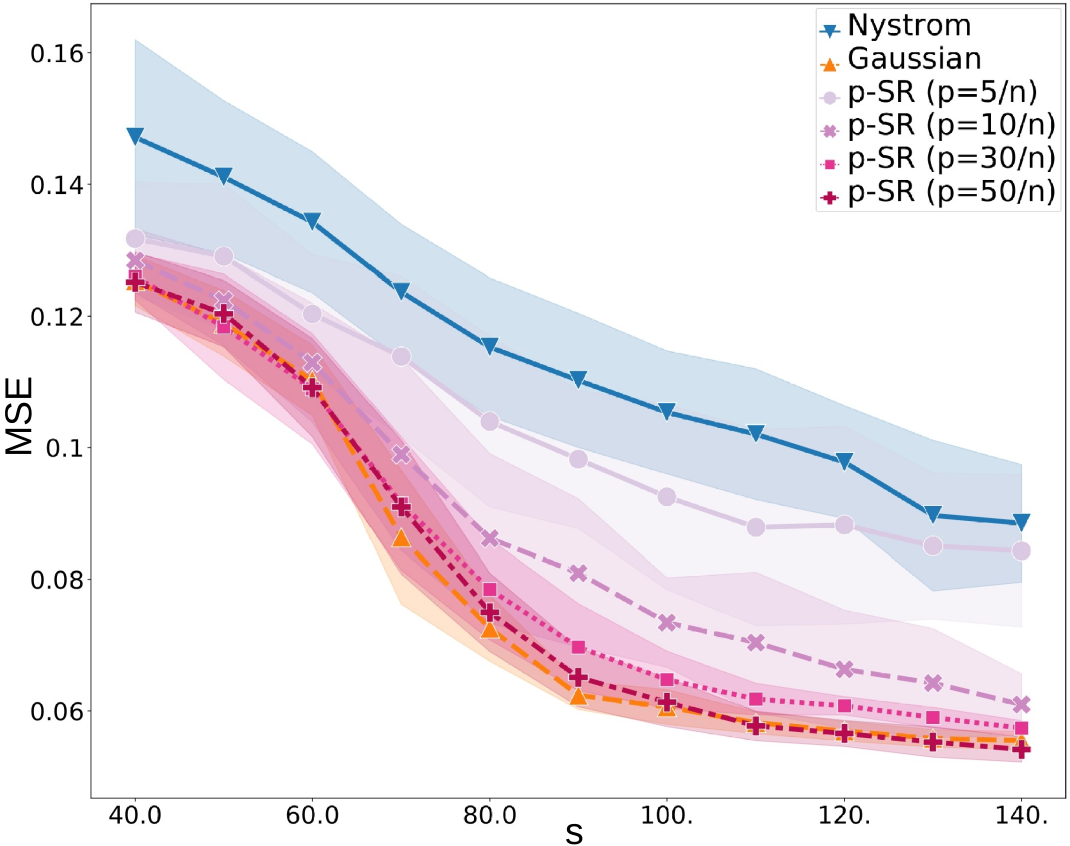}}
\qquad
\subfigure[Training time (seconds) w.r.t. $s$ with $\epsilon$-SVR.]
% for the $\kappa$-Huber on toy dataset.]
{\label{fig:esvr_times_pSR}\includegraphics[width=0.45\columnwidth]{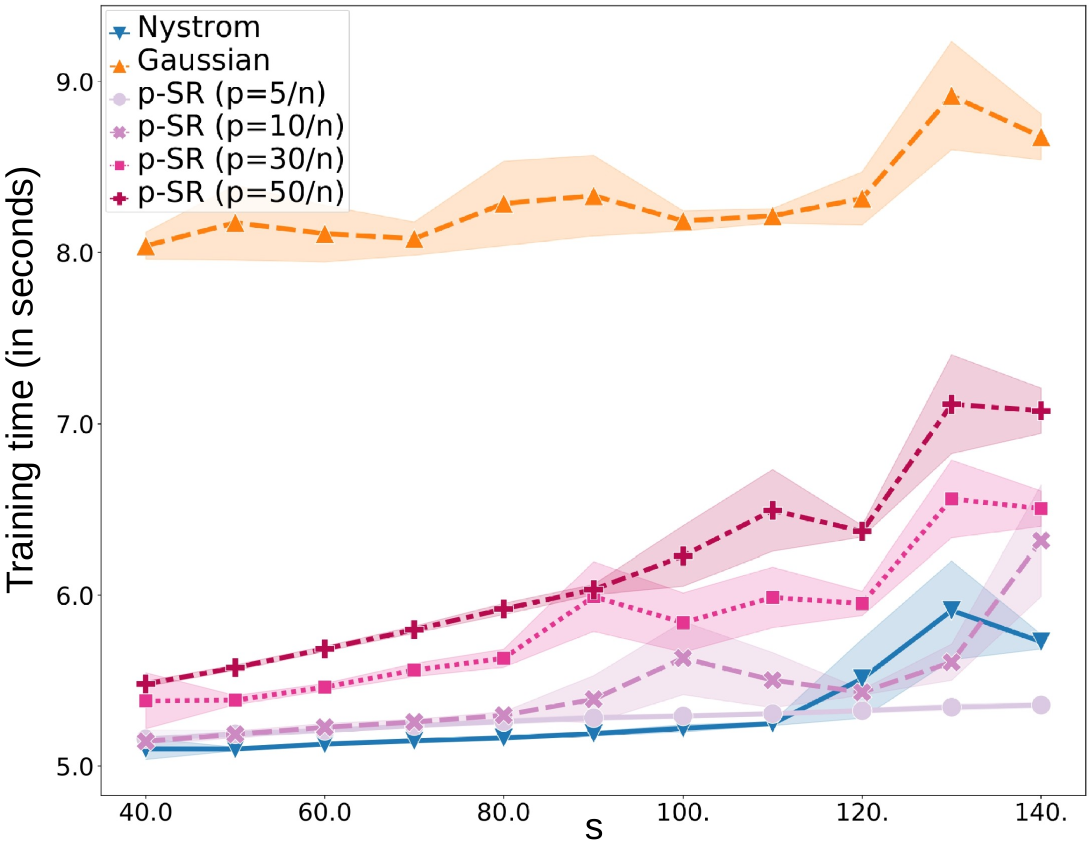}}\\
\subfigure[Test relative MSE w.r.t. training times with $\epsilon$-SVR.]
% for the $\kappa$-Huber on toy data set.]
{\label{fig:esvr_scatter_pSR}\includegraphics[scale=0.6]{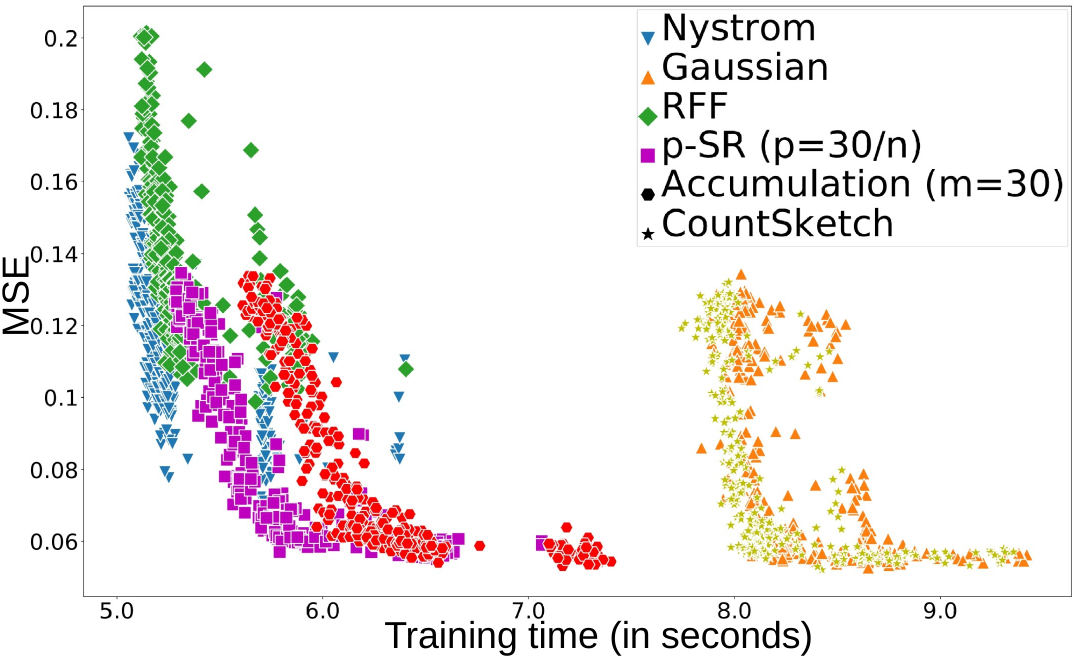}}
\caption{Trade-off between Accuracy and Efficiency for $p$-SR sketches with $\epsilon$-SVR on synthetic dataset.}
\label{fig:esvr_toy_pSR}
\end{figure}

\begin{figure}[!t]
\centering
\subfigure[Test relative MSE w.r.t. $s$ with $\epsilon$-SVR.]
%for the $\kappa$-Huber on  a toy dataset.]
{\label{fig:esvr_mse_pSG}\includegraphics[width=0.45\columnwidth]{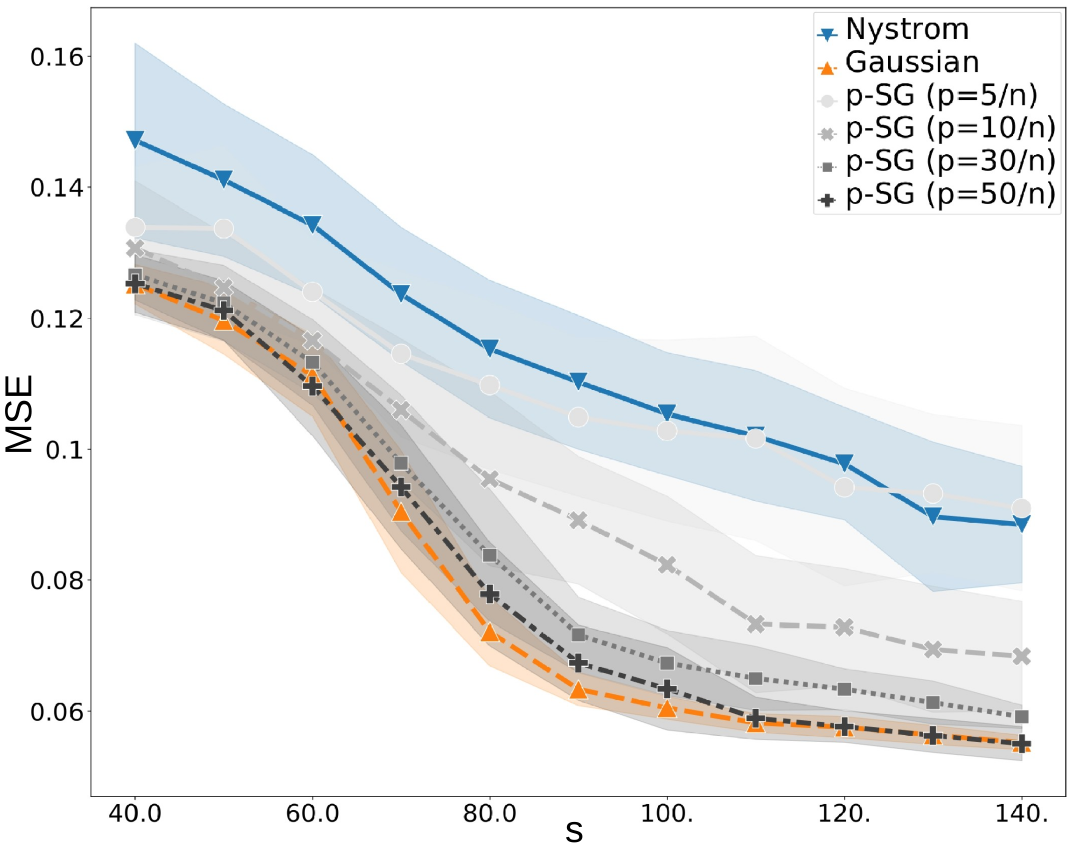}}
\qquad
\subfigure[Training time (seconds) w.r.t. $s$ with $\epsilon$-SVR.]
% for the $\kappa$-Huber on toy dataset.]
{\label{fig:esvr_times_pSG}\includegraphics[width=0.45\columnwidth]{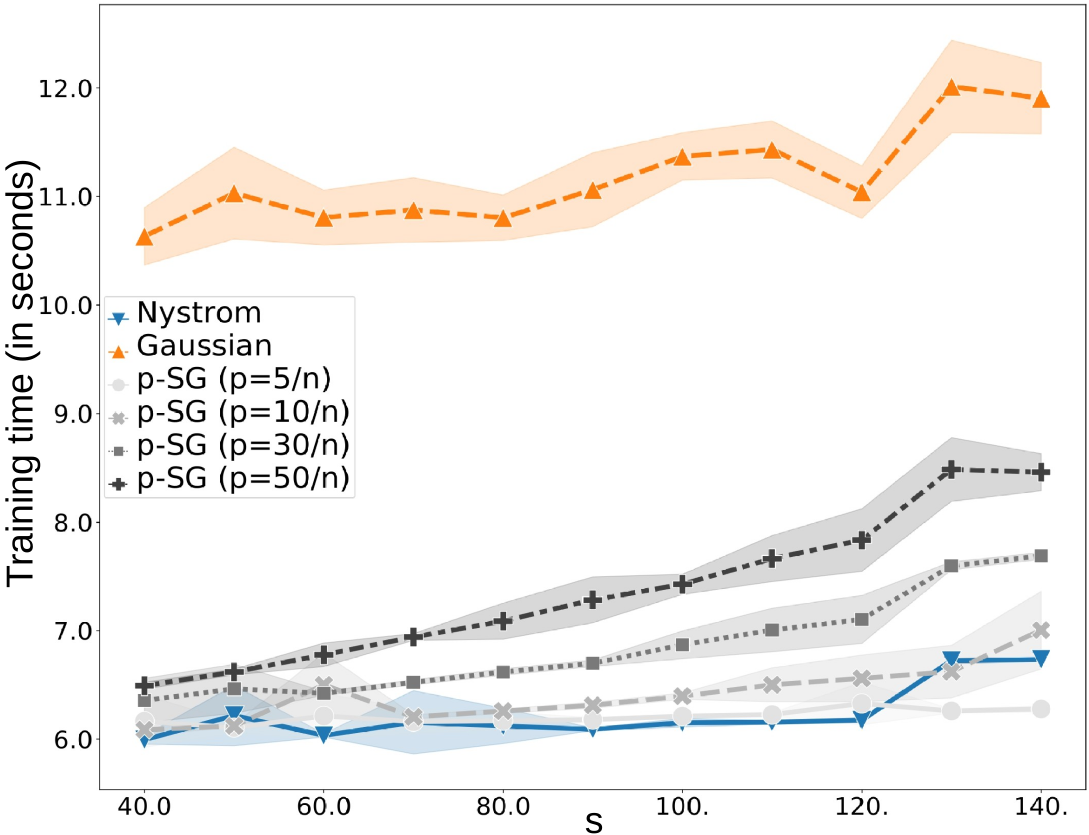}}\\
\subfigure[Test relative MSE w.r.t. training times with $\epsilon$-SVR.]
% for the $\kappa$-Huber on toy data set.]
{\label{fig:esvr_scatter_pSG}\includegraphics[scale=0.6]{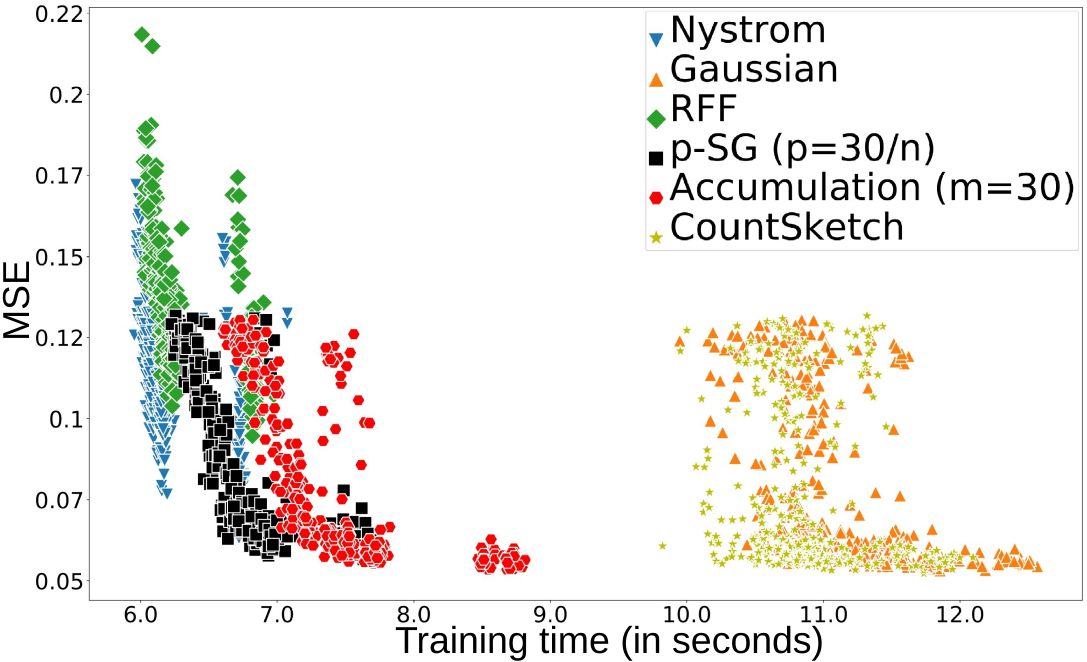}}
\caption{Trade-off between Accuracy and Efficiency for $p$-SG sketches with $\epsilon$-SVR on synthetic dataset.}
\label{fig:esvr_toy_pSG}
\end{figure}

\subsection{Multi-Output Regression on real datasets}
\label{apx:sec-results-mtr}

\renewcommand{\arraystretch}{1.2}
\begin{table}[b]
\caption{Numbers of training samples ($n_{tr}$), test samples ($n_{te}$), features ($q$) and targets ($d$).}\vspace{0.1cm}
\begin{adjustbox}{center}
\begin{small}
\begin{tabular}{ c c c c c } 
 \hline
 Dataset & $n_{tr}$ & $n_{te}$ & $q$ & $d$ \\ 
 \hline
 Boston & 354 & 152 & 13 & 5 \\
 
 otoliths & 3780 & 165 & 4096 & 5 \\
 \hline
 rf1 & 4108 & 5017 & 64 & 8 \\

 rf2 & 4108 & 5017 & 576 & 8 \\
 
 scm1d & 8145 & 1658 & 280 & 16 \\
 
 scm20d & 7463 & 1503 & 61 & 16 \\
 \hline
\end{tabular}
\end{small}
\end{adjustbox}
\label{table:real_data_desc}
\end{table}

We here first a brief presentation on River Flow and Supply Chain Management:
\begin{enumerate}
    \item River Flow datasets aim at predicting the river network flows for 48 hours in the future at specific locations. These locations are 8 sites in the Mississippi River network in the United States and were obtained from the US National Weather Service. Dataset rf2 extends rf1 since it contains additional precipitation forecast information for each of the 8 sites.
    \item The datasets scm1d and sm20d come from the Trading Agent Competition in Supply Chain Management (TAC SCM) tournament from 2010. More details about data preprocessing can be found in \cite{groves2015optimizing}. The dataset contains prices of products at specific days, and the task is to predict to the next day mean price (scm1d) or mean price for 20-days in the future (scm20d) for each product in the simulation.
\end{enumerate}

To conduct these experiments, the train-test splits used are the ones available at \url{http://mulan.sourceforge.net/datasets-mtr.html}. Besides, we used multi-output Kernel Ridge Regression framework and an input Gaussian kernel and an operator $M = I_d$. We selected regularisation parameter $\lambda_n$ and bandwidth of kernel $\sigma^2$ via a 5-fold cross-validation. Results are averages over 30 replicates for sketched models.

We compare our non-sketched framework with the sketched one, and we furthermore compare our $p$-sparsified sketches with Accumulation sketch from \cite{chen2021accumulations} and CountSketch from \cite{Clarkson2017}. Moreover, we report the range of results obtained by SOTA methods available at \cite{spyromitros2016multi}. All results in terms of Test RRMSE are reported in Table \ref{table:fmtr_full_res}, we see that our $p$-sparsified sketches allow to ally statistical and computational performance, since we maintain an accuracy of the same order as without sketching, and these sketches outperform Accumulation in terms of training times (see \cref{table:mtr_results}). In comparison to SOTA, our framework does not compete with the best results obtained in \cite{spyromitros2016multi}, but almost always remains within the range of results obtained with SOTA methods.

\renewcommand{\arraystretch}{1}
\begin{table}
\begin{small}
\caption{Test RRMSE and ARRMSE for different methods on the MTR datasets. For decomposable kernel-based models, loss here is square loss and $s = 100$ when performing Sketching.}\vspace{0.1cm}
\begin{adjustbox}{center}
\begin{tabular}{ |c|c|c|c|c|c|c||c| }
    \hline
    Dataset & Targets & w/o Sketch & $20/n_{tr}$-SR & $20/n_{tr}$-SG & Acc. $m=20$ & CountSketch & SOTA \\ 
    \hline
    \multirow{9}{*}{rf1} & Mean & \textbf{0.575} & 0.584 $\pm$ 0.003 & 0.583 $\pm$ 0.003 & 0.592 $\pm$ 0.001 & \textbf{0.575} $\pm$ \textbf{0.0005} & [0.091, 0.983] \\
        & chsi2 & 0.351 & 0.356 $\pm$ 0.005 & 0.357 $\pm$ 0.004 & 0.361 $\pm$ 0.002 & \textbf{0.350} $\pm$ \textbf{0.002} & [0.033, 0.797] \\
        & nasi2 & 1.085 & 1.124 $\pm$ 0.003 & 1.124 $\pm$ 0.003 & \textbf{1.082} $\pm$ \textbf{0.0004} & 1.110 $\pm$ 0.0003 & [0.376, 1.951] \\
        & eadm7 & 0.388 & 0.397 $\pm$ 0.004 & 0.398 $\pm$ 0.003 & \textbf{0.387} $\pm$ \textbf{0.001} & \textbf{0.387} $\pm$ \textbf{0.001} & [0.039,  1.019] \\
        & sclm7 & 0.660 & 0.659 $\pm$ 0.008 & 0.661 $\pm$ 0.005 & 0.681 $\pm$ 0.002 & \textbf{0.648} $\pm$ \textbf{0.002} & [0.047, 1.503] \\
        & clkm7 & 0.283 & \textbf{0.281} $\pm$ \textbf{0.001} & 0.282 $\pm$ 0.001 & 0.293 $\pm$ 0.0005 & \textbf{0.281} $\pm$ \textbf{0.0004} & [0.031, 0.587] \\
        & vali2 & 0.614 & 0.633 $\pm$ 0.008 & 0.635 $\pm$ 0.010 & 0.656 $\pm$ 0.003 & \textbf{0.611} $\pm$ \textbf{0.003} & [0.037, 0.571] \\
        & napm7 & \textbf{0.593} & 0.628 $\pm$ 0.020 & 0.614 $\pm$ 0.016 & 0.627 $\pm$ 0.003 & 0.601 $\pm$ 0.003 & [0.038, 0.909] \\
        & dldi4 & 0.629 & \textbf{0.597} $\pm$ \textbf{0.004} & \textbf{0.597} $\pm$ \textbf{0.003} & 0.646 $\pm$ 0.001 & 0.614 $\pm$ 0.002 & [0.029, 0.534] \\
    \hline
    \multirow{9}{*}{rf2} & Mean & \textbf{0.578} & 0.671 $\pm$ 0.009 & 0.656 $\pm$ 0.006 & 0.796 $\pm$ 0.006 & 0.715 $\pm$ 0.011 & [0.095, 1.103] \\
        & chsi2 & \textbf{0.318} & 0.382 $\pm$ 0.016 & 0.358 $\pm$ 0.010 & 0.478 $\pm$ 0.006 & 0.426 $\pm$ 0.013 & [0.034, 0.737] \\
        & nasi2 & 1.099 & 1.084 $\pm$ 0.005 & 1.092 $\pm$ 0.006 & \textbf{1.018} $\pm$ \textbf{0.003} & 1.036 $\pm$ 0.002 & [0.384, 3.143] \\
        & eadm7 & \textbf{0.342} & 0.390 $\pm$ 0.013 & 0.369 $\pm$ 0.007 & 0.456 $\pm$ 0.004 & 0.417 $\pm$ 0.010 & [0.040, 0.737] \\
        & sclm7 & \textbf{0.610} & 0.719 $\pm$ 0.030 & 0.672 $\pm$ 0.021 & 0.948 $\pm$ 0.014 & 0.852 $\pm$ 0.030 & [0.049, 0.970] \\
        & clkm7 & \textbf{0.311} & 0.328 $\pm$ 0.009 & 0.330 $\pm$ 0.009 & 0.614 $\pm$ 0.005 & 0.436 $\pm$ 0.006 & [0.041, 0.891] \\
        & vali2 & \textbf{0.712} & 0.960 $\pm$ 0.044 & 0.894 $\pm$ 0.043 & 0.890 $\pm$ 0.017 & 0.939 $\pm$ 0.028 & [0.047, 0.956] \\
        & napm7 & \textbf{0.589} & 0.812 $\pm$ 0.014 & 0.831 $\pm$ 0.017 & 1.110 $\pm$ 0.023 & 0.856 $\pm$ 0.032 & [0.039, 0.617] \\
        & dldi4 & \textbf{0.646} & 0.696 $\pm$ 0.010 & 0.701 $\pm$ 0.011 & 0.855 $\pm$ 0.004 & 0.761 $\pm$ 0.007 & [0.032, 0.770] \\
    \hline
    \multirow{17}{*}{scm1d} & Mean & \textbf{0.418} & 0.422 $\pm$ 0.002 & 0.423 $\pm$ 0.001 & 0.423 $\pm$ 0.001 & 0.420 $\pm$ 0.001 & [0.330, 0.457] \\
        & lbl & \textbf{0.358} & 0.365 $\pm$ 0.003 & 0.364 $\pm$ 0.002 & 0.367 $\pm$ 0.001 & 0.363 $\pm$ 0.001 & [0.294, 0.409] \\
        & mtlp2 & \textbf{0.352} & 0.360 $\pm$ 0.003 & 0.362 $\pm$ 0.003 & 0.362 $\pm$ 0.001 & 0.358 $\pm$ 0.001 & [0.308, 0.436] \\
        & mtlp3 & \textbf{0.409} & 0.419 $\pm$ 0.003 & 0.416 $\pm$ 0.002 & 0.417 $\pm$ 0.001 & 0.416 $\pm$ 0.002 & [0.315, 0.442] \\
        & mtlp4 & \textbf{0.417} & 0.427 $\pm$ 0.002 & 0.426 $\pm$ 0.003 & 0.426 $\pm$ 0.001 & 0.423 $\pm$ 0.002 & [0.325, 0.461] \\
        & mtlp5 & 0.495 & \textbf{0.491} $\pm$ \textbf{0.006} & 0.492 $\pm$ 0.006 & 0.502 $\pm$ 0.002 & 0.492 $\pm$ 0.003 & [0.349, 0.530] \\
        & mtlp6 & 0.534 & \textbf{0.524} $\pm$ \textbf{0.008} & 0.527 $\pm$ 0.006 & 0.537 $\pm$ 0.002 & 0.527 $\pm$ 0.002 & [0.347, 0.540] \\
        & mtlp7 & 0.531 & \textbf{0.519} $\pm$ \textbf{0.008} & 0.523 $\pm$ 0.006 & 0.534 $\pm$ 0.002 & 0.523 $\pm$ 0.003 & [0.338, 0.526] \\
        & mtlp8 & 0.542 & \textbf{0.536} $\pm$ \textbf{0.010} & 0.540 $\pm$ 0.008 & 0.547 $\pm$ 0.002 & 0.537 $\pm$ 0.003 & [0.345, 0.504] \\
        & mtlp9 & \textbf{0.385} & 0.395 $\pm$ 0.003 & 0.395 $\pm$ 0.002 & 0.390 $\pm$ 0.001 & 0.390 $\pm$ 0.002 & [0.323, 0.456] \\
        & mtlp10 & \textbf{0.389} & 0.398 $\pm$ 0.003 & 0.397 $\pm$ 0.003 & 0.394 $\pm$ 0.002 & 0.394 $\pm$ 0.001 & [0.339, 0.456] \\
        & mtlp11 & \textbf{0.424} & 0.430 $\pm$ 0.003 & 0.429 $\pm$ 0.003 & 0.426 $\pm$ 0.001 & 0.426 $\pm$ 0.001 & [0.327, 0.445] \\
        & mtlp12 & \textbf{0.420} & 0.422 $\pm$ 0.003 & 0.421 $\pm$ 0.004 & 0.423 $\pm$ 0.001 & 0.421 $\pm$ 0.002 & [0.350, 0.466] \\
        & mtlp13 & \textbf{0.349} & 0.358 $\pm$ 0.004 & 0.354 $\pm$ 0.004 & 0.351 $\pm$ 0.001 & 0.351 $\pm$ 0.001 & [0.322, 0.419] \\
        & mtlp14 & \textbf{0.347} & 0.364 $\pm$ 0.004 & 0.363 $\pm$ 0.003 & 0.350 $\pm$ 0.001 & 0.355 $\pm$ 0.002 & [0.356, 0.472] \\
        & mtlp15 & \textbf{0.361} & 0.371 $\pm$ 0.004 & 0.370 $\pm$ 0.003 & 0.363 $\pm$ 0.001 & 0.364 $\pm$ 0.001 & [0.314, 0.406] \\
        & mtlp16 & \textbf{0.376} & 0.382 $\pm$ 0.003 & 0.384 $\pm$ 0.003 & \textbf{0.376} $\pm$ \textbf{0.001} & 0.378 $\pm$ 0.001 & [0.322, 0.407] \\
    \hline
    \multirow{17}{*}{scm20d} & Mean & 0.755 & 0.754 $\pm$ 0.003 & 0.754 $\pm$ 0.003 & \textbf{0.753} $\pm$ \textbf{0.001} & 0.754 $\pm$ 0.002 & [0.394, 0.763] \\
        & lbl & \textbf{0.613} & 0.618 $\pm$ 0.002 & 0.618 $\pm$ 0.002 & 0.614 $\pm$ 0.001 & \textbf{0.613} $\pm$ \textbf{0.001} & [0.356, 0.678] \\
        & mtlp2a & \textbf{0.628} & 0.635 $\pm$ 0.002 & 0.634 $\pm$ 0.003 & 0.632 $\pm$ 0.001 & 0.631 $\pm$ 0.002 & [0.352, 0.688] \\
        & mtlp3a & \textbf{0.603} & 0.608 $\pm$ 0.002 & 0.608 $\pm$ 0.003 & 0.607 $\pm$ 0.001 & 0.605 $\pm$ 0.002 & [0.363, 0.683] \\
        & mtlp4a & \textbf{0.635} & 0.645 $\pm$ 0.002 & 0.645 $\pm$ 0.003 & 0.644 $\pm$ 0.001 & 0.638 $\pm$ 0.002 & [0.374, 0.730] \\
        & mtlp5a & \textbf{0.974} & 0.977 $\pm$ 0.008 & 0.977 $\pm$ 0.007 & 0.978 $\pm$ 0.003 & 0.975 $\pm$ 0.006 & [0.413, 0.846] \\
        & mtlp6a & \textbf{0.981} & 0.986 $\pm$ 0.009 & 0.992 $\pm$ 0.008 & 1.002 $\pm$ 0.004 & 0.989 $\pm$ 0.008 & [0.424, 0.843] \\
        & mtlp7a & \textbf{0.996} & 1.001 $\pm$ 0.008 & 1.004 $\pm$ 0.007 & 1.005 $\pm$ 0.006 & 1.000 $\pm$ 0.009 & [0.404,  0.833] \\
        & mtlp8a & 0.995 & 0.997 $\pm$ 0.010 & 0.997 $\pm$ 0.011 & 1.008 $\pm$ 0.005 & \textbf{0.994} $\pm$ \textbf{0.005} & [0.407, 0.851] \\
        & mtlp9a & 0.708 & 0.704 $\pm$ 0.003 & 0.702 $\pm$ 0.003 & \textbf{0.698} $\pm$ \textbf{0.001} & 0.705 $\pm$ 0.002 & [0.382, 0.737] \\
        & mtlp10a & 0.718 & 0.722 $\pm$ 0.004 & 0.722 $\pm$ 0.004 & \textbf{0.716} $\pm$ \textbf{0.001} & 0.723 $\pm$ 0.003 & [0.413, 0.753] \\
        & mtlp11a & 0.729 & 0.730 $\pm$ 0.003 & 0.729 $\pm$ 0.003 & \textbf{0.725} $\pm$ \textbf{0.001} & 0.728 $\pm$ 0.002 & [0.402, 0.769] \\
        & mtlp12a & 0.720 & 0.718 $\pm$ 0.004 & 0.717 $\pm$ 0.004 & \textbf{0.712} $\pm$ \textbf{0.002} & 0.716 $\pm$ 0.003 & [0.429, 0.787] \\
        & mtlp13a & 0.711 & 0.703 $\pm$ 0.005 & 0.699 $\pm$ 0.004 & \textbf{0.697} $\pm$ \textbf{0.001} & 0.705 $\pm$ 0.003 & [0.400, 0.751] \\
        & mtlp14a & 0.683 & 0.673 $\pm$ 0.004 & 0.670 $\pm$ 0.003 & \textbf{0.668} $\pm$ \textbf{0.001} & 0.675 $\pm$ 0.002 & [0.411, 0.779] \\
        & mtlp15a & 0.684 & 0.674 $\pm$ 0.004 & 0.671 $\pm$ 0.004 & \textbf{0.666} $\pm$ \textbf{0.001} & 0.678 $\pm$ 0.002 & [0.384, 0.727] \\
        & mtlp16a & 0.689 & 0.677 $\pm$ 0.005 & 0.676 $\pm$ 0.005 & \textbf{0.672} $\pm$ \textbf{0.001} & 0.682 $\pm$ 0.003 & [0.386, 0.754] \\
    \hline
\end{tabular}
\end{adjustbox}
\label{table:fmtr_full_res}
\end{small}
\end{table}

\end{document}